\titleformat{\subsubsection}[runin]
{\normalfont\normalsize\bfseries\filcenter}{\thesubsubsection.}{1 ex}{}
\declaretheorem[name=Lemma]{Lem}
\declaretheorem[sibling=Lem,name=Theorem]{Thm}
\declaretheorem[sibling=Lem,name=Proposition]{Prop}
\declaretheorem[sibling=Lem,name=Corollary]{Cor}
\declaretheorem[sibling=Thm,name=Conjecture]{Conj}
\declaretheorem[style=definition,sibling=Lem,name=Remark]{Rem}
\declaretheorem[style=definition,sibling=Lem,name=Definition]{Def}
\newcommand{\R}{\ensuremath{\mathbb{R}}}		% 
\newcommand{\C}{\ensuremath{\mathbb{C}}}		% 
\newcommand{\N}{\ensuremath{\mathbb{N}}}		% 
\newcommand{\Z}{\ensuremath{\mathbb{Z}}}        % 
\newcommand{\K}{\ensuremath{\mathbb{K}}}
\newcommand{\card}[1]{\left|{#1}\right|}
\newcommand{\rk}{\operatorname{rank}}
\newcommand{\close}{\operatorname{cl}}
\newcommand{\uclose}{\operatorname{ucl}}
\newcommand{\core}{\operatorname{core}}
\newcommand{\vect}{\operatorname{vec}}
\newcommand{\rowspan}{\operatorname{rowspan}}
\newcommand{\calC}{\mathcal{C}}
\newcommand{\calE}{\mathcal{E}}
\newcommand{\calM}{\mathcal{M}}
\newcommand{\calS}{\mathcal{S}}
\newcommand{\calX}{\mathcal{X}}
\newcommand{\calY}{\mathcal{Y}}
\newcommand{\calZ}{\mathcal{Z}}
\newcommand{\mA}{\mathbf{A}}
\newcommand{\mI}{\mathbf{I}}
\newcommand{\mJ}{\mathbf{J}}
\newcommand{\mM}{\mathbf{M}}
\newcommand{\mP}{\mathbf{P}}
\newcommand{\mS}{\mathbf{S}}
\newcommand{\mU}{\mathbf{U}}
\newcommand{\mV}{\mathbf{V}}
\newcommand{\mX}{\mathbf{X}}
\newcommand{\vu}{\mathbf{u}}
\newcommand{\vv}{\mathbf{v}}
\newcommand{\ve}{\mathbf{e}}
\DeclareMathOperator*{\Id}{I}
\DeclareMathOperator*{\Van}{V}
\providecommand*{\diff}%
{\@ifnextchar^{\DIfF}{\DIfF^{}}}
\def\DIfF^#1{%
\mathop{\mathrm{\mathstrut d}}%
\nolimits^{#1}\gobblespace
}
\def\gobblespace{%
\futurelet\diffarg\opspace}
\def\opspace{%
\let\DiffSpace\!%
\ifx\diffarg(%
\let\DiffSpace\relax
\else
\ifx\diffarg\[%
\let\DiffSpace\relax
\else
\ifx\diffarg\{%
\let\DiffSpace\relax
\fi\fi\fi\DiffSpace}
\begin{document}

\title{The Algebraic Combinatorial Approach for Low-Rank Matrix Completion}
\date{}
\author{Franz J.~Kir\'aly\thanks{Department of Statistical Science, Department of Statistical Science,
London WC1E 6BT, United Kingdom. \url{f.kiraly@ucl.ac.uk}}
\and Louis Theran \thanks{Aalto Science Institute and Department of Information
and Computer Science, Aalto University. \url{theran@math.fu-berlin.de}
(This research was supported by the European Research Council under the
European Union's Seventh Framework Programme (FP7/2007-2013) /
ERC grant agreement no 247029-SDModels while the author was at
Freie Universität, Berlin.)}
\and Ryota Tomioka\thanks{Toyota Technological Institute at Chicago.
6045 S. Kenwood Ave., Chicago, Illinois 60637, USA. \url{tomioka@ttic.edu}}}

\maketitle

\begin{abstract}
\begin{normalsize}
We present a novel algebraic combinatorial view on low-rank matrix completion based on studying relations between a few entries with tools from algebraic geometry and matroid theory. The intrinsic locality of the approach allows for the treatment of single entries in a closed theoretical and practical framework. More specifically, apart from introducing an algebraic combinatorial theory of low-rank matrix completion, we present probability-one algorithms to decide whether a particular entry of the matrix can be completed. We also describe methods to complete that entry from a few others, and to estimate the error which is incurred by any method completing that entry. Furthermore, we show how known results on matrix completion and their sampling assumptions can be related to our new perspective and interpreted in terms of a completability phase transition.
\end{normalsize}
\end{abstract}

\section*{On this revision}
This revision - version 4 - is both abridged and extended in terms of exposition and results, as compared to version 3 \cite{KirTheTomUno12version3}. The theoretical foundations are developed in a more ad-hoc way which allow to reach the main statements and algorithmic implications more quickly. Version 3 contains a more principled derivation of the theory, more related results (e.g., estimation of missing entries and its consistency, representations for the determinantal matroid, detailed examples), but a focus which is further away from applications. A reader who is interested in both is invited to read the main parts of version 4 first, then go through version 3 for a more detailed view on the theory.

\section{Introduction}
\label{Sec:intro}

Matrix completion is the task to reconstruct (to ``complete'') matrices, given a subset of entries at
known positions. It occurs naturally in many practically relevant problems, such as missing feature imputation,
multi-task learning~\citep{ArgMicPonYin08}, transductive learning~\citep{GolZhuRecXuNow10}, or collaborative
filtering and link prediction~\citep{SreRenJaa05,AcaDunKol09,MenElk11}.

For example, in the ``NetFlix problem'', the rows of the matrix
correspond to users, the columns correspond to movies, and the entries
correspond to the rating of a movie by a user. Predicting how \emph{one specific} user
will rate \emph{one specific} movie then reduces to completing a \emph{single unobserved entry}
from the observed ratings.

For arbitrarily chosen position $(i,j)$, the primary questions are:
\begin{itemize}
\item \emph{Is it possible to reconstruct the entry $(i,j)$?}
\item \emph{How many possible completions are there for the entry $(i,j)$?}
\item \emph{What is the value of the entry $(i,j)$?}
\item \emph{How accurately can one estimate the entry $(i,j)$?}
\end{itemize}
In this paper, we answer these questions \emph{algorithmically} under the common \emph{low-rank assumption} - that is, under the model assumption (or approximation) that there is an underlying complete matrix of some low rank $r$ from which the partial observations arise.
Our algorithms are the first in the low-rank regime that provide information about single
entries.  They adapt to the combinatorial structure of the observations in that, if it is possible,
the reconstruction process can be carried out using much less than the full set of observations.
We validate our algorithms on real data.  We also identify \emph{combinatorial} features of the
low-rank completion problem.  This then allows us to study low-rank matrix
completion via  tools from, e.g., graph theory.

\subsection{Results}
Here is a preview of the results and themes of this paper, including the answers to the
main questions.

\subsubsection*{Is it possible to reconstruct the entry $(i,j)$?}
We show that whether the entry $(i,j)$ is completable
depends, with probability one for \emph{any continuous sampling regime},
only on the \emph{positions} of the observations and the position
$(i,j)$ that we would like to reconstruct
(Theorem \ref{Thm:single-entry-finite-generic}).
The proof is explicit and easily converted into an exact (probability one) algorithm for
computing the set of completable positions (Algorithm \ref{Alg:cclosure}).

\subsubsection*{How many possible completions are there for the entry $(i,j)$?}
Whether the entry at position $(i,j)$ is uniquely completable from the
observations, or, more generally, how many completions there are also
depends, with probability one,
only on the positions of the observed entries and $(i,j)$ (Theorem \ref{Thm:uniquecomp}).
We also give an efficient (randomized probability one) algorithm (Algorithm \ref{Alg:cclosure}) that verifies
a sufficient condition for every unobserved entry to be uniquely completable.

\subsubsection*{What is the value of the entry $(i,j)$?}
To reconstruct the missing entries, we introduce a general scheme based on
finding polynomial relations between the observations and one
unobserved one at position $(i,j)$ (Algorithm \ref{Alg:loccomp}).
For rank one matrices (Algorithm \ref{Alg:loccomprkone}), and, in any rank, observation patterns
with a special structure (Algorithm \ref{Alg:minor}) that allows ``solving minor by minor'',
we instantiate the scheme completely and efficiently.

Since, for a specific $(i,j)$, the polynomials needed can be very sparse, our approach
has the property that it adapts to the combinatorial structure of the observed
positions.  To our knowledge, other algorithms for low-rank matrix completion
do not have this property.

\subsubsection*{How accurately can one estimate entry $(i,j)$?}
Our completion algorithms separate out \emph{finding}
the relevant polynomial relations from \emph{solving} them.  When
there is more than one relation, we can use them as different estimates
for the missing entry, allowing for estimation in the noisy setting (Algorithm \ref{Alg:loccomp}).
Because the polynomials are independent of specific observations, the same techniques
yield \emph{a priori estimates} of the variance of our estimators.

\subsubsection*{Combinatorics of matrix completion}
Section \ref{Sec:graphconditions} contains a detailed analysis of whether an entry $(i,j)$ is
completable in terms of a \emph{bipartite graph} encoding the combinatorics of the
observed positions.  We obtain necessary (Theorem \ref{Thm:rank-r-sparse}) and sufficient
(Proposition \ref{Prop:deglb}) conditions for local completability, which are sharp in the sense
that our local algorithms apply when they are met.  We then relate the properties we find
to standard graph-theoretic concepts such as edge-connectivity and cores.  As an application,
we determine a binomial sampling density that is sufficient for solving minor-by-minor nearly exactly via
a random graph argument.

\subsubsection*{Experiments}
Section \ref{Sec:experiments} validates our algorithms on the Movie Lens data set and shows that the
structural features identified by our theories predict completability and completability phase
transitions in practice.

\subsection{Tools and themes}
Underlying our results are a new view of low-rank matrix completion based on
algebraic geometry.  Here are some of the key ideas.

\subsubsection*{Using the local-to-global principle}
Our starting point is that the set of rank $r$, $(m\times n)$-matrices carries the
additional structure of an \emph{irreducible algebraic variety}
(see Section \ref{Sec:detvar}).  Additionally, the observation
process is a polynomial map.  The key feature of this setup is that it gives us access to
fundamental algebraic-geometric ``local-to-global'' results
(see Appendix \ref{Sec:app-algebra}) that assert the observation process
will exhibit a \emph{prototypical behavior}: the answers to the main questions will be the
\emph{same for almost all} low-rank matrices, so they are essentially properties of
the rank and observation map.  This lets us study the main questions in terms of
observed and unobserved \emph{positions} rather than specific \emph{partial matrices}.

On the other hand, the same structural results show we can \emph{certify} that
properties like completability hold via \emph{single examples}.  We exploit
this to replace very complex basis eliminations with fast algorithms based on
numerical linear algebra.

\subsubsection*{Finding relations among entries using an ideal}
Another fundamental aspect of algebraic sets are
characterized exactly by the \emph{vanishing ideal} of polynomials that
evaluate to zero on them.  For matrix completion, the meaning is:
\emph{every} polynomial relation between the observations and
a specific position $(i,j)$ is generated by a \emph{finite}
set of polynomials we can in principle identify (See Section \ref{Sec:closure}).

\subsubsection*{Connecting geometry to combinatorics using matroids}
Our last major ingredient is the use of the Jacobian of the observation
map, evaluated at a ``generic point''. The independence/dependence relation
among its rows is invariant (with matrix-sampling probability one) over the set of
rank $r$ matrices that characterizes whether a position $(i,j)$ is
completable.  Considering the subsets of independent rows as simply
subsets of a finite set, we obtain a \emph{linear matroid} characterizing
completability.  This perspective allows access to combinatorial tools
of matroid theory, enabling the analysis in Section \ref{Sec:graphconditions}.

\subsection{Context and novelty}
Low-rank Matrix Completion has received a great deal of attention from the community.  Broadly speaking, two
main approaches have been developed: convex relaxations of the rank constraints
\citep[e.g.,][]{CanRec09,CandesTao, NegWai11,SalSre10,NegWai12,FoySre11,SreShr05}; and spectral
methods \citep[e.g.,][]{KesMonOh10,Mek09,C12}.  Both of these \citep[see][]{CandesTao,KesMonOh10}
yield, in the noiseless case,
optimal sample complexity bounds (in terms of the number of positions uniformly sampled)
for exact reconstruction of an underlying matrix meeting certain analytic assumptions.
All the prior work of which we are aware concentrates on: (i) completing all the unobserved
entries; (ii) sets of observed positions sampled from some known distribution.  The
results here, by contrast, apply specifically to \emph{fixed} sets of observations and
provide information about \emph{any} unobserved position $(i,j)$.

Analogously, all the prior work on Low-rank Matrix Completion from noisy observations
concentrates on: (i) estimating every missing entry; (ii) denoising every observed entry; and
(iii) minimizing the MSE over the whole matrix.  Our approach allows, for the first time,
to construct \emph{single-entry estimators} that minimize the variance of the entry under
consideration; \cite{KirThe13RankOneEst} showed how to do this efficiently in
rank $1$.

\subsection{Organization}
The sequel is structured as follows: Section \ref{Sec:background} introduces the background material
we need; Sections \ref{Sec:finitecompletability} and \ref{Sec:uniquecompletability} develop
our algebraic-combinatorial theory and derive algorithms for determining when an entry
is completable; Section \ref{Sec:closure} formulates the reconstruction process itself algebraically;
Section \ref{Sec:graphconditions} contains a combinatorial analysis of the problem; finally Section \ref{Sec:experiments}
validates our approach on real data.  The Appendix collects some technical results required in the proofs
of the main theorems.

\section{Background and setup}
\label{Sec:background}
In this section, we introduce two essential objects, the set of
low-rank matrices $\calM(m\times n, r)$ and the set of observed positions $E$.  We
also define the concept of \emph{genericity}.

\subsection{The determinantal variety}\label{Sec:detvar}
First, we set up basic notation. A matrix is denoted by upper-case bold character like
$\mA$. We denote by $[n]$ the set of integers $\{1,2,\ldots,n\}$. $\mA_{I,J}$
denotes the submatrix of an $m\times n$ matrix $\mA$ specified by the sets
of indices $I\subseteq [m]$ and $J\subseteq [n]$. The $(i,j)$ element of a
matrix $\mA$ is denoted by $A_{ij}$. The cardinality of a set $I$ is
denoted by $\card{I}$.

Now we define the set of matrices of rank at most $r$.
\begin{Def}
The set of all complex $(m\times n)$-matrices of rank $r$ or less will be denoted by
$\calM (m\times n, r)=\{\mA\in\C^{m\times n}\;:\; {\rm rank}(\mA)\le r\}.$
We will always assume that $r\le m\le n;$ by transposing the matrices, this is no loss of generality.
\end{Def}

Some basic properties of $\calM(m\times n,r)$ are summarized in the
following proposition.
\begin{Prop}[Properties of the determinantal variety]\label{Prop:detvarprops}
The following hold for $\calM(m\times n,r)$:
\begin{itemize}
\item[(i)] $\calM(m\times n,r)$ is the image of the map
$
\Upsilon:(\mU,\mV)\mapsto \mU\mV^\top
$,
where $\mU\in \C^{m\times r}$ and $\mV\in \C^{n\times r}$, and is therefore
irreducible.
\item[(ii)] $\calM(m\times n,r)$ has dimension
$$d_r(m,n) := \dim\calM (m\times n, r) =
\begin{cases}
r(m + n - r) & \text{if $m\ge r$ and $n\ge r$} \\
mn & \text{otherwise}
\end{cases}
$$
\item[(iii)] Every $(r+1)\times (r+1)$ minor of a matrix in
$\calM(m\times n,r)$ is zero, namely,
$$
\det (\mA_{I,J})=0,\quad \forall I\subseteq [m], J\subseteq [n],
$$
where $\card{I}=r+1$, $\card{J}=r+1$, and $\mA\in\calM(m\times n,r)$.
\item[(iv)] The vanishing ideal of $\calM(m\times n,r)$ is generated by the
vanishing of the minors from part 3.
\end{itemize}
\end{Prop}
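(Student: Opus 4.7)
The plan is to handle the four parts largely independently, since (i), (ii), (iii) are essentially classical linear algebra plus a standard fiber-dimension argument, while (iv) is the deepest statement and should be reduced to a known theorem about determinantal ideals.

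For (i), I would first observe that the rank factorization theorem from linear algebra shows every matrix $\mA$ with $\rk(\mA) \le r$ can be written as $\mU \mV^\top$ with $\mU \in \C^{m\times r}$ and $\mV \in \C^{n\times r}$ (take $\mU$ to be a basis for the column space, padded with zeros, and $\mV^\top$ to be the coefficients). Conversely, any product $\mU \mV^\top$ of such matrices has rank at most $r$. Hence $\calM(m\times n,r)$ equals the image of $\Upsilon$. Irreducibility then follows from the fact that the image of an irreducible variety under a polynomial (regular) map has an irreducible Zariski closure, combined with the fact that the image of $\Upsilon$ is already constructible and the set of rank-$r$ matrices is Zariski closed (being cut out by the minors of part (iii)). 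The domain $\C^{m\times r}\times \C^{n\times r}$ is irreducible as an affine space.

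For (ii), the case $m < r$ or $n < r$ is trivial since then $\calM(m\times n, r) = \C^{m\times n}$. For the main case, I would apply the fiber dimension theorem to $\Upsilon$. The domain has dimension $r(m+n)$. On the open dense locus where $\mU$ has rank $r$, the fiber over $\mA = \mU\mV^\top$ is exactly the $\mathrm{GL}_r$-orbit $\{(\mU\mG, \mV\mG^{-\top}) : \mG \in \mathrm{GL}_r\}$, which has dimension $r^2$. By the fiber dimension theorem applied to a dominant morphism, $\dim \calM(m\times n,r) = r(m+n) - r^2 = r(m+n-r)$.

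For (iii), the argument is elementary: if $\rk(\mA) \le r$, then any $r+1$ columns of $\mA$ are linearly dependent, so for any index sets $I,J$ of size $r+1$ the submatrix $\mA_{I,J}$ inherits linearly dependent columns and therefore has vanishing determinant.

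The hard part is (iv), i.e.\ that the $(r+1)\times(r+1)$ minors \emph{generate} the full vanishing ideal, not merely cut out $\calM(m\times n,r)$ set-theoretically. This is a classical but nontrivial theorem on determinantal ideals; its content is that the ideal $I_{r+1}$ generated by the $(r+1)\times(r+1)$ minors of the generic matrix is prime in $\C[x_{ij}]$. The cleanest strategy I would follow is to cite the standard reference (e.g.\ Bruns--Vetter, \emph{Determinantal Rings}, Thm.~2.10, or the Hochster--Eagon theorem), whose proof goes via a standard-monomial / straightening-law argument showing that $I_{r+1}$ is prime and has the correct codimension $(m-r)(n-r) = mn - r(m+n-r)$, matching the codimension of $\calM(m\times n,r)$ computed in (ii). Since $\calM(m\times n,r)$ is irreducible by (i) and is the zero set of $I_{r+1}$ by (iii), the primality together with the codimension match forces the Nullstellensatz-radical $\sqrt{I_{r+1}}$ to coincide with the full vanishing ideal, yielding the claim. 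The one subtle point worth flagging is that primality, not merely radicality, is what has to be established in the ambient polynomial ring; the irreducibility of the variety does not by itself upgrade a set-theoretic description to an ideal-theoretic one.
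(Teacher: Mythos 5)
Your proposal is correct and follows essentially the same route as the paper: surjectivity of the parameterization $\Upsilon$ for (i) (you use rank factorization where the paper invokes the SVD), a dimension count via the generic fiber of $\Upsilon$ for (ii), the elementary rank--minor fact for (iii), and for (iv) the citation of the Bruns--Vetter primality theorem for the determinantal ideal combined with irreducibility and the Nullstellensatz. Your explicit $\mathrm{GL}_r$-orbit fiber computation in (ii) and your remark that primality (not mere radicality) is the essential input in (iv) are slightly more detailed than the paper's one-line justifications, but the underlying arguments are the same.
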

\begin{proof}
\textbf{(i)} The existence of the singular-value decomposition imply that
$\calM(m\times n,r)$ is the surjective image of $\C^{r(m+n)}$ under the algebraic
map $\Upsilon$.

\textbf{(ii)}
This follows from \textbf{(i)} and the uniqueness of the SVD, or  \citep[section 1.C, Proposition 1.1]{Bruns}.

\textbf{(iii)} The rank of a matrix equals the order of the largest non-vanishing minor.

\textbf{(iv)} By \cite[Theorem~2.10, Remark~2.12, and Corollary~5.17f]{Bruns}, the ideal generated by the
$r\times r$ minors is prime.  Since it vanishes on the irreducible $\calM(m\times n,r)$, it is the
vanishing ideal.
\end{proof}
The set of observed positions is denoted by $E$ and can be viewed as a
bipartite graph as follows.
\begin{Def}
Let $\calE:=[m]\times [n]$.
The set containing the positions of observed entries is denoted by
$E\subseteq \calE$. We define the bipartite graph
$G(E)=(V,W,E)$ with vertices $V=[m]$ corresponding to rows and vertices
$W=[n]$ corresponding to columns.
We call the $m\times n$ adjacency matrix $\mM(E)$ of
the bipartite graph $G(E)$  a {\em mask}. The map
$$\Omega: \mA \mapsto \left( A_{ij}\right)_{(i,j)\in E},$$
where $\mA\in\calM(m\times n,r)$, is called a {\em masking} (in rank $r$).
\end{Def}

Note that the set of observed positions $E$, the adjacency
matrix $\mM$, and the map $\Omega$ can be used interchangeably. For
example, we denote by $\mM(\Omega)$ the adjacency matrix corresponding to
the map $\Omega$, and by $E(\mM)$ the set of positions specified by $\mM$, and so on.
Figure~\ref{fig:g1g2} shows two bipartite graphs $G_1$ and $G_2$
corresponding to the following two masks:
$$
\mM_1=
\begin{pmatrix}
1 & 0 & 1\\
1 & 1 & 0\\
1 & 0 & 0
\end{pmatrix},
\qquad
\mM_2=
\begin{pmatrix}
1 & 0 & 1\\
0 & 1 & 0\\
1 & 0 & 1
\end{pmatrix}.
$$

\begin{figure}[tb]
\begin{center}
\includegraphics[width=.3\textwidth]{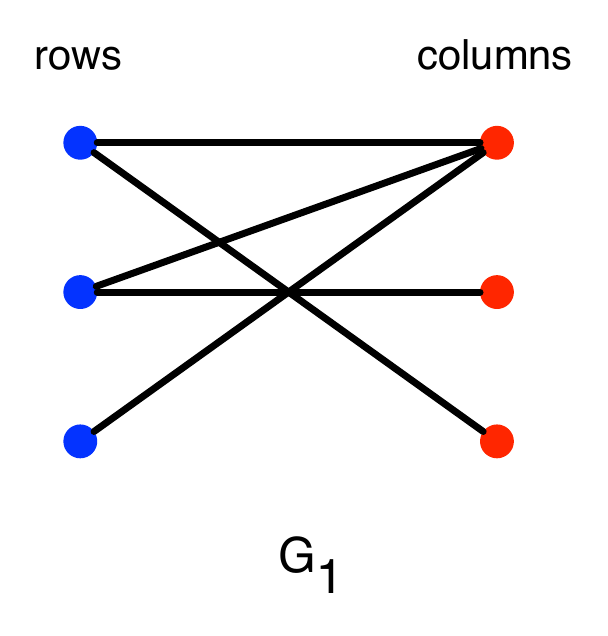}
\includegraphics[width=.3\textwidth]{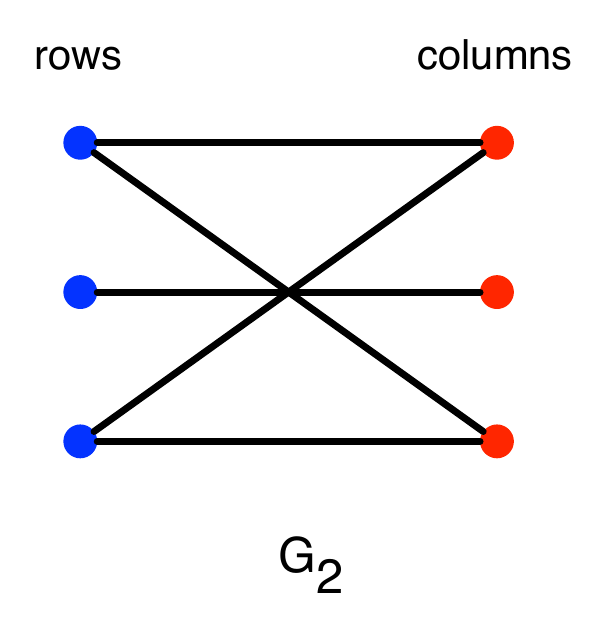}
\caption{Two bipartite graphcs $G_1$ and $G_2$ corresponding to the
masks $\mM_1$ and $\mM_2$, respectively. Every non-edge corresponds to an
unobserved entry.}
\label{fig:g1g2}
\end{center}
\end{figure}
\subsection{The Jacobian of the masking operator}
Informally, the question we are going to address is:
\begin{center}
\emph{Which entries of $\mA$ are (uniquely) reconstructable, given the masking $\Omega(\mA)$?}
\end{center}
The answer will depend
on the interaction between the algebraic structure of
$\calM(m\times n, r)$ and the combinatorial structure of $E$.
The main tool we use to study this is the Jacobian of the map $\Upsilon$, since
at smooth points, we can obtain information about the dimension of the
pre-image $\Omega^{-1}(\mA)$ from its rank.
\begin{Def}\label{Def:Jacobian}
We denote by $\mJ$ the Jacobian of the map $\Upsilon: \mU,\mV \mapsto
\mA=\mU\mV^\top$. More specifically, the Jacobian of the map from $\mU$ and $\mV$ to $A_{ij}$ can be
written as follows:
\vspace{-7mm}\begin{align}\label{Eq:jacobian-row}
\left(\frac{\partial A_{ij}}{\partial \vu_1^\top},\ldots,\frac{\partial A_{ij}}{\partial \vu_m^\top}, \frac{\partial A_{ij}}{\partial \vv_1^{\top}},\ldots,\frac{\partial A_{ij}}{\partial \vv_n^{\top}}\right)=
\begin{array}[tb]{c}
\vphantom{\text{Derivative}}\\ \vphantom{\uparrow}\\
\Bigl(\begin{matrix}
0 & \cdots & \vv_j^\top & \cdots 0  & 0 \cdots & \vu_i^\top & \cdots & 0
\end{matrix}\Bigr)
\\
\begin{matrix}
&  & \uparrow & \quad\,\, &  & &  \quad\,\, & \uparrow & &
\end{matrix}\\
\text{Derivative wrt $\vu_i$ \qquad Derivative wrt $\vv_j$}
\end{array}
\end{align}
where $\vu_i^\top$ is the $i$th row vector of $\mU$ and $\vv_j^\top$ is the
$j$th row vector of $\mV$.
Stacking the above row vectors for $(i,j)\in[m]\times [n]$, we can write
the Jacobian $\mJ(\mU,\mV)$ as an $mn\times r(m+n)$ matrix as follows:
\begin{align}
\label{eq:jacobian}
\mJ(\mU,\mV)=
\begin{pmatrix}
\begin{matrix}
\mI_m\otimes \vv_1^\top \\
\mI_m\otimes \vv_2^\top \\
\vdots              \\
\mI_m\otimes \vv_n^\top
\end{matrix}
& \mI_n \otimes \mU
\end{pmatrix},
\end{align}
where $\otimes$ denotes the Kronecker product. Here the rows of $\mJ$
correspond to the entries of $\mA$ in the column major order.
\end{Def}

\begin{Lem}\label{Lem:kronecker}
Every matrix $\mS\in\C^{m\times n}$ whose vectorization $\vect(\mS)$ lies in the left null space of $\mJ(\mU,\mV)$ satisfies
$$
\mU^{\top}\mS =0,\qquad \mS\mV=0,
$$
and any $\mS$ satisfying the above lies in the left null space of
$\mJ(\mU,\mV)$. In addition, the dimension of the null space is
$(m-r)(n-r)$ if $\mU$ and $\mV$ have full column rank $r$.
\end{Lem}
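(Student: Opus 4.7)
The plan is to work directly from the block form of $\mJ(\mU,\mV)$ in Definition~\ref{Def:Jacobian} and expand the left-null condition $\vect(\mS)^\top \mJ(\mU,\mV) = 0$ column by column. Split $\mJ$ into its two horizontal blocks: the first $mr$ columns (derivatives with respect to the rows of $\mU$) and the last $nr$ columns (derivatives with respect to the rows of $\mV$). Using that the rows of $\mJ$ are indexed by $(i,j)\in[m]\times[n]$ in column-major order, the entry of the first block in row $(i,j)$ and in the column corresponding to the $k$-th coordinate of $\vu_{i'}$ is $(\vv_j)_k$ if $i=i'$ and zero otherwise; analogously, the entry of the second block in row $(i,j)$ and the column corresponding to the $k$-th coordinate of $\vv_{j'}$ is $(\vu_i)_k$ if $j=j'$ and zero otherwise.

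The first step is to read off the left-null conditions from these formulas. Orthogonality of $\vect(\mS)$ against the first block gives, for each $i'\in[m]$ and $k\in[r]$,
\[
\sum_{j=1}^{n} S_{i'j}(\vv_j)_k = 0,
\]
which is precisely the $(i',k)$ entry of $\mS\mV$. Orthogonality against the second block gives, for each $j'\in[n]$ and $k\in[r]$,
\[
\sum_{i=1}^{m} S_{ij'}(\vu_i)_k = 0,
\]
which is the $(k,j')$ entry of $\mU^\top\mS$. Each implication is an equivalence, so $\vect(\mS)$ lies in the left null space of $\mJ(\mU,\mV)$ if and only if both $\mU^\top\mS=0$ and $\mS\mV=0$ hold.

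For the dimension count, assume $\mU$ and $\mV$ have full column rank $r$. Then $\mU^\top\mS=0$ says every column of $\mS$ lies in $\ker(\mU^\top)$, a subspace of $\C^m$ of dimension $m-r$; similarly $\mS\mV=0$ says every row of $\mS$ lies in $\ker(\mV^\top)$, which has dimension $n-r$ in $\C^n$. Choosing bases $\mP\in\C^{m\times(m-r)}$ of $\ker(\mU^\top)$ and $\mQ\in\C^{n\times(n-r)}$ of $\ker(\mV^\top)$, every solution can be written uniquely as $\mS = \mP\mX\mQ^\top$ for some $\mX\in\C^{(m-r)\times(n-r)}$, which gives dimension exactly $(m-r)(n-r)$.

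There is no real obstacle; the only thing to be careful about is the indexing bookkeeping between the column-major ordering of the rows of $\mJ$ and the Kronecker-product shorthand used in~\eqref{eq:jacobian}. Once the entries of each block column are identified, both claims reduce to a direct reading and a standard rank--nullity argument.
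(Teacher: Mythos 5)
Your proof is correct and follows essentially the same route as the paper's: both verify the equivalence by expanding $\vect(\mS)^\top\mJ$ against the two column blocks of $\mJ$ (the paper packages this with $\vect$/Kronecker identities and a permutation matrix, you do it entrywise, but it is the same computation), and both count the dimension by parametrizing the solution set as $\mS=\mP\mX\mQ^\top$ with $\mP,\mQ$ bases of $\ker(\mU^\top)$ and $\ker(\mV^\top)$. No gaps.
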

\begin{proof}
Let $\mP$ be the $mn\times mn$ permutation matrix defined by
\begin{align*}
\mP\vect(\mX) = \vect(\mX^{\top}).
\end{align*}
Note that $\mP^{\top}\mP=\mI_{m+n}$, and
$$
\mP
\begin{pmatrix}
\mI_m\otimes \vv_1^\top \\
\vdots              \\
\mI_m\otimes \vv_n^\top
\end{pmatrix} = \mI_m \otimes \mV.
$$
Thus we have
\begin{align*}
\vect^{\top}(\mS)\mJ &=
\left(
\vect^{\top}(\mS)\mP^{\top} \left(\mI_m\otimes \mV\right) ,
\vect^{\top}(\mS) \left(\mI_n\otimes \mU\right)
\right)\\
&=\left(\vect^{\top}(\mV^{\top}\mS^{\top}),
\vect^{\top}(\mU^{\top}\mS)
\right),
\end{align*}
which is what we wanted. To show the last part of the lemma, let
$\mU_{\perp}\in\C^{m\times (m-r)}$ and $\mV_{\perp}\in\C^{n\times (n-r)}$ be any basis of the orthogonal
complement space of $\mU$ and $\mV$, respectively.
Since the null space can be
parametrized as $\mS=\mU_{\perp}\mS'\mV_{\perp}^{\top}$ by
$\mS'\in\C^{(m-r)\times(n-r)}$, and this parametrization is one-to-one,
we see that the dimension of the null space is $(m-r)(n-r)$.
\end{proof}

Now we define the Jacobian corresponding to the set of observed positions $E$.

\begin{Def}
For a position $(k,\ell)$, we define $\mJ_{(k,\ell)}$ to be the single row of $\mJ$ corresponding to the position $(k,\ell)$.
Similarly, we define $\mJ_E$ to be the submatrix of $\mJ$ consisting of rows
corresponding to the set of observed positions $E$. Due to the chain
rule, $\mJ_E$ is the Jacobian of the map $\Omega\circ\Upsilon$.
\end{Def}

\subsection{Genericity}
The pattern of zero and non-zero entries in~\eqref{Eq:jacobian-row} hints at a connection to
purely combinatorial structure.  To make the connection precise, we introduce \emph{genericity}.

\begin{Def}
We say a boolean statement $P(\mX)$ holds {\em for a generic $\mX$} in irreducible algebraic variety
$\calX$, if for any Hausdorff continuous measure $\mu$ on $\calS$, $P(\mX)$ holds with
probability $1$.
\end{Def}
These kinds of statements are sometimes called ``generic properties,'' and they
are properties of $\calX$, rather than any specific $\mu$.
The prototypical example of a generic property is where $\calX = \C^n$,
$p\neq 0$ is a polynomial, and the statement $P$ is ``$p(\mX)\neq 0$.''

Here, we are usually concerned with the case $\calX = \calM(m\times n,r)$.
Proposition \ref{Prop:detvarprops}
tells us that $m$, $n$ and $r$ define $\calM(m\times n,r)$ completely.
Assertions of the form ``For generic $\mX\in \calM(m\times n,r)$, $P(\mX)$ depends
only on $(t_1,t_2,\ldots)$'' mean $P(\mX)$ is a generic statement for \emph{all}
$\calM(m\times n,r)$ with the parameters $t_i$ fixed.

Although showing whether some statement $P$ holds generically might seem
hard, we are interested in $P$ defined by polynomials.  In this case,
results in Appendix \ref{Sec:app-algebra} imply that it is enough to show that
either $P$ holds: (a) on an open subset of $\calX$
in the metric topology; or (b) almost surely, with respect to a
Hausdorff continuous measure.

As a first step, and to illustrate the ``generic philosophy'' we show that the generic
behavior of the Jacobian $\mJ_E(\mU,\mV)$ is a property of $E$.  We first start
by justifying the definition via $(\mU,\mV)$ (as opposed to $\mA$).
\begin{Lem}\label{Lem:generic-rank}
For all $E\subset \calE$ and $\mA\in\calM(m\times n,r)$ generic,
with $\mA = \Upsilon(\mU,\mV)$, and $\mU$ and $\mV$ generic,
the rank of $\mJ_E$ is independent of $\mA$, $\mU$, and $\mV$.
\end{Lem}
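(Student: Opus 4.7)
The plan is a two-step generic-rank argument. First, observe from \eqref{Eq:jacobian-row} and \eqref{eq:jacobian} that every entry of $\mJ_E(\mU, \mV)$ is either zero or a single coordinate of $\mU$ or $\mV$, so all $k \times k$ minors of $\mJ_E$ are polynomials in the entries of $(\mU, \mV)$. The standard ``generic rank'' argument then yields an integer $\rho = \rho(E, r)$, the maximum rank achieved by $\mJ_E$ over all $(\mU, \mV)$, together with a Zariski open dense subset $U \subseteq \C^{m \times r} \times \C^{n \times r}$ on which $\rk \mJ_E(\mU, \mV) = \rho$: off $U$ at least one $\rho \times \rho$ minor vanishes, and the rank drops. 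Hence for generic $(\mU, \mV)$, the rank is the constant $\rho$.

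Second, transfer this to generic $\mA \in \calM(m \times n, r)$. By Proposition~\ref{Prop:detvarprops}(i), $\Upsilon : \C^{m \times r} \times \C^{n \times r} \to \calM(m \times n, r)$ is a surjective polynomial map between irreducible varieties, so the image $\Upsilon(U)$ is a constructible subset whose closure is all of $\calM(m \times n, r)$ and which therefore contains a Zariski open dense subset. Consequently, for generic $\mA$ there exists a decomposition $\mA = \Upsilon(\mU, \mV)$ with $(\mU, \mV) \in U$, and the rank of $\mJ_E(\mU, \mV)$ takes the constant value $\rho$ independent of $\mA$, $\mU$, and $\mV$.

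The step to check carefully is that once $\mA$ is fixed, a \emph{generic} $(\mU, \mV)$ above $\mA$ really lands in $U$, not merely some $(\mU, \mV)$. This follows from the fact that over a smooth $\mA$ the fiber $\Upsilon^{-1}(\mA)$ is an irreducible $GL_r$-orbit, so $U \cap \Upsilon^{-1}(\mA)$ is either empty or Zariski open dense in the fiber; the constructible-image argument in Step~2 rules out ``empty'' for generic $\mA$. A conceptual alternative that bypasses the fiber analysis entirely is to identify the column span of $\mJ(\mU, \mV)^\top$ inside $\C^{mn}$ with the tangent space $T_\mA \calM(m \times n, r)$, so that the column span of $\mJ_E(\mU, \mV)^\top$ equals the projection of $T_\mA \calM(m \times n, r)$ onto the coordinates indexed by $E$; this is manifestly a function of $\mA$ alone, and combining with Step~1 gives the claim.
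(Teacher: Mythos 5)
Your proof is correct, and it follows the same two-step skeleton as the paper's: first show the rank of $\mJ_E$ is constant on a dense open subset of the parameter space $\C^{r(m+n)}$, then push this down to a generic $\mA\in\calM(m\times n,r)$ using surjectivity of $\Upsilon$. The differences are in the tooling, and they are worth noting. For the first step the paper invokes the Semialgebraic Sard theorem of Kurdyka--Orro--Simon to conclude that the critical locus of $\Omega\circ\Upsilon$ is a proper algebraic subset; your vanishing-of-$\rho\times\rho$-minors argument reaches the same conclusion with completely elementary means, which is arguably preferable here since all entries of $\mJ_E$ are literally coordinates of $(\mU,\mV)$. For the second step the paper is terse --- it observes that the loci where $\Upsilon$ and $\Omega\circ\Upsilon$ are both ``regular'' intersect in a dense subset of $\C^{r(m+n)}$ and essentially stops there --- whereas you explicitly address the one delicate point: that for a \emph{fixed} generic $\mA$, a \emph{generic} point of the fiber $\Upsilon^{-1}(\mA)$ lies in the good set $U$. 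Your resolution (the fiber over a rank-exactly-$r$ matrix is a single irreducible $GL_r$-orbit, so $U\cap\Upsilon^{-1}(\mA)$ is dense in the fiber once it is nonempty, and Chevalley constructibility of $\Upsilon(U)$ supplies nonemptiness for generic $\mA$) is sound and fills a gap the paper leaves implicit; one can even note that $\mJ_E(\mU G,\mV G^{-\top})$ differs from $\mJ_E(\mU,\mV)$ by right-multiplication with an invertible matrix, so the rank is constant on the entire fiber. One small slip in your closing remark: the image of the differential $d\Upsilon$ inside $\C^{mn}$ is the \emph{column} span of $\mJ(\mU,\mV)$ (an $mn\times r(m+n)$ matrix), not of $\mJ(\mU,\mV)^\top$; with that correction the tangent-space identification $T_\mA\calM(m\times n,r)$ indeed gives the cleanest route, since it is manifestly a function of $\mA$ alone.
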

\begin{proof}
We first consider the composed map $\Omega\circ \Upsilon$.  This is a polynomial
map in the entries of $\mU$ and $\mV$, so its critical points (at which the differential
$\mJ_E$ attains less than its maximum rank) is an algebraic subset of $\C^{r(m+n)}$.
The ``Semialgebraic Sard Theorem'' \cite[Theorems 3.1, 4.1]{KOS00}
then implies that the set of critical points is, in fact,
a \emph{proper} algebraic subset of $\C^{r(m+n)}$.

So far, we have proved that the rank of $\mJ_E$ is independent of $\mU$ and $\mV$.
However, $\mU$ and
$\mV$ are not uniquely determined by $\mA$.  To reach the stronger conclusion,
we first observe that a generic $\mA\in \calM(m\times n,r)$ is a regular value of $\Upsilon$,
again by Semialgebraic Sard.  Thus, the set of $(\mU,\mV)$ such that $\Upsilon(\mU,\mV)$ and
$\Omega\circ \Upsilon(\mU,\mV)$
are both regular values is the intersection of two dense sets in $\C^{r(m+n)}$. % 
\end{proof}

\section{Finite completability}
\label{Sec:finitecompletability}

This section is devoted to the question \emph{``Is it possible to reconstruct the entry $(i,j)$?''}. We will show that under mild assumptions, the answer depends only on the \emph{position} $(i,j)$, the observed positions, and the rank, but not the observed entries. The main idea behind this result is relating reconstructability to the rows of the Jacobian $\mJ$, and their rank, which can be shown to be independent of the actual entries for almost all low-rank matrices. Therefore, we can later separate the question of reconstructibility from the actual reconstrution process.

\subsection{Finite completability as a property of the positions}\label{sec:completable-closure-basics}
We show how to predict whether the entry at a specific \emph{position}
$(k,\ell)$ will be reconstructible from a specific set of positions
$E\subset \calE$.
For the rest of this section, we fix the parameters $r$, $m$ and $n$, and denote by $E$ a
set of observed positions. The symbol $\K$ will denote either of the real numbers
$\R$ or the complex numbers $\C$.

We start by precisely defining what it means for one set of entries to imply the
imputability of another entry.
\begin{Def}\label{Def:finitely-completable}
Let $E\subset \calE$ be a set of observed positions and $\mA$ be a rank $r$
true matrix. The entry $A_{k\ell}$ is
\emph{finitely completable in rank $r$} from the observed set of entries $\{A_{ij} : (i,j)\in E\}$ if the
entry $A_{k\ell}$ can take only finitely many values when fixing $\Omega(\mA)$.
\end{Def}
There are two subtleties here: the first is that, even if there is an infinity of possible
completions for the whole matrix $\mA$,
it is possible that some specific $A_{k\ell}$ takes on only finitely many values; the question of whether the entry $A_{k\ell}$ at position $(k,\ell)$ is finitely completable
may have different answers for different $\mA$. The theoretical results in this section take care of both issues.
\begin{Thm}\label{Thm:single-entry-finite-generic}
Let $E\subset \calE$ be a set of positions, $(k,\ell)\in \calE\setminus E$ be arbitrary, and
let $\mA\in\K^{m\times n}$ be a generic, $(m\times n)$-matrix of rank $r$.
Whether the entry $A_{k\ell}$ at position $(k,\ell)$ is finitely completable depends only on
the position $(k,\ell)$, the true rank $r$, and
the observed positions $E$ (and not on $\mA$, $m$, $n$, or $\K$).
\end{Thm}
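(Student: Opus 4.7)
The plan is to translate finite completability into a rank equality between two Jacobians, apply Lemma~\ref{Lem:generic-rank}, and then exploit the block-sparse structure of~\eqref{Eq:jacobian-row} to strip the dependence on $m$, $n$, $\K$.

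Let $E^+ := E\cup\{(k,\ell)\}$ and let $\widetilde\Omega$ be the masking associated with $E^+$. The key reduction is that $A_{k\ell}$ is finitely completable at a generic $\mA$ if and only if
\begin{equation*}
\rk \mJ_{E^+}(\mU,\mV) \;=\; \rk \mJ_E(\mU,\mV) \quad \text{at a generic } (\mU,\mV).
\end{equation*}
Finite completability at generic $\mA$ amounts to the projection from the image of $\widetilde\Omega\circ\Upsilon$ onto the image of $\Omega\circ\Upsilon$ being generically finite, which, since $\calM(m\times n, r)$ is irreducible by Proposition~\ref{Prop:detvarprops}(i), is equivalent to these two images having the same dimension. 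By the Semialgebraic Sard Theorem already invoked in Lemma~\ref{Lem:generic-rank}, each image dimension equals the generic rank of the corresponding Jacobian, giving the displayed equality. Passage between generic $\mA$ and generic $(\mU,\mV)$ is exactly the regular-value argument at the end of Lemma~\ref{Lem:generic-rank}.

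Applying Lemma~\ref{Lem:generic-rank} to both $E$ and $E^+$, the two generic ranks above are invariants of $(E,r,m,n)$ and $(E^+,r,m,n)$, and hence so is finite completability. To strip dependence on $m$, $n$, $\K$, observe from~\eqref{Eq:jacobian-row} that the row $\mJ_{(i,j)}$ is supported only on the columns indexed by $\vu_i$ and $\vv_j$. Consequently $\mJ_{E^+}$ has nonzero columns only in the blocks indexed by row- and column-indices appearing in $E^+$; enlarging the ambient $(m,n)$ merely adjoins zero columns and cannot alter the rank. Independence from $\K$ follows because the generic rank of a matrix whose entries are polynomials is determined by which minors vanish identically as polynomials, a condition about the polynomials themselves rather than the base field, and compatible with the Hausdorff-measure formulation of genericity used here.

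The main obstacle I anticipate is the first step, rigorously moving between \emph{``$A_{k\ell}$ takes finitely many values on the fiber $\Omega^{-1}(\Omega(\mA))$''} and \emph{``the two Jacobian ranks agree generically.''} Two subtleties arise: $\Upsilon$ has positive-dimensional generic fibers, so one must verify that the fiber structure upstairs correctly reflects that downstairs; and when $\K=\R$ one is working semialgebraically rather than algebraically. Both should be handled by the appendix's algebraic preliminaries together with the semialgebraic dimension theory already cited in the proof of Lemma~\ref{Lem:generic-rank}.
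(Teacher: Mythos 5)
Your proposal is correct and follows essentially the same route as the paper, which derives this statement from Theorem~\ref{Thm:finite-completable-closure}: reduce finite completability of $(k,\ell)$ to the rank equality $\rk\mJ_{E\cup\{(k,\ell)\}}=\rk\mJ_E$ at a generic point, invoke Lemma~\ref{Lem:generic-rank} for genericity and independence of the choice of $(\mU,\mV)$, use the column support of the Jacobian rows to remove the dependence on $m$ and $n$, and handle $\K=\R$ via the appendix (Theorem~\ref{Thm:genreal}). The only cosmetic difference is that you justify the rank reduction with the fiber-dimension theorem for dominant maps of irreducible varieties, where the paper applies the constant rank theorem to the factored map; both yield Equation~\eqref{Eq:equal-rank}.
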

This lets us talk about the finite completability of positions instead of entries.
\begin{Def}
Let $E\subset\calE$ be a set of observed positions, and $(k,\ell)\in \calE\setminus E$.  We say that the
\emph{position
$(k,\ell)$ is finitely completable from $E$ in rank $r$} if, for generic $\mA$, the entry $A_{k\ell}$ is finitely
completable from $\Omega(\mA)$.  The \emph{rank $r$ finitely completable closure $\close_r(E)$} is the
set of positions generically finitely completable from $E$.
\end{Def}
The main tool we use to prove Theorem \ref{Thm:single-entry-finite-generic} is the Jacobian
matrix $\mJ_E$. For it, we obtain
\begin{Thm}\label{Thm:finite-completable-closure}
Let $E\subset \calE$ and let $\mA$ be a generic, rank $r$ matrix.  Then
\[
\close_r(E) = \{(k,\ell)\in \calE : \mJ_{\{(k,\ell)\}}\in\rowspan \mJ_E\}.
\]
\end{Thm}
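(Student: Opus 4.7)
The plan is to reformulate finite completability as a fiber-dimension statement and then translate it into a Jacobian-rank condition via the parametrization $\Upsilon$. First, by Definition \ref{Def:finitely-completable}, the entry $A_{k\ell}$ is finitely completable from $\Omega(\mA)$ precisely when the image of the fiber $\calF := \Omega^{-1}(\Omega(\mA)) \cap \calM(m\times n,r)$ under the coordinate evaluation $\pi_{k\ell}: \mA' \mapsto A'_{k\ell}$ is finite. Since $\calF$ is constructible in $\calM(m\times n, r)$ and $\pi_{k\ell}$ is regular, Chevalley's theorem forces $\pi_{k\ell}(\calF) \subseteq \C$ to be constructible, hence either finite or cofinite. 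Consequently, finiteness is equivalent to the equality of fiber dimensions $\dim \calF = \dim \Omega'^{-1}(\Omega'(\mA))$, where $\Omega'$ denotes the masking corresponding to $E' := E \cup \{(k,\ell)\}$.

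Next, I lift both dimensions through the smooth parametrization $\Upsilon$. For generic $(\mU, \mV)$, Lemma \ref{Lem:generic-rank} (via Semialgebraic Sard) guarantees that both $\Omega \circ \Upsilon$ and $\Omega' \circ \Upsilon$ attain their generic Jacobian ranks $\rk \mJ_E$ and $\rk \mJ_{E'}$, while a generic $\mA$ is a regular value of $\Upsilon$ whose $\mathrm{GL}_r$-gauge fiber has dimension $r^2$. Combining the parametrization and gauge fiber-dimension formulas gives
\begin{align*}
\dim \calF &= \bigl(r(m+n) - \rk \mJ_E\bigr) - r^2 = d_r(m,n) - \rk \mJ_E,\\
\dim \Omega'^{-1}(\Omega'(\mA)) &= d_r(m,n) - \rk \mJ_{E'}.
\end{align*}
Equality of these two dimensions is therefore $\rk \mJ_{E'} = \rk \mJ_E$, which is the same as $\mJ_{\{(k,\ell)\}} \in \rowspan \mJ_E$. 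Combined with the previous paragraph, this gives the content of the theorem.

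The main obstacle I anticipate is coordinating the several genericity hypotheses at once: we need a single generic $\mA$ at which (a) both fibers $\calF$ and $\Omega'^{-1}(\Omega'(\mA))$ realize their generic dimension, (b) some lift $(\mU, \mV)$ with $\Upsilon(\mU, \mV) = \mA$ simultaneously realizes the generic Jacobian ranks of both $\mJ_E$ and $\mJ_{E'}$, and (c) $\pi_{k\ell}|_{\calF}$ sits over a generic point of its image. Each of these holds on a nonempty Zariski-open subset of the irreducible variety $\calM(m \times n, r)$ (or of its smooth parametrizing space), and finitely many nonempty Zariski-opens of an irreducible variety always intersect nontrivially, so a generic $\mA$ satisfies all of them at once. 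The resulting equivalence is therefore itself a generic property of $\mA$, and combining with Theorem \ref{Thm:single-entry-finite-generic} yields the claimed description of $\close_r(E)$.
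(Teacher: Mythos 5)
Your argument is essentially correct and lands on the same pivot as the paper's proof: finite completability of $(k,\ell)$ is equivalent to $\rk \mJ_{E\cup\{(k,\ell)\}} = \rk \mJ_E$, with Lemma \ref{Lem:generic-rank} supplying the genericity of that rank comparison. The route differs in presentation. The paper factors $\Omega\circ\Upsilon$ through the \emph{images} $\overline{\Omega_{E\cup\{(k,\ell)\}}(\calM)} \to \overline{\Omega_E(\calM)}$ and applies the constant rank theorem, so the Jacobian ranks appear directly as the dimensions of the two images and no gauge correction is needed. You instead work with the \emph{fibers} inside $\calM(m\times n,r)$, which forces you to subtract the $r^2$-dimensional $\mathrm{GL}_r$-gauge orbit; this is a harmless detour, and your explicit appeal to Chevalley for the finite-or-cofinite dichotomy of $\pi_{k\ell}(\calF)$ is actually a point where your write-up is more careful than the paper's purely local constant-rank argument. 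One soft spot to flag: the asserted equivalence between finiteness of $\pi_{k\ell}(\calF)$ and $\dim\calF = \dim\Omega'^{-1}(\Omega'(\mA))$ is not automatic for the \emph{global} fiber $\calF$, which could a priori have components far from $\mA$ of larger dimension on which $A_{k\ell}$ is constant; intersecting Zariski-open subsets of $\calM(m\times n,r)$, as in your final paragraph, does not by itself control this, because $\calF$ is not a generic subvariety --- only the base point $\Omega(\mA)$ is generic in the irreducible image $\overline{\Omega(\calM)}$. The clean fix is to phrase the dimension count on the image side, as a statement about the generic fiber of the surjection of irreducible varieties $\overline{\Omega_{E\cup\{(k,\ell)\}}(\calM)}\to\overline{\Omega_E(\calM)}$ (this is exactly what Corollary \ref{Cor:genericprops} in the appendix licenses), after which your rank computation goes through unchanged.
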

One implication of Theorem \ref{Thm:finite-completable-closure} is that linear
independence of subsets of rows of $\mJ_E$ is also a generic property.  (In fact,
the proof in Section \ref{sec:finite-proofs} goes in the other direction.)  The
combinatorial object that captures this independence is a matroid.
\begin{Def}\label{Def:determinantal-matoid}
Let $\mA$ be a generic rank $r$ matrix.
The \emph{rank $r$ determinantal matroid} is the linear matroid $(\calE,\rk_r)$, with
rank function
\(
\rk_r(E) = \rk\mJ_E
\).
\end{Def}
In the language of matroids, Theorem \ref{Thm:finite-completable-closure} says that, generically,
the finitely completable closure is equal to the matroid closure in the rank $r$ determinantal
matroid.  This perspective will prove profitable when we consider entry-by-entry algorithms
for completion in Section \ref{Sec:closure} and combinatorial conditions
related to finite completability in Section \ref{Sec:graphconditions}.

\subsection{Computing the finite closure}\label{sec:completable-closure-algorithm}
We describe, in pseudo-code, Algorithm~\ref{Alg:cclosure} which computes the finite closure of $E$. An algorithm for testing whether a single entry $(k,\ell)$ is finitely completable is easily obtained by only testing the entry $(k,\ell)$ in step~\ref{Alg:cclosure.step4}. The correctness of  Algorithm \ref{Alg:cclosure} follows from Theorem \ref{Thm:finite-completable-closure} and the fact that, if we sample $\mU$ and $\mV$
from any continuous density, with probability one, we obtain generic $\mU$ and $\mV$.
\begin{algorithm}[ht]
\caption{\label{Alg:cclosure} Completable closure.\newline
\textit{Input:} A set $E\subset \calE$ of observed positions. \newline % 
\textit{Output:} The rank $r$ completable closure $\close_r(E)$. }
\begin{algorithmic}[1]
\State \label{Alg:cclosure.step1} Sample $\mU\in\R^{m\times r},\mV\in\R^{n\times r}$
from a continuous density.
\State \label{Alg:cclosure.step2} Compute the Jacobian matrix $\mJ_E(\mU,\mV)$.
\State \label{Alg:cclosure.step3} Compute the singular value
decomposition of $\mJ_E(\mU,\mV)$. Let $\mV_E$ be the right singular vectors
corresponding to singular values greater than $10^{-12}$.
\State \label{Alg:cclosure.step4} For each $e\in\calE\backslash E$, compute the
projection of $\mJ_{\{e\}}(\mU,\mV)\in\R^{r(m+n)}$ on the subspace
spanned by $\mV_E$. Let the Euclidean norm of the residual of the
projection be $r_{e}$; let $r_{e}=0$ for $e\in E$.
\State \label{Alg:cclosure.step5} Return
$\close_r(E):=\{(i,j)\in\calE\;;\; r_{e}\leq 10^{-8}\}$.
\end{algorithmic}
\end{algorithm}
\begin{Rem}\label{Rem:RAM}
We have presented Algorithm \ref{Alg:cclosure} as a numerical routine based on SVD.
It is also strongly polynomial time in the RAM model. The key observation is that all
of our computations estimate only the \emph{rank} of a matrix, which is a polynomial
identity testing problem.  By Proposition \ref{Prop:detvarprops}, the rank
of $\mJ_E$ is never higher
than $d_r(m,n)$. If $r\ll n + m$, then $d_r(m,n) = O(n + m)$, so the
the rank of $\mJ_E$ is detected by a minor of degree $O(n + m)$.  The main result of
\cite{S80} then implies that, if we sample the entries of $\mU$ and $\mV$ uniformly from a
field $\Z_p$ of prime order $p\approx (n+m)^2$, with probability $1 - O(1/(n + m))$ we obtain the
generic rank of $\mJ_E$ and each $\mJ_{E\cup \{(i,j)\}}$, using, e.g., Gaussian
elimination.
\end{Rem}

\subsection{Proofs}\label{sec:finite-proofs}

\subsubsection{Proof of Theorem~\ref{Thm:finite-completable-closure}}\label{Sec:finite-proof}
Let $(i,j)\in \calE\setminus E$.  Factor the map $\Omega\circ\Upsilon$ into
\begin{equation*}
\C^{r(m+n)} \xrightarrow{\Upsilon} \calM(m\times n,r)
\xrightarrow{f} \C^{\card{E} + 1} \xrightarrow{g} \C^{\card{E}}
\end{equation*}
so that $f$ is the projection of $\Upsilon(\mU,\mV)$ onto the set of entries at positions
$E\cup \{(i,j)\}$ and $g$ then projects out the coordinate corresponding to $(i,j)$.
Lemma \ref{Lem:generic-rank} implies that, since $(\mU,\mV)$ is generic,
all the intermediate image points are smooth. The constant rank theorem then
implies that we can find open neighborhoods
$f(\calM(m\times n,r))\supset M\ni f(\Upsilon(\mU,\mV))$ and
$g\circ f(\calM(m\times n,r)) \supset N\ni g(f(\Upsilon(\mU,\mV)))$ such that
the restriction of $g$ to $M$ is smooth and $g^{-1}(N) \subset M$.  The
constant rank theorem then implies that we have
\[
\dim\left(g^{-1}(N)\right) + \dim N = \dim M.
\]
Since, again using smoothness,
\[
\dim M = \dim\left(g(f(\Upsilon(\mU,\mV)))\right)=\rk\left(\mJ_E(\mU,\mV)\right)
\]
and
\[
\dim N = \dim\left(f(\Upsilon(\mU,\mV))\right)=\rk\left( \mJ_{E\cup\{i,j\}}(\mU,\mV)\right).
\]
the position $(i,j)$ is finitely completable from $E$ and $\Upsilon(\mU,\mV)$,
if and only if
\begin{equation}\label{Eq:equal-rank}
\rk\left(\mJ_E(\mU,\mV)\right)=\rk\left(\mJ_{E\cup\{i,j\}}(\mU,\mV)\right)
\end{equation}
Equation \eqref{Eq:equal-rank} is just the assertion that $\mJ_{\{(i,j)\}}\in\rowspan \mJ_E$.

By Lemma \ref{Lem:generic-rank}, Equation \eqref{Eq:equal-rank} is a generic
statement, independent of $\mA$, $\mU$ and $\mV$.
Because the rows of $\mJ_E$ and $\mJ_{\{(i,j\})}$ have non-zero columns
only at positions depending on $E$ and $(i,j)$, whether \eqref{Eq:equal-rank}
holds does not depend on $m$ and $n$ (which are, by hypothesis, large enough).

Finally, statement that finite completability is the same for $\K=\R$ and $\K=\C$ follows
from Theorem~\ref{Thm:genreal} in the appendix.
\hfill $\qed$

\subsubsection{Proof of Theorem \ref{Thm:single-entry-finite-generic}}
The theorem follows directly from Theorem~\ref{Thm:finite-completable-closure} and the definition of closure.
\hfill $\qed$

\subsection{Discussion}
The kernel of $\mJ_E$ spans the space of infinitesimal deformations of
$(\mU,\mV)$ that preserve $\Omega\circ\Upsilon(\mU,\mV)$.  Because
generic points are smooth, \cite[Curve Selection Lemma]{M68} implies
that every  infinitesimal deformation can be integrated to a
finite deformation. Conversely (this is the harder direction)
every curve in $(\Omega\circ \Upsilon)^{-1}(\mA)$ through $(\mU,\mV)$ has,
as its tangent vector a non-zero infinitesimal deformation.
At non-generic points, this equivalence does not
hold, so the arguments here require genericity and smoothness in an essential
way.

The finite identifiability statements in this section are instances
of a more general phenomenon, which is explored in \cite{KRT13}.
The results there  imply similar identifiability results, such as \cite{HKL12,AMR09,BS85},
that use criteria based on a Jacobian, and also show that our use of the ``$\Upsilon$''
parameterization of $\calM(m\times n,r)$ is not essential.

Another connection is that, since permuting the rows and columns of a matrix
preserves its rank, we get:
\begin{Cor}\label{Cor:graph-matroid}
The rank function $\rk_r(\cdot)$ of the determinantal matroid depends only on the
graph isomorphism type of the graph associated with $E$.
\end{Cor}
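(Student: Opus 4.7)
The plan is to show that $\rk_r(\cdot)$ is invariant under the group action on $\calE$ induced by permuting rows and columns of $\mA$, and (when $m = n$) by transposing $\mA$. Since these operations generate all bipartite isomorphisms of $G(E)$, that suffices.

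First, a bipartite isomorphism $G(E)\to G(E')$ preserving the two sides is exactly a pair $(\sigma,\tau)\in S_m\times S_n$ with $E' = \{(\sigma(i),\tau(j)):(i,j)\in E\}$. The map $\mA\mapsto \mP_\sigma\mA\mP_\tau^\top$ is a linear automorphism of $\C^{m\times n}$ that preserves rank, so it restricts to an automorphism of $\calM(m\times n,r)$; via $\Upsilon$, it is realized by $(\mU,\mV)\mapsto(\mP_\sigma\mU,\mP_\tau\mV)$, which carries generic points to generic points. A direct computation using~\eqref{Eq:jacobian-row} then shows that $\mJ_{E'}(\mP_\sigma\mU,\mP_\tau\mV)$ is obtained from $\mJ_E(\mU,\mV)$ by reindexing the rows via $(i,j)\mapsto(\sigma(i),\tau(j))$ and permuting the $\vu$-column blocks of~\eqref{eq:jacobian} by $\sigma$ and the $\vv$-column blocks by $\tau$. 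Row and column permutations do not change the rank, and both matrices have the generic rank by Lemma~\ref{Lem:generic-rank}, so $\rk_r(E)=\rk_r(E')$.

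For the case $m=n$ where an isomorphism may additionally swap the two sides, I would use that $\mA\mapsto\mA^\top$ corresponds under $\Upsilon$ to $(\mU,\mV)\mapsto(\mV,\mU)$. The analogous bookkeeping with~\eqref{Eq:jacobian-row} -- now exchanging the roles of the $\vu$- and $\vv$-blocks and reindexing positions from $(i,j)$ to $(j,i)$ -- relates $\mJ_{E^\top}(\mV,\mU)$ to $\mJ_E(\mU,\mV)$ by a row and column permutation (essentially the permutation $\mP$ of Lemma~\ref{Lem:kronecker}), and the same conclusion follows.

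The only real obstacle is careful bookkeeping through the Kronecker block structure in~\eqref{eq:jacobian}: one has to verify that relabeling rows and blocks genuinely produces the Jacobian of the permuted problem, rather than some unrelated row-selection. No additional algebraic or geometric content beyond Lemma~\ref{Lem:generic-rank} is needed.
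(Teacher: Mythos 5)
Your proposal is correct and takes essentially the same route as the paper, which justifies the corollary in a single clause --- ``since permuting the rows and columns of a matrix preserves its rank'' --- without writing out a proof; you have simply made explicit the induced action of $(\sigma,\tau)$ on $(\mU,\mV)$ and on the rows and column blocks of $\mJ_E$, plus the transpose symmetry. No further comparison is needed.
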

In Section \ref{Sec:graphconditions}, we consider completability as a property of \emph{graphs}.
This relies on Corollary \ref{Cor:graph-matroid}.

\section{Unique completability}
\label{Sec:uniquecompletability}

In this section, we will address the question \emph{``How many possible completions are there for the entry $(i,j)$?''}. In section~\ref{sec:completable-closure-basics}, it was shown that whether the \emph{entry} $(i,j)$ is completable depends (under mild assumptions) only on the \emph{position} $(i,j)$, the observed entries, and the rank. In this section, we show an analogue result that the same holds for the number of possible completions as well. Whether there is exactly one solution is of the most practical relevance, and we give a sufficient condition for unique completability.

\subsection{Unique completability as a property of the positions}

We start by defining what it means for one entry to be uniquely completable:

\begin{Def}\label{Def:uniquely-completable}
Let $E\subset \calE$ be a set of observed positions and $\mA$ be a rank $r$
true matrix. The entry $A_{k\ell}$ at position $(k,\ell)\in \calE\setminus E$ is called
\emph{uniquely completable} from the entries $A_{ij},(i,j)\in E$, if $A_{k\ell}$ is uniquely determined by the $A_{ij}, (i,j)\in E$.
\end{Def}

The main theoretical statement for unique completability is an analogue to the main theorem for finite completability; again, whether an entry is uniquely completable, depends only on the positions of the observations, assuming the true matrix is generic.

\begin{Thm} \label{Thm:uniquecomp}
Let $\mA\in\C^{m\times n}$ be a generic $(m\times n)$-matrix of rank $r$, and consider a masking where the entries $A_{ij}$ with $(i,j)\in E\subseteq [m]\times [n]$ are observed. Let $(k,\ell)\in [m]\times [n]$ be arbitrary. Then, whether $A_{k\ell}$ is uniquely completable from the $A_{ij},(i,j)\in E$ depends only on the position $(k,\ell)$, the true rank $r$ and the observed positions $E$ (and not on $\mA$, $m$ or $n$).
\end{Thm}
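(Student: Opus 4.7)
My plan is to lift the Jacobian-rank argument of Theorem~\ref{Thm:single-entry-finite-generic} to a statement about the \emph{degree} of a projection between two irreducible varieties. The first step is an easy reduction: unique completability implies finite completability, so if $(k,\ell)\notin\close_r(E)$, Theorem~\ref{Thm:single-entry-finite-generic} already certifies that $A_{k\ell}$ is not uniquely completable for generic $\mA$, and this conclusion depends only on $(r,E,(k,\ell))$. Hence it suffices to treat the case $(k,\ell)\in\close_r(E)$.

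For this case I would set up two polynomial maps on $\calM(m\times n,r)$, namely $\Omega:\mA\mapsto(A_{ij})_{(i,j)\in E}$ and $\Omega_+:\mA\mapsto(A_{ij})_{(i,j)\in E\cup\{(k,\ell)\}}$, and let $Y=\overline{\Omega(\calM(m\times n,r))}$, $Y_+=\overline{\Omega_+(\calM(m\times n,r))}$ be the Zariski closures of their images, with $p:Y_+\to Y$ the coordinate projection forgetting the $(k,\ell)$ entry. Both $Y$ and $Y_+$ are irreducible by Proposition~\ref{Prop:detvarprops}(i), and by Theorem~\ref{Thm:finite-completable-closure} the map $p$ is dominant with generically finite fibers; in particular $\dim Y_+=\dim Y$. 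Invoking the standard generic-fiber theorem for dominant morphisms between irreducible complex varieties yields a dense open $U\subseteq Y$ and an integer $N$ with $|p^{-1}(y)|=N$ for every $y\in U$. For generic $\mA$, the image $\Omega(\mA)$ lies in $U$, since $\Omega$ is dominant and the preimage of a dense open set under a dominant morphism of irreducibles is dense open in the source. So the number of admissible values of $A_{k\ell}$ given $\Omega(\mA)$ is exactly $N$, and unique completability for generic $\mA$ is equivalent to $N=1$, which is an intrinsic invariant of $p$.

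To argue that $N$ does not depend on $m$ and $n$, I would observe that adjoining rows or columns not touched by $E\cup\{(k,\ell)\}$ enlarges $\calM(m\times n,r)$, $Y$, and $Y_+$ by a common direct-product factor parametrizing the new free entries, and $p$ acts as the identity on that factor. This mirrors the observation in the proof of Theorem~\ref{Thm:finite-completable-closure} that the nonzero columns of $\mJ_E$ and $\mJ_{\{(k,\ell)\}}$ are confined to the rows and columns touched by $E\cup\{(k,\ell)\}$.

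The main obstacle I anticipate is establishing the existence of a well-defined generic fiber cardinality $N$. This rests essentially on irreducibility of $\calM(m\times n,r)$, which passes to irreducibility of $Y$ and $Y_+$, together with generic smoothness of $p$ at points of $U$. Without irreducibility, several components of $Y_+$ could project onto $Y$ with different degrees and the ``number of completions'' would no longer be a single integer invariant. Once these ingredients are in place, the rest is routine algebraic geometry of dominant morphisms, in the same spirit as the constant-rank theorem argument used for Theorem~\ref{Thm:finite-completable-closure}.
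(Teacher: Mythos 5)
Your proposal is correct and follows essentially the same route as the paper: the paper's proof also considers the coordinate projection from the (irreducible) image of the masking on $E\cup\{(k,\ell)\}$ to that on $E$ and invokes the constancy of the generic fiber cardinality (Corollary~\ref{Cor:genericprops}, resting on Proposition~\ref{Prop:irreducible}). Your write-up is in fact somewhat more complete than the paper's terse version, since you also handle the non-finitely-completable case separately and spell out why the invariant is independent of $m$ and $n$.
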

The proof of Theorem~\ref{Thm:uniquecomp} is a bit more technical than its finite completability analogue, Theorem~\ref{Thm:single-entry-finite-generic}. The main problem is that the constant rank theorem cannot be applied since the latter is a local statement only and does not make say anything about the global number of solutions. The proper tools to overcome that are found in algebraic geometry; a complete proof is deferred to section~\ref{Sec:uniquecompletability.proofs}. The proof we give also shows that there is an analoge statement for the total number of possible completions, even if there is more than one. Since the number of completions over the reals can potentially change even with generic $\mA$, the result is stated only over the complex numbers.

Theorem~\ref{Thm:uniquecomp} shows that it makes sense to talk about positions instead of entries that are uniquely completable, in analogy to the finite case; moreover, it shows that there is a biggest such set:

\begin{Def}
Let $E\subseteq [m]\times [n]$ be the set of observed positions, and let $(k,\ell)\in [m]\times [n]$ be a position. We will call $(k,\ell)$ uniquely completable if $A_{k\ell}$ is uniquely completable from $A_{ij},(i,j)\in E$ for a generic matrix $\mA\in\K^{m\times n}$ of rank $r$.

Furthermore, we will denote by $\uclose_r (E)$ the inclusion-wise maximal set of positions such that every index $(k,\ell)\in \uclose_r (E)$ is uniquely completable from $E$. We will call the $\uclose_r (E)$ \emph{unique closure} of $E$ in rank $r$.
\end{Def}

As for finite completability, we can check generic unique completability of a position by testing a random $\mA$. However, we don't have an analogue for the Jacobian $\mJ_E$ that exactly characterizes unique completability. One could, of course, use general Gr{\"o}bner basis methods, but these are computationally impractical. In the next section, we describe an easy-to-check sufficient condition for unique completability in terms of the Jacobian.

\subsection{Characterization by Jacobian stresses}
As for the case of finite completability, the Jacobian of the masking can be used to provide algorithmic criteria to determine whether an entry is uniquely completable. The characterizing objects will be the so-called
\emph{stresses}, dual objects to the column space of the Jacobian. Intuitively, they correspond to
infinitesimal dual deformations. \citet[Equation~3.7]{Sin10} have conjecturally defined a similar concept.

Mathematically, stresses are left kernels of the Jacobian:

\begin{Def}\label{Def:stress}
A rank-$r$ \emph{stress} of the matrix $\mA= \mU \mV^\top$ is a matrix $\mS\in\C^{m\times n}$ whose vectorization is in the left kernel of the Jacobian $\mJ (\mU,\mV)$; that is,
$$\vect \mS\cdot \mJ (\mU,\mV) = 0.$$
Let $E\subseteq [m]\times [n]$ be a set of observed entries. A stress $\mS$ such that
$\mS_{ij} = 0$ for all $(ij)\not\in E$ is called $E$-\emph{stress} of $\mA$.

The $\C$-vector space of $E$-stresses of $\mA$ will be denoted by $\Psi_\mA (E),$ noting that it does not depend on the choice of $\mU,\mV$. % 
\end{Def}

Note that $E$-stresses are, after vectorization and removing zeroes, in the left kernel of the partial Jacobian $\mJ_E$.

The central property of the stress which allows to test for unique completability is its rank as a matrix:

\begin{Def}\label{Def:stressrank}
Let $E\subseteq [m]\times [n]$ be as set of observed entries. We define the \emph{maximal $E$-stress rank}
of $\mA$ in rank $r$ to be
$$\rho_\mA (E) = \max_{\mS\in \Psi_\mA (E)} \rk \mS .$$
\end{Def}

As for the rank of the Jacobian, the dependence on $\mA$ can be removed for generic matrices:
\begin{Prop}\label{Prop:genstressrank}
Let $\mA$ be a generic $(m\times n)$-matrix of rank $r$. The maximal stress rank
$\rho_\mA (E)$ depends only on $E$ and $r$. In particular, $\rho_\mA(E)$ does not depend on the entries of $\mA$.
\end{Prop}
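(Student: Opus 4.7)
The plan is to realize $\rho_\mA(E)$ as a Zariski lower semi-continuous, integer-valued function on the parameter space $\C^{r(m+n)}$, and then use irreducibility of that space to force the function to be generically constant.

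First, by Lemma \ref{Lem:generic-rank} there is a Zariski-open dense subset $W \subseteq \C^{r(m+n)}$ on which $\rk \mJ_E(\mU,\mV)$ is equal to its generic value. Since $\Psi_{\Upsilon(\mU,\mV)}(E)$ is canonically identified (via zero-extension outside $E$) with the left kernel of $\mJ_E(\mU,\mV)$, its dimension is constantly equal to some integer $d$ on $W$. Standard linear algebra then makes the rule $(\mU,\mV) \mapsto \Psi_{\Upsilon(\mU,\mV)}(E)$ into a morphism $\Psi \colon W \to \Gr(d, \C^{m\times n})$ whose image lies in the closed subvariety of subspaces consisting of matrices supported on $E$.

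Next, for each integer $k$ I would show that the locus $U_k \subseteq \Gr(d, \C^{m\times n})$ of $d$-dimensional subspaces containing a matrix of rank $\ge k$ is Zariski-open. Its complement consists of subspaces $V$ entirely contained in the rank-$(k-1)$ determinantal variety $\calM(m\times n, k-1)$. Working locally in a trivialization with a basis $\mS_1, \ldots, \mS_d$ of $V$, this containment is the assertion that every $k \times k$ minor of the generic linear combination $\sum_i t_i \mS_i$ vanishes identically in the $t_i$. Extracting coefficients of the resulting polynomial in $t$ gives a finite system of polynomial conditions on the entries of the basis matrices (equivalently, on the Plücker coordinates of $V$), so the complement of $U_k$ is closed. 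Pulling back, $X_k := \Psi^{-1}(U_k) \subseteq W$ is Zariski-open in $W$.

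Finally, Definition \ref{Def:stressrank} gives $\rho_{\Upsilon(\mU,\mV)}(E) \ge k$ iff $(\mU,\mV) \in X_k$, so $\rho_\mA(E)$ is Zariski lower semi-continuous on $W$. As $W$ is a non-empty Zariski-open subset of the irreducible affine space $\C^{r(m+n)}$, it is itself irreducible, so every non-empty Zariski-open subset of $W$ is dense; letting $k^*$ be the largest $k$ with $X_k \ne \emptyset$, the locus $X_{k^*}$ is open and dense in $W$, and $\rho_\mA(E) = k^*$ everywhere on it. The integer $k^*$ is defined purely in terms of $E$ and $r$, yielding the proposition. The main subtlety is the openness of $U_k$, which rests on the local-basis calculation above; once that is in hand, the irreducibility argument is formal.
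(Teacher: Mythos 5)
Your argument is correct and is essentially the paper's own proof written out in full: the Cramer's-rule parametrization of the left kernel of $\mJ_E$ that the paper invokes is exactly your local trivialization of the kernel bundle over the locus where $\rk \mJ_E$ attains its generic value, and the openness of your $U_k$ (via the coefficients in $t$ of the $k\times k$ minors of $\sum_i t_i\mS_i$) is the rank-semicontinuity step the paper compresses into ``similar to Lemma~\ref{Lem:generic-rank}.'' Your version is more careful about the maximization over the whole stress space, a detail the paper's one-line sketch elides, but the underlying mechanism is the same.
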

\begin{proof}
Let $\mA = \Upsilon(\mU,\mV)$.  By Cramer's rule, if $\mS\in \Psi_\mA(E)$, the
entries of $\mS$ are rational functions of the entries of $\mU$ and $\mV$.  After
clearing denominators, the proof is similar to that of Lemma~\ref{Lem:generic-rank}.
\end{proof}

We can therefore just talk about the generic $E$-stress rank, omitting again the dependence on the entries $\mA$:
\begin{Def}
Let $E\subseteq [m]\times [n]$ be as set of observed entries. We define the \emph{generic $E$-stress rank} $\rho (E)$ to be equal to $\rho_\mA (E)$ for generic $\mA$ or rank $r$.
\end{Def}

Our main theorem states that if the generic $E$-stress rank is maximal for finitely completable $E$, then $E$ is also uniquely completable:

\begin{Thm}\label{Thm:stressrkuniq}
Let $E\subseteq \calE$. If the generic $E$-stress rank in rank $r$ is $\rho (E) \ge \min (m,n) - r$, then
$\close_r(E) = \uclose_r(E)$.
\end{Thm}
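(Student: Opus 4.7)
The inclusion $\uclose_r(E)\subseteq\close_r(E)$ is immediate from Definitions~\ref{Def:uniquely-completable} and~\ref{Def:finitely-completable}, since uniqueness is a special case of finiteness. For the reverse inclusion my plan is to show that, under the maximal-stress-rank hypothesis, the column span of a generic $\mA\in\calM(m\times n,r)$ is already determined by its observations $\Omega(\mA)$; unique completability of every entry in $\close_r(E)$ will then follow by a short linear-algebra argument.

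Take $m\le n$ without loss of generality, so $\rho(E)=m-r$ is the largest value permitted by Lemma~\ref{Lem:kronecker}. Fix a generic $\mA=\mU\mV^\top\in\calM(m\times n,r)$ and a witnessing rank-$(m-r)$ $E$-stress $\mS\in\Psi_\mA(E)$, with factorization $\mS=\mathbf{X}\mathbf{Y}^\top$, where $\mathbf{X}\in\C^{m\times(m-r)}$ and $\mathbf{Y}\in\C^{n\times(m-r)}$ both have full column rank. The identities $\mU^\top\mS=0$ and $\mS\mV=0$ from Lemma~\ref{Lem:kronecker}, combined with the maximality of rank, force $\operatorname{col}(\mathbf{X})=\mU^\perp$, and hence the global matrix identity $\mathbf{X}^\top\mA=0$.

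The core step is to argue that $\operatorname{col}(\mathbf{X})$, a priori a function of $\mA$, is in fact a rational function of $\Omega(\mA)$. I would introduce the incidence variety of triples $(\mA',\mathbf{X}',\mathbf{Y}')\in\calM\times\C^{m\times(m-r)}\times\C^{n\times(m-r)}$ subject to $\mathbf{X}'^\top\mA'=0$, $(\mathbf{X}'\mathbf{Y}'^\top)_{ij}=0$ for $(i,j)\notin E$, and $\rk\mathbf{X}'=m-r$, and then argue that the composition of the projection to $\C^E$ (via $\Omega$) with the Grassmannian map $\mathbf{X}'\mapsto\operatorname{col}(\mathbf{X}')$ admits a rational section on a dense open subset of $\Omega(\calM)$. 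Granted this, any $\mA'\in\Omega^{-1}(\Omega(\mA))\cap\calM$ generic in its fiber satisfies $\operatorname{col}(\mA')=\operatorname{col}(\mU)$, and each entry $(k,\ell)\in\close_r(E)$ is then pinned down uniquely by the linear system expressing $\mA'_{:,\ell}$ in a basis of $\operatorname{col}(\mU)$ against its observed coordinates in column $\ell$ (which must number at least $r$ for any column containing a position in $\close_r(E)$).

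The hard part will be the rationality claim itself, since $\Psi_\mA(E)$ is built directly from $\mU$ and $\mV$ and its dependence on $\mA$ does not a priori factor through $\Omega$. A natural alternative, modeled on the rigidity-theoretic results of Connelly and Gortler--Healy--Thurston, is to bypass rationality entirely and use $\mS$ to construct an analog of the ``equilibrium stress quadratic form'' on the affine fiber $\Omega^{-1}(\Omega(\mA))$: the maximality of rank would then force its zero locus to coincide with $\{\mA\}$ on every position of $\close_r(E)$, giving global rigidity of the fiber directly. Either approach draws on the irreducibility and generic smoothness of $\calM$ (Prop.~\ref{Prop:detvarprops}) and, at the end, appeals to Theorem~\ref{Thm:genreal} to transport the conclusion from $\C$ to arbitrary $\K$.
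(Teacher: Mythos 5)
Your overall strategy matches the paper's: use a maximal-rank $E$-stress together with the kernel identities $\mU^\top\mS=0$, $\mS\mV=0$ to pin down the row and column spans, then finish with linear algebra on columns/rows having at least $r$ observations. But the step you flag as ``the hard part'' --- showing that the stress, or equivalently $\operatorname{col}(\mathbf{X})$, is determined by $\Omega(\mA)$ rather than by $\mA$ itself --- is precisely the load-bearing step, and you do not prove it. Without it, the identity $\mathbf{X}^\top\mA=0$ only constrains the one matrix $\mA$ you started from; it says nothing about any \emph{other} completion $\mA'\in\Omega^{-1}(\Omega(\mA))$, so no uniqueness follows. Neither of your two sketched routes (a rational section of the incidence variety over $\C^E$, or a Connelly-style stress quadratic form) is carried out, so as written the argument has a genuine gap exactly where the content lies.

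The paper closes this gap with a short, more elementary argument (Lemma~\ref{Lem:stress-invariance}): because $\Omega(\mA)$ is a regular value of $\Omega\circ\Upsilon$ at a generic point, the Inverse Function Theorem gives a local diffeomorphism $f$ between neighborhoods of any two fiber points $(\mU,\mV)$ and $(\mU',\mV')$, and the chain rule yields $(\mJ_E)_{(\mU',\mV')}=(\mJ_E)_{(\mU,\mV)}\cdot df^{-1}$ with $df$ nonsingular; hence the left kernels of $\mJ_E$ at the two points coincide and the \emph{same} matrix $\mS$ is an $E$-stress at every point of the fiber. Combined with Lemma~\ref{Lem:UVkernel}, every completion $\mA'=\mU'\mV'^\top$ then satisfies $\mU'^{\top}\mS=0$ and $\mS\mV'=0$, and the maximal rank $\rho(E)=\min(m,n)-r$ forces all completions to share the same row and column spans --- no rationality over the Grassmannian or incidence-variety construction is needed. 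I recommend replacing your ``core step'' with this fiber-invariance argument; your concluding linear-algebra step (columns meeting $\close_r(E)$ have at least $r$ observed entries, cf.\ Theorem~\ref{Thm:r-r+1-core}) is then fine as stated.
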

We defer the somewhat technical proof to section~\ref{Sec:uniquecompletability.proofs}.

\subsection{Computing the generic stress rank}

Theorem~\ref{Thm:stressrkuniq} implies that the generic stress rank $\rho(E)$ can be used to certify unique completability of an observation pattern $E$. We explicitly describe the necessary computational steps in Algorithm~\ref{Alg:stressrank}.

\begin{algorithm}[ht]
\caption{\label{Alg:stressrank} Generic stress rank.\newline
\textit{Input:} Observed positions $E\subseteq \calE$.
\textit{Output:} The generic stress rank $\rho (E)$ of $E$ in rank $r$. }
\begin{algorithmic}[1]
\State \label{Alg:stressrank.step1} Randomly sample $\mU\in\R^{m\times r},\mV\in\R^{n\times r}$.
\State \label{Alg:stressrank.step2} Compute $\mJ_E (\mU,\mV)$ with rows
$\mJ_{(i,j)} := (\ve_i \otimes \vv_j^\top, \ve_j \otimes \vu_i^\top),$\\
where $\vu_i$ is the $i$-th row of $\mU$, and $\vv_j$ the $j$-th row of $\mV$.
\State \label{Alg:stressrank.step3} Compute a random vector $\mS\in\R^{\card{E}}$ in the left kernel of $\mJ_E$. Reformat $\mS$ as $(m\times n)$ matrix, where entries with index not in $E$ are zero, and the remaining indices correspond to the row positions in $\mJ_E$.
\State \label{Alg:stressrank.step4} Output $\rho (E) = \rk(\mS).$
\end{algorithmic}
\end{algorithm}

As the algorithm for finite completion, it uses a randomized strategy which allows to compute over the real numbers instead of a field of rational functions by substituting a generic entry. Steps~\ref{Alg:cclosure.step1} and the beginning of step~\ref{Alg:cclosure.step2} are thus analogous as in Algorithm~\ref{Alg:cclosure}. In step~\ref{Alg:cclosure.step2}, the completion matrix $\mJ_E$ is computed, evaluated at the matrices $(\mU,\mV)$. In~\ref{Alg:cclosure.step3}, an evaluated stress $\mS$ is obtained in the left kernel of $\mJ_E$. Its rank, which is computed in step~\ref{Alg:stressrank.step4}, will be the generic stress rank.
Correctness (with probability one) is implied by Proposition~\ref{Prop:genstressrank}.
Also, similar to Algorithm~\ref{Alg:cclosure}, Algorithm~\ref{Alg:stressrank} is a randomized algorithm for which considerations analogue to those in Remark~\ref{Rem:RAM} hold.

\subsection{Proofs}\label{Sec:uniquecompletability.proofs}

\subsubsection{Proof of Theorem~\ref{Thm:uniquecomp}}
\begin{proof}
Consider the algebraic map
\begin{align*}
g: (A_{k\ell}; A_{ij}, (i,j)\in E) \mapsto (A_{ij}, (i,j)\in E)
\end{align*}
By Proposition~\ref{Prop:irreducible} in the appendix, $\Omega$ is a surjective algebraic map of irreducible varieties. Therefore, the generic fiber cardinality $\card{ g^{-1} \circ g (x)}$ for generic $x\in\calX$ does not depend on $x$ by Corollary~\ref{Cor:genericprops}. In particular, whether $1 = \card{ g^{-1} \circ g (x)}$ or not.
\end{proof}

\subsubsection{Proof of Theorem~\ref{Thm:stressrkuniq}}
This sections contains the proof for Theorem~\ref{Thm:stressrkuniq} and some related results.

\begin{Lem}\label{Lem:UVkernel}
Let $\mS\in\C^{m\times n}$ be a stress w.r.t. $m,n,r,\mA=\mU\mV^\top$. Then,
$$\mU^\top \cdot \mS = 0\quad\mbox{and}\quad \mS \cdot \mV = 0$$
(where $0$ denotes the zero matrix of the correct size).
\end{Lem}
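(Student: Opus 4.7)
The plan is straightforward: this lemma is essentially a reformulation of (the first part of) Lemma~\ref{Lem:kronecker}, which has already been proved earlier in the paper. By Definition~\ref{Def:stress}, a stress is exactly a matrix $\mS$ whose vectorization $\vect(\mS)$ lies in the left null space of the Jacobian $\mJ(\mU,\mV)$, and Lemma~\ref{Lem:kronecker} characterizes such matrices as precisely those satisfying $\mU^{\top}\mS = 0$ and $\mS\mV = 0$. So the proof would be a one-line appeal to Lemma~\ref{Lem:kronecker}.

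If I wanted to give a self-contained argument rather than citing Lemma~\ref{Lem:kronecker}, I would re-derive the identities directly from the explicit block form of $\mJ$ in~\eqref{eq:jacobian}. The Jacobian has two vertical blocks: the columns corresponding to derivatives with respect to the $\vu_i$'s have the block-row structure $\mI_m \otimes \vv_j^{\top}$ (stacked over $j$), while the columns corresponding to derivatives with respect to the $\vv_j$'s form $\mI_n \otimes \mU$. Multiplying $\vect(\mS)$ against these two blocks, and keeping track of the column-major ordering of rows, I would show that the first block contributes $\vect(\mV^{\top}\mS^{\top})$ (after applying the permutation $\mP$ used in the proof of Lemma~\ref{Lem:kronecker}), and the second contributes $\vect(\mU^{\top}\mS)$. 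Setting both equal to zero yields the two stated identities.

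The only mild bookkeeping point is matching the row ordering of $\mJ$ (column-major in $(i,j)$) to the vectorization convention for $\mS$, which is exactly what the permutation $\mP$ takes care of in Lemma~\ref{Lem:kronecker}. There is no genuine obstacle here; the lemma really is a notational bridge from the stress definition in Section~\ref{Sec:uniquecompletability} to the kernel computation already performed in Section~\ref{Sec:background}, and both directions of the equivalence are established by Lemma~\ref{Lem:kronecker}.
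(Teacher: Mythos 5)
Your proposal matches the paper's own proof exactly: the paper also notes that by Definition~\ref{Def:stress} the vectorization of $\mS$ lies in the left kernel of $\mJ(\mU,\mV)$ and then cites Lemma~\ref{Lem:kronecker} to conclude. The additional self-contained derivation you sketch is a correct unpacking of that lemma but is not needed.
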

\begin{proof}
Since $\mS$ is an stress, it holds by definition that
$\vect \mS\cdot \mJ (\mU,\mV) = 0.$ The statement then follows from Lemma~\ref{Lem:kronecker}.
\end{proof}

Lemma~\ref{Lem:UVkernel} immediately implies a rank inequality:

\begin{Cor}
Let $E\subseteq [m]\times [n]$, assume the true matrix has full rank $r$. Then, it holds that
$\rho (E)\le \min (m,n) - r$
\end{Cor}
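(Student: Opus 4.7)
The plan is to apply Lemma~\ref{Lem:UVkernel} directly and translate its two matrix equations into rank bounds by the rank-nullity theorem. Let $\mS \in \Psi_\mA(E)$ be any $E$-stress of $\mA$; since an $E$-stress is in particular a stress, Lemma~\ref{Lem:UVkernel} applies and yields
\[
\mU^\top \mS = 0 \quad \text{and} \quad \mS \mV = 0.
\]

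First I would exploit the hypothesis that $\mA$ has full rank $r$: this means that in any factorization $\mA = \mU \mV^\top$ with $\mU \in \C^{m \times r}$ and $\mV \in \C^{n \times r}$, both factors have rank exactly $r$. The equation $\mU^\top \mS = 0$ then says that every column of $\mS$ lies in the left kernel of $\mU^\top$, which is a subspace of $\C^m$ of dimension $m - r$; hence $\rk(\mS) \le m - r$. Dually, $\mS \mV = 0$ means that every column of $\mV$ lies in the right kernel of $\mS$, so the rank of $\mS$ as a map $\C^n \to \C^m$ satisfies $\rk(\mS) \le n - r$. Combining the two bounds gives $\rk(\mS) \le \min(m,n) - r$.

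Because this inequality holds for \emph{every} $\mS \in \Psi_\mA(E)$, taking the maximum over the stress space yields $\rho_\mA(E) \le \min(m,n) - r$, and specializing to generic $\mA$ gives $\rho(E) \le \min(m,n) - r$ as claimed. There is essentially no technical obstacle here: the result is a direct corollary of Lemma~\ref{Lem:UVkernel} together with the full-rank assumption on $\mA$, and the only thing to double-check is that the factors $\mU, \mV$ inherit rank $r$ from $\mA$, which is immediate since any lower rank of either factor would force $\rk(\mA) < r$.
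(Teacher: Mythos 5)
Your proof is correct and follows essentially the same route as the paper: apply Lemma~\ref{Lem:UVkernel} and convert the kernel conditions into a rank bound via rank--nullity. In fact you are slightly more thorough, since the paper's proof explicitly works out only the $\mS\mV=0$ direction (giving $\rk\mS\le n-r$) and leaves the symmetric $\mU^\top\mS=0$ bound implicit, whereas you derive both and take the minimum.
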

\begin{proof}
Keep the notations of Lemma~\ref{Lem:UVkernel}. The statement Lemma~\ref{Lem:UVkernel} implies that for arbitrary $\mS$,
one has $\mS\cdot \mV = 0$. Since $\mV$ is a matrix of full rank $r$, this implies that the null space dimension of
$\mS$ is at least $r$, which is equivalent to the statement by the rank-nullity theorem.
\end{proof}
In keeping with our development of finite completability in terms of $\mJ_{E}$, we have defined stresses
in a way that might depend on the coordinates $(\mU,\mV)$.  In the proof of Theorem \ref{Thm:stressrkuniq},
we will check that this can be removed when necessary.
An alternative but probably less concise approach would be to express the matrix $\mJ_E$ directly in terms of the entries $\mA$.

\subsubsection*{Proof of Theorem \ref{Thm:stressrkuniq}}

We start with a general statement that stresses are invariant over the
pre-image $(\Omega\circ\Upsilon)^{-1}(\Omega(\mA))$.
\begin{Lem}\label{Lem:stress-invariance}
Let $\mA$ be generic, with $\Upsilon(\mU,\mV) = \mA$, $E\subset \calE$, and $\mS$ an $E$-stress.
Then $\mS$ is also an $E$-stress for any $(\mU',\mV')$ with $\Omega\circ\Upsilon(\mU',\mV') = \Omega(\mA)$
\end{Lem}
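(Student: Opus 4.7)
The plan is to reinterpret the $E$-stress condition intrinsically so that it becomes a property of the observation $\Omega(\mA)$ alone rather than of $\mA$ or its factorization $(\mU,\mV)$.  By Lemma~\ref{Lem:UVkernel}, $\mS$ is an $E$-stress at $(\mU,\mV)$ precisely when $\mS$ lies in the conormal space of $\calM(m\times n,r)$ at $\mA=\Upsilon(\mU,\mV)$ with respect to the standard bilinear form, i.e., $\iprod{\mS}{\mT}=0$ for every tangent vector $\mT = \delta\mU\,\mV^\top + \mU\,\delta\mV^\top \in T_\mA\calM$.  Since $\mS$ is supported on $E$, this pairing depends on $\mT$ only through its projection $\mT|_E = d\Omega_\mA(\mT)\in\C^{\card{E}}$.

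Next, I would invoke the semi-algebraic Sard argument already used in the proof of Lemma~\ref{Lem:generic-rank}: for $\mA$ generic, $\Omega|_\calM$ is a submersion at $\mA$, so $d\Omega_\mA(T_\mA\calM)$ is exactly the tangent space $T_{\Omega(\mA)}\overline{\Omega(\calM)}$ to the image variety at the smooth point $\Omega(\mA)$.  Combined with the previous paragraph, this gives an equivalence: $\mS$ is an $E$-stress at $(\mU,\mV)$ iff the restriction $\mS_E\in\C^{\card{E}}$ is a conormal vector of $\overline{\Omega(\calM)}$ at $\Omega(\mA)$.  The equivalence runs in both directions; in particular, any $E$-supported matrix whose $E$-coordinates form a conormal at $\Omega(\mA)$ is again an $E$-stress, since $\iprod{\mS}{\mT} = \iprod{\mS_E}{d\Omega_\mA(\mT)} = 0$ for all $\mT\in T_\mA\calM$.

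The conormal space at $\Omega(\mA)$ is a property of that point of $\overline{\Omega(\calM)}$ alone, so it does not change if we present $\Omega(\mA)$ as $\Omega(\mA')$ for a different $\mA'$ in the fiber.  Hence, for any $(\mU',\mV')$ with $\mA' := \Upsilon(\mU',\mV')$ of rank $r$ and $\Omega(\mA')=\Omega(\mA)$, applying the equivalence of the previous paragraph at $\mA'$ in place of $\mA$ shows that $\mS$ remains an $E$-stress at $(\mU',\mV')$.  The main obstacle is the case where $\mA'$ drops rank below $r$, since then $\mA'$ is not a smooth point of $\calM$ and the conormal identification need not apply as stated.  I would handle this by noting that $\mU'^\top\mS=0$ and $\mS\mV'=0$ cut out a Zariski-closed subset of $\C^{r(m+n)}$ which already contains the Zariski-dense open subset of $(\Omega\circ\Upsilon)^{-1}(\Omega(\mA))$ on which $\mA'$ has rank exactly $r$ and $\Omega$ is a submersion at $\mA'$; by closure, it must then contain the entire pre-image, yielding the lemma.
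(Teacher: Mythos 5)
Your proof is correct in its essentials but takes a genuinely different route from the paper's. The paper works ``upstairs'': since $\Omega(\mA)$ is a regular value of $\Omega\circ\Upsilon$, the two fiber points $(\mU,\mV)$ and $(\mU',\mV')$ have neighborhoods identified by a diffeomorphism commuting with $\Omega\circ\Upsilon$, so the chain rule shows the two Jacobians $\mJ_E$ differ by right multiplication by an invertible matrix and hence have the same left kernel. You instead work ``downstairs'': you identify an $E$-stress with a conormal vector of the image variety $\overline{\Omega(\calM)}$ at the point $\Omega(\mA)$, which is manifestly a property of the observed data alone and is therefore shared by every preimage. Your version is more conceptual and makes the invariance transparent; the paper's is shorter but leaves the existence of the commuting diffeomorphism implicit. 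Both hinge on the same genericity input, namely regularity of the value $\Omega(\mA)$, as supplied by the argument of Lemma~\ref{Lem:generic-rank}.

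The one step you should repair is the final closure argument. You assert that the locus of the fiber $(\Omega\circ\Upsilon)^{-1}(\Omega(\mA))$ on which $\Upsilon(\mU',\mV')$ has rank exactly $r$ and $\Omega$ is submersive is Zariski dense in the fiber. This is not justified and can fail: the fiber need not be irreducible, and when $E$ is small enough that $\Omega(\mA)$ also lies in $\Omega(\calM(m\times n, r-1))$, an entire irreducible component of the fiber may sit inside the rank-deficient locus, which the Zariski closure of the nice locus then never reaches. Fortunately the detour is unnecessary. The inclusion you actually need, namely that the column span of $\mJ_E(\mU',\mV')$ is contained in $T_{\Omega(\mA)}\overline{\Omega(\calM)}$, holds at \emph{every} point of the fiber: the image of the differential of a morphism always lands in the Zariski tangent space of the closure of the image, and that tangent space has the expected dimension because $\Omega(\mA)$ is a smooth point of $\overline{\Omega(\calM)}$ for generic $\mA$. (Alternatively, the regular-value property already forces $d(\Omega\circ\Upsilon)$ to have rank $\rk_r(E)$ at every point of the fiber, so the image of the differential there equals the tangent space.) Since your first two paragraphs show that $\mS_E$ annihilates $T_{\Omega(\mA)}\overline{\Omega(\calM)}$, it annihilates the column span of $\mJ_E(\mU',\mV')$ for every $(\mU',\mV')$ in the fiber; because $\mS$ is supported on $E$, this says exactly that $\vect(\mS)$ lies in the left kernel of the full Jacobian $\mJ(\mU',\mV')$, and Lemma~\ref{Lem:kronecker} converts this into $\mU'^{\top}\mS=0$ and $\mS\mV'=0$ with no rank or smoothness hypothesis on $\mA'$.
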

\begin{proof}
Let $(\mU',\mV')\in (\Omega\circ\Upsilon)^{-1}(\Omega\circ\Upsilon(\mU,\mV))$ be a point different
from $(\mU,\mV)$.  Because $\Omega(\mA)$ is a regular value of the composed map $\Omega\circ\Upsilon$,
the Inverse Function Theorem
provides diffeomorphic neighborhoods $M\ni (\mU,\mV)$ and $N\ni (\mU',\mV')$; let $f : M\to N$ be the
diffeomorphism.

By construction, $df$ is non-singular.  The chain rule then implies
that $(\mJ_{E})_{(\mU',\mV')} = (\mJ_{E})_{(\mU,\mV)}\cdot df^{-1}$, so the left kernels of both
Jacobians are the same. The definition of stress as a vector in the left kernel then proves the lemma.
\end{proof}

\begin{proof}[Proof of Theorem \ref{Thm:stressrkuniq}]
It is clear that $\close_r(E)\supseteq \uclose_r(E)$. Thus we show that
$\close_r(E)\subseteq \uclose_r(E)$.
By Lemma \ref{Lem:stress-invariance}, $\mS$ is a stress for any
$(\mU,\mV)$ that agrees with the observed entries $\Omega(\mA)$ on the
observed positions $E$. Then by Lemma \ref{Lem:UVkernel}, any such pair
$(\mU,\mV)$ must satisfy $\mU^\top\cdot\mS = 0$ and
$\mS\cdot\mV$. Since generically the stress has rank $\min(m,n)-r$,
these equations determine the row and column spans of $\mA$. Once the
row and column spans are fixed, any row or column with at least $r$
observed positions can be uniquely determined. On the other hand, any
row or column with fewer than $r$ observed positions cannot be recovered
(even if the row or column span is known). Therefore we have
$\close_r(E)\subseteq \uclose_r(E)$.
\end{proof}

\section{Local completion}
\label{Sec:closure}
In this section, we connect our theoretical results to the process of reconstructing the
missing entries.  In a nutshell, the idea is that a completable missing entry
$(i,j)\in \calE\setminus E$ is covered by at least one so-called \emph{circuit} in $E\cup\{(i,j)\}$, to which
we can associate \emph{circuit polynomials} which can be used to solve for $A_{ij}$
in terms of the observations, addressing the question \emph{``What is the value of the entry $(i,j)$?''}. Just as in theory where we could separate the reconstructability from the reconstruction, we can obtain a quantitative version of this separation by estimating the entry-wise reconstruction error without actually performing the reconstruction, allowing to give an answer to \emph{``How accurately can one estimate the entry $(i,j)$?''}. We give general algorithms for arbitrary rank, and a closed-form solution for rank one.

\subsection{Circuits as rank certificates}

We start with some concepts from matroid theory.
\begin{Def}\label{Def:circuit}
A set of observed positions $C\subseteq \calE$ is called a \emph{circuit} of rank $r$ if
$\rk_r (C) = \card{C} - 1$ and $\rk_r (S) = \card{S}$ for all proper subsets $S\subsetneq C$.
The graph $G(C)$ is called \emph{circuit graph} of rank $r$.
\end{Def}
A reformulation of Theorem \ref{Thm:single-entry-finite-generic}, in terms of circuits is the following.
\begin{Thm}
The position $(i,j)$ is finitely completable if and only if there is a circuit $C\subset E\cup \{(i,j)\}$
with $(i,j)\in C$.
\end{Thm}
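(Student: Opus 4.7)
The plan is to reduce the claim to a standard matroid-theoretic fact about closures and circuits, via the characterization of finite completability already established in Theorem~\ref{Thm:finite-completable-closure}. By that theorem, $(i,j)$ is finitely completable from $E$ if and only if $\mJ_{\{(i,j)\}} \in \rowspan \mJ_E$, i.e., $(i,j)\in \close_r(E)$ where the closure is taken in the rank $r$ determinantal matroid of Definition~\ref{Def:determinantal-matoid}. Since circuits in the determinantal matroid are exactly the sets $C \subseteq \calE$ defined in Definition~\ref{Def:circuit}, the statement reduces to: in any matroid, an element $e\notin S$ lies in $\close(S)$ if and only if there is a circuit $C \subseteq S \cup \{e\}$ with $e\in C$.

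For the backward direction, I would argue directly from the definition of circuit. If $C \subseteq E \cup \{(i,j)\}$ is a circuit with $(i,j)\in C$, then $C$ is minimally dependent; in particular $C \setminus \{(i,j)\}$ is independent and spans the same row space as $C$ in $\mJ$, so $\mJ_{\{(i,j)\}} \in \rowspan \mJ_{C\setminus \{(i,j)\}} \subseteq \rowspan \mJ_E$. Hence $(i,j)\in \close_r(E)$, which by Theorem~\ref{Thm:finite-completable-closure} means $(i,j)$ is finitely completable.

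For the forward direction, suppose $(i,j)\in \close_r(E)$. I would pick a basis $B \subseteq E$ of the matroid restricted to $E$, i.e., a maximal subset of $E$ with $\rk_r(B) = \card{B} = \rk_r(E)$. Then $B$ is independent and $\close_r(B) = \close_r(E) \ni (i,j)$, so $B \cup \{(i,j)\}$ is dependent and therefore contains some circuit $C$. Since $B$ itself is independent, every circuit contained in $B \cup \{(i,j)\}$ must contain $(i,j)$. Thus $C \subseteq B \cup \{(i,j)\} \subseteq E \cup \{(i,j)\}$ is the desired circuit.

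The main obstacle is really only packaging: both directions are standard once the determinantal matroid and its circuits are in place. The only subtle point worth flagging is that the statement implicitly assumes $(i,j) \notin E$ (consistent with the definition of $\close_r(E)$ in Section~\ref{sec:completable-closure-basics}); an observed position is trivially finitely completable and need not lie on any circuit, so this case should be separated or left to convention.
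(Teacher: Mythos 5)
Your proposal is correct and matches the paper's approach: the paper's entire proof is a citation to the standard matroid fact (Oxley, Lemma 1.4.3) that an element lies in the closure of a set if and only if some circuit through it is contained in the set plus that element, applied to the rank $r$ determinantal matroid via Theorem~\ref{Thm:finite-completable-closure}. You simply spell out the proof of that standard fact (correctly, including the observation that any circuit in $B\cup\{(i,j)\}$ with $B$ a basis must contain $(i,j)$), and your remark about the implicit assumption $(i,j)\notin E$ is a fair point the paper leaves to convention.
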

\begin{proof}
See \cite[Lemma 1.4.3]{Oxley}
\end{proof}
The connection to reconstructing missing entries is that every circuit comes with a unique
polynomial:
\begin{Thm}\label{Thm:circpoly}
Let $C\subseteq \calE$ be a circuit in rank $r$, $\Omega_C$ be the mask
corresponding to $C$, and $\mA\in\C^{m\times n}$. There is a unique, up to scalar multiplication, square-free polynomial $\theta_C$ such that:
$\theta_C(\Omega_C(\mA))=0$ if and only if there is $\mA'\in\calM(m\times n,  r)$ and $\Omega_C(\mA)=\Omega_C(\mA')$.
\end{Thm}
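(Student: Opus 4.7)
The plan is to identify $\theta_C$ with the defining polynomial of the Zariski closure $V := \overline{\Omega_C(\calM(m\times n,r))} \subseteq \C^{\card{C}}$ and then argue that this image coincides with $V$ set-theoretically.

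First I would establish that $V$ is an irreducible hypersurface. Irreducibility of $V$ follows from irreducibility of $\calM(m\times n,r)$ (Proposition~\ref{Prop:detvarprops}(i)) together with the standard fact that the Zariski closure of the image of an irreducible variety under a polynomial map is irreducible. For the dimension, the chain rule identifies $\mJ_C$ with the Jacobian of $\Omega_C\circ\Upsilon$, so Lemma~\ref{Lem:generic-rank} gives that the generic rank of $\mJ_C$ equals $\dim V$; the circuit hypothesis then yields $\rk_r(C) = \card{C}-1$, so $V$ is a hypersurface. The polynomial $\theta_C$ now drops out of commutative algebra: since $\C[x_e : e\in C]$ is a UFD, the height-one prime ideal $I(V)$ is principal and generated by a single irreducible polynomial, which is automatically square-free, and unique up to a nonzero scalar because any two generators of a principal ideal differ by a unit, which in $\C[x_e : e\in C]$ is a nonzero constant.

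It remains to prove the ``iff''. The direction ``$\mA'\in\calM(m\times n,r)$ with $\Omega_C(\mA')=\Omega_C(\mA)$ implies $\theta_C(\Omega_C(\mA))=0$'' is immediate, since $\Omega_C(\mA)\in\Omega_C(\calM(m\times n,r))\subseteq V$. The converse is the main obstacle: I must show that the constructible set $\Omega_C(\calM(m\times n,r))$ actually equals $V$ set-theoretically, not merely contains a Zariski-dense open subset of it (which is all that Chevalley's theorem on constructible images supplies a priori). My primary plan to upgrade this is to exploit that $\calM(m\times n,r)$ is an affine cone and $\Omega_C$ is linear: after projectivizing the matrix space the induced rational map becomes regular on the projectivization of $\calM(m\times n,r)$ away from a base locus contained in the low-dimensional set of rank-$r$ matrices supported entirely off $C$, and the image of a projective variety under a projective morphism is Zariski-closed by properness; deprojectivizing shows that $\Omega_C(\calM(m\times n,r))$ is itself a Zariski-closed cone in $\C^{\card{C}}$, hence equal to $V$. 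As a backup I would use circuit minimality directly: for any fixed $(k,\ell)\in C$, independence of $C\setminus\{(k,\ell)\}$ in the determinantal matroid makes the coordinate projection $V\to\C^{\card{C}-1}$ dominant and generically finite-to-one, and the valuative criterion applied along algebraic curves in $V$ lifts arbitrary points back to rank-$r$ matrices, completing the identification of $V$ with the image of $\Omega_C$.
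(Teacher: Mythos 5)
Your first two paragraphs are correct and reconstruct exactly what the paper outsources to \cite{DL87} and \cite{KRT13}: since $C$ is a circuit, the generic rank of $\mJ_C$ is $\card{C}-1$, which (by the chain rule and Lemma~\ref{Lem:generic-rank}) is the dimension of $V=\overline{\Omega_C(\calM(m\times n,r))}$; hence $V$ is an irreducible hypersurface, its vanishing ideal is principal in the UFD $\C[x_e:e\in C]$, and the square-free generator is unique up to a nonzero scalar. The paper offers only a citation at this point, so this part of your argument is a genuine service, and it is the same route the references take.

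The gap is in the last paragraph, and it cannot be repaired, because the set-theoretic claim you are trying to prove is false: $\Omega_C(\calM(m\times n,r))$ need not be Zariski-closed. Concretely, take $r=1$, $m=n=3$, and let $C=\{(1,1),(1,2),(2,2),(2,3),(3,3),(3,1)\}$ be a $6$-cycle; by Theorem~\ref{Thm:rankonecirc} this is a circuit with $\theta_C=A_{11}A_{22}A_{33}-A_{12}A_{23}A_{31}$. The point with $A_{11}=A_{23}=0$ and $A_{12}=A_{22}=A_{33}=A_{31}=1$ lies on $\Van(\theta_C)$, yet any $\mA'=\vu\vv^\top$ of rank at most one with $u_1v_2=1$ has $u_1\neq 0$, so $A'_{11}=u_1v_1=0$ forces $v_1=0$, contradicting $A'_{31}=u_3v_1=1$; approaching this point along $\Van(\theta_C)$ the factorizations $(\vu,\vv)$ necessarily escape to infinity. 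This is precisely why your primary plan fails: the induced map on $\mathbb{P}(\calM(m\times n,r))$ has a nonempty base locus (nonzero matrices of rank at most $r$ supported off $C$), so it is only a rational map, and properness of projective morphisms says nothing about the image of the open locus where a rational map is defined. The backup via the valuative criterion fails for the same reason: the curve in $\calM(m\times n,r)$ lying over a curve in $V$ need not have a limit in $\calM(m\times n,r)$, as the example shows. The statement that is actually true, that the cited references prove, and that the rest of the paper uses (e.g.\ in Section~\ref{Sec:proof-stress4circuit}) is $\Van(\theta_C)=\overline{\Omega_C(\calM(m\times n,r))}$; equivalently the ``only if'' direction holds off a proper closed subset of $\Van(\theta_C)$. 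Your hypersurface argument already delivers that corrected version with no additional work, so the right fix is to weaken the equivalence to the generic one rather than to look for a properness argument.
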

\begin{proof}
This follows indirectly from Theorem~1.1 in~\cite{DL87}, or from the discussion in section~5.2 of~\cite{KRT13}
\end{proof}

In other words, circuit polynomials minimally certify for the rank $r$ condition being fulfilled on the entries in $C$.
The simplest example of a circuit is an $(r+1)\times (r+1)$ rectangle in $\calE$.  The associated polynomial is
the determinant of an $(r+1)\times (r+1)$ minor of $\mA$.  Thus, Theorem \ref{Thm:circpoly}
is a generalization of the linear algebra fact that a matrix is rank $r$ if and only if all $(r+1)$-minors vanish.
\begin{Def}
We will call the polynomial $\theta_C$ from Theorem~\ref{Thm:circpoly} a \emph{circuit polynomial} associated to the circuit $C$. Understanding that there are an infinity up to multiplication with a scalar multiple, we will also talk about \emph{the} circuit polynomial when that does not make a difference.
\end{Def}

\begin{Rem}
The circuit polynomial can be interpreted algorithmically as follows: let $C\subseteq \calE$ be a circuit,
assume all entries but one in $C$ are observed, e.g., $(k,\ell)\in C$ is not observed and $E=C\setminus (k,\ell)$
is observed. Then, $\theta_C(\Omega_C(\mA))=\theta_C(A_{k\ell}, \Omega_E(\mA))$ can be interpreted as a polynomial in the one
unknown $A_{k\ell}$. That is, the circuit polynomial allows to solve entry-wise for single missing entries.
\end{Rem}

\begin{Def}
Fix some set of observed entries $E\subseteq \calE$. A circuit $C\subseteq \calE$ is called \emph{completing}
for the observations in $E$, or with respect to $E$, if $\card{C\cap E} \ge \card{C}-1$.
\end{Def}

\subsection{Completion with circuit polynomials}\label{sec:circuits-closure}

The circuit properties inspire a general solution strategy.
\begin{algorithm}[ht]
\caption{\label{Alg:compcirc} Completion with circuits.\newline
\textit{Input:} A set $E\subset \calE$ of observed positions. \newline
\textit{Output:} Estimates for the entries $\close_r(E)\setminus E$}
\begin{algorithmic}[1]
\Repeat
\State \label{Alg:compcirc.step1} Find an unobserved entry
$(k,\ell)\in \close_r(E)\setminus E$,
\State \label{Alg:compcirc.step2} Find the set $\calC = \{C_1,\ldots,C_t\}$
of all circuits (w.r.t. $E$) containing $(k,\ell)$.
\State \label{Alg:compcirc.step3} Compute the circuit polynomials $\theta_{C_i}$.
\State \label{Alg:compcirc.step4} Substitute the entries $\{A_{ij} : (i,j)\in E\}$
into the $\theta_{C_i}$ to get a family of polynomials in the variable $A_{k\ell}$
and find a solution $A_{k\ell}$ common to all of them.
\State \label{Alg:compcirc.step5} $E\leftarrow E\cup (k,\ell)$
\Until $E=\close_r(E)$.
\end{algorithmic}
\end{algorithm}
In general, Algorithm \ref{Alg:compcirc} is ineffective, in the sense that Step \ref{Alg:compcirc.step3}
is unlikely to have a sub-exponential time algorithm in the general case.  However, there is a specific
instance in which it is effective: when the circuit $C$ is always an $(r+1)\times (r+1)$ rectangle.
In this case, the circuit polynomial is the corresponding $(r+1)\times
(r+1)$ minor. This means that enumerating all the circuits through
$(i,j)$ is not necessary, because a minor is linear in the unknown entry $A_{k\ell}$.

A practical algorithm for computing the closure of a mask $E$ and
recovering the corresponding entries based on $(r+1)\times (r+1)$ minors
is given in Algorithm~\ref{Alg:minor}. In Step 5, $N(j)$ and $N(i)$
denote the set of neighbors of vertices $j\in W$ and $i\in V$,
respectively. In Step 10, $A_{I',J'}^{+}$ denotes the Moore-Penrose
pseudoinverse of $A_{I',J'}$.
Intuitively, the algorithm iterates over missing
edges and look if there is a $(r+1)\times(r+1)$ biclique in the union of
current set of edges $E_k$ and $(i,j)$.
If such a biclique exists, then the edge $(i,j)$ is added to
$E_{k+1}$ so that the edge is used in the next round. The iteration
terminates when there is no more edge to add.

\begin{algorithm}
\caption{\texttt{MinorClosure}$((V,W,E),r)$\newline
\textit{Inputs:} bipartite graph $(V,W,E)$, rank $r$.\newline
\textit{Outputs:} completed matrix $A$ and minor closure of $E$.
}
\label{Alg:minor}
\begin{algorithmic}[1]
\State Let $E_0\leftarrow E$ and $k\leftarrow 0$.
\Repeat
\State $E_{k+1}\leftarrow E_{k}$
\For{each missing edge $(i,j)$ in $\calE\backslash E_k$} \label{Alg:minor:foreachmissing}
\State Let $I\leftarrow N(j)\subseteq V$, $J\leftarrow N(i)\subseteq
W$, where the neighbors are defined with respect to graph $(V,W,E_k)$.
\State  $E_k'\leftarrow I\times J \cap E_k$.
\State $(I',J')\leftarrow \texttt{FindAClique}((I,J,E_k'),r,r)$. \label{Alg:minor:findaclique}
\If{$|I'|>0$ and $|J'|>0$}
\State $E_{k+1}\leftarrow E_{k+1}\cup (i,j)$.
\State $A_{ij}\leftarrow A_{i,J'}A_{I',J'}^+A_{I',j}$.
\EndIf
\EndFor
\State $k\leftarrow k+1$.
\Until{$E_{k}=E_{k-1}$ or $E_k=\calE$} \label{Alg:minor:until}
\State Return $(A,E_k)$.
\end{algorithmic}
\end{algorithm}

Note that $E_{k+1}$ is uniquely determined from $E_k$ and the process is
monotone and bounded, i.e., $E_k\subseteq E_{k+1}\subseteq \calE$. The
first statement is true because the order of the iteration over missing edges in line
\ref{Alg:minor:foreachmissing} is irrelevant as we look if
there is a $(r+1)\times (r+1)$ biclique in $E_k\cup{(i,j)}$ for each
missing edge $(i,j)$.
Therefore, Algorithm~\ref{Alg:minor} terminates with either $E_k=\calE$ or
$E_k\subsetneq \calE$ and the following definition is valid.

\begin{Def}\label{Def:minor-closable}
A set $E\subset \calE$ is \emph{minor closable in rank $r$} if Algorithm \ref{Alg:minor}
reconstructs all the entries in positions $\calE\setminus E$. Moreover,
we say $E$ is $k$-step minor closable in rank $r$, if Algorithm
\ref{Alg:minor} terminates with $k$ steps, i.e., $E_k=\calE$ in line \ref{Alg:minor:until}.
\end{Def}

Since each entry is uniquely determined when it is reconstructed, any minor closable set is uniquely
completable.

A crucial step in Algorithm~\ref{Alg:minor} is \texttt{FindAClique} in
line \ref{Alg:minor:findaclique}. The function should return the indices
of rows and columns, if an $r\times r$ biclique exists in subgraph
$(I,J,E')$. This can be achieved in various ways. Although the worst case
complexity is $O(|I|^r|J|^r)$, it can be much more efficient in
practice, because many vertices can be safely pruned due to the fact
that any $r\times r$ biclique may not contain vertices with degree less
than $r$. An efficient implementation that employs a row-wise recursion
of this step, proposed by Takeaki Uno, is presented in Appendix~\ref{Sec:takeakialgo}.

We would like to note that Algorithm~\ref{Alg:compcirc}, as presented above, and all related algorithms below, need the true matrix to be generic. Probabilities for this supposition to hold can be backed out of from Remark~\ref{Rem:RAM}.

\subsection{Local completion}\label{sec:circuits-local}
The circuit property can also be interpreted differently: instead of using multiple ciruits to complete many different entries, one can also think of concentrating on one single entry and trying to reconstruct that as accurately as possible. Algorithm~\ref{Alg:loccomp} describes a general strategy on how to obtain estimates of single finitely or uniquely completable entries, from noisy observations via local circuit completion.

\begin{algorithm}[ht]
\caption{\label{Alg:loccomp} Local completion/denoising of a single entry $(k,\ell)$.\newline
\textit{Input:} A set $E\subset \calE$ of observed positions, the entry. \newline
\textit{Output:} Estimate for $A_{k\ell}$}
\begin{algorithmic}[1]
\State \label{Alg:loccomp.step1} Find completing (w.r.t. $E$) circuits $C_1,\dots, C_N$ containing $(k,\ell)$
\State \label{Alg:loccomp.step2} Compute the circuit polynomials $\theta_{C_i}$, where the observed entries are substituted and $A_{k\ell}$ is the only unknown
\State \label{Alg:loccomp.step3} For all $i$, find all solutions $a^{(i,j)}$ of $\theta_{C_i}$.
\State \label{Alg:loccomp.step4} Return $A_{k\ell} = f(\dots, a^{(i,j)},\dots)$, where $f$ is an appropriate averaging function
\end{algorithmic}
\end{algorithm}
The idea in Algorithm~\ref{Alg:loccomp} is to obtain many candidate solutions in step~\ref{Alg:loccomp.step3} and then trade them off appropriately in step~\ref{Alg:loccomp.step4}. If all circuit polynomials $\theta_{C_i}$ have degree one, there is only one solution per polynomial, and $f$ can be taken as the mean, or a weighted average that minimizes some loss or a variance. If there are some circuit polynomial with higher degree, then one can try to decide which solution is the right one - e.g., by clustering the $a^{(i,j)}$ and rejecting all candidate solutions except the one which contains some $a^{(i,j)}$ for the highest number of $i$, and then proceeding as in the degree one case. Also, one can imagine $f$ being adaptive, e.g. including Bayesian learning methods.

For rank one, an closed explicit form is possible for the variance minimizing estimate, as it was shown in~\cite{KirThe13RankOneEst}.
For arbitrary rank, a first-order approximation to variance minimization can be employed to yield fast and competitive single-entry estimates, by results from~\cite{BlyKirThe14Running}.

For illustration, we give a short overview of the crucial statements in the rank one case. The proofs can be found in~\cite{KirThe13RankOneEst}.

\begin{Thm}\label{Thm:rankonecirc}
The rank one circuit graphs are exactly the simple cycles (bipartite and thus of even length). The corresponding circuit polynomials are all binomials of the form
$$\theta_C = \prod_{\nu=1}^L A_{i_\nu j_\nu} - \prod_{\nu=1}^L A_{i_\nu j_{\nu+1}},$$
where $L$ is an arbitrary number, $i_1,\dots, i_L$ are arbitrary disjoint numbers, and $j_1,\dots, j_L$ are arbitrary disjoint numbers, with the convention that $j_1=j_{L+1}$. The $i_\nu$ and $j_\nu$ do not need to be disjoint from each other.
\end{Thm}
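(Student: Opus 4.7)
The plan is to identify the rank-one determinantal matroid on $\calE$ with the graphic (cycle) matroid of the complete bipartite graph $K_{m,n}$; once that reduction is in hand, both claims follow from classical matroid theory together with a one-line substitution. Concretely, I would first compute the generic rank of $\mJ_E$ in rank one: for $\mA = \vu\vv^\top$ with all $u_i, v_j$ nonzero, the row of $\mJ_E$ indexed by $(i,j)$ has $v_j$ in the slot corresponding to $u_i$, $u_i$ in the slot corresponding to $v_j$, and zeros elsewhere. Dividing each row by $u_i v_j$ and then rescaling the $k$-th $\vu$-column by $u_k$ and the $\ell$-th $\vv$-column by $v_\ell$ --- all invertible at a generic point --- converts $\mJ_E$ into the (unsigned) vertex--edge incidence matrix $B_E$ of $G(E)$. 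The standard rank formula for incidence matrices of bipartite graphs then yields $\rk_1(E) = \rk B_E = |V(E)| - c(E)$, where $V(E)$ is the vertex set touched by $E$ and $c(E)$ is the number of connected components of $(V(E), E)$. This is exactly the rank function of the graphic matroid of $K_{m,n}$ restricted to $E$, so the two matroids coincide.

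The circuits of a graphic matroid are exactly its simple cycles (see, e.g., \cite{Oxley}); in a bipartite graph every such cycle has even length $2L$ alternating between two disjoint vertex sets $\{i_1, \ldots, i_L\} \subset [m]$ and $\{j_1, \ldots, j_L\} \subset [n]$, with edge set $\{(i_\nu, j_\nu), (i_\nu, j_{\nu+1}) : \nu = 1, \ldots, L\}$ under the convention $j_{L+1} = j_1$. This proves the structural part of the theorem and pins down the combinatorial shape of $C$.

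For the circuit polynomial, I would substitute $A_{ij} = u_i v_j$ into the candidate
$$\theta_C := \prod_{\nu=1}^L A_{i_\nu j_\nu} - \prod_{\nu=1}^L A_{i_\nu j_{\nu+1}},$$
obtaining
$$\Bigl(\prod_{\nu} u_{i_\nu}\Bigr)\Bigl(\prod_{\nu} v_{j_\nu} - \prod_{\nu} v_{j_{\nu+1}}\Bigr) = 0,$$
since $\nu \mapsto \nu + 1$ is a cyclic permutation of the index set $\{1,\ldots,L\}$. Hence $\theta_C$ vanishes on all of $\calM(m \times n, 1)$. Since $\theta_C$ is square-free, non-zero, and has variable support exactly equal to the edges of $C$, Theorem~\ref{Thm:circpoly} identifies it, up to a scalar multiple, as the circuit polynomial of $C$.

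The main obstacle is the rank identification in the first paragraph: one must see that generic row and column rescalings reduce $\mJ_E$ to the bipartite vertex--edge incidence matrix $B_E$, and therefore that the rank-one determinantal matroid is nothing but the graphic matroid of $K_{m,n}$. This bridge between the algebraic and combinatorial pictures is short but is the only non-routine step; everything else follows either from classical matroid theory or from an elementary substitution.
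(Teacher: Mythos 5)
The paper itself does not prove Theorem \ref{Thm:rankonecirc}; it defers entirely to the reference \cite{KirThe13RankOneEst}. Judged on its own terms, your argument takes the natural route and is essentially correct. The reduction of $\mJ_E$ at a generic point $\mA=\vu\vv^\top$ to the unsigned vertex--edge incidence matrix of $G(E)$, by dividing the row indexed by $(i,j)$ by $u_iv_j$ and rescaling the columns by $u_k$ and $v_\ell$, is valid (all the scalings are nonzero at a generic point), and together with Lemma \ref{Lem:generic-rank} it identifies the rank-one determinantal matroid with the cycle matroid of $K_{m,n}$, whose circuits are exactly the simple (necessarily even) cycles. The substitution $A_{ij}=u_iv_j$ correctly shows that the proposed binomial vanishes on all of $\calM(m\times n,1)$, and your bookkeeping of the two alternating perfect matchings of the cycle as the supports of the two monomials is right.

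The one step you pass over too quickly is the last one. Theorem \ref{Thm:circpoly} characterizes $\theta_C$ by a biconditional, and you have verified only one direction: that your binomial vanishes on $\Omega_C(\calM(m\times n,1))$. Being square-free, nonzero, and supported exactly on $C$ does not by itself force a polynomial to be the circuit polynomial --- a priori it could be the circuit polynomial times a nontrivial square-free cofactor in the same variables. Two short patches are available. (a) The binomial $\prod_\nu A_{i_\nu j_\nu}-\prod_\nu A_{i_\nu j_{\nu+1}}$ is irreducible: it is linear in $A_{i_1j_1}$ with leading coefficient $\prod_{\nu\ge 2}A_{i_\nu j_\nu}$ and constant term $-\prod_\nu A_{i_\nu j_{\nu+1}}$, which share no common factor; since the closure of $\Omega_C(\calM(m\times n,1))$ is an irreducible hypersurface in $\C^{\card{C}}$ (here $\rk_1(C)=\card{C}-1$ is used) contained in the irreducible hypersurface cut out by the binomial, the two coincide. (b) Alternatively, the true circuit polynomial $f$ divides your binomial; by Theorem \ref{Thm:stress-for-circuit} the support of $f$ is all of $C$, and since your binomial is multilinear the cofactor can involve no variable of $C$ and hence is constant. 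Either one-line addition closes the argument.
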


In particular, Theorem~\ref{Thm:rankonecirc} implies that the circuit polynomials are all linear in every occurring variable. Moreover, the specific structure of the problem allows a further simplification:

\begin{Rem}\label{Rem:rkonelin}
Keep the notations of Theorem~\ref{Thm:rankonecirc}. Write $B_{ij}:=\log |A_{ij}|$. Then, the equations
$$L_C = \sum_{\nu=1}^L B_{i_\nu j_\nu} - \sum_{\nu=1}^L B_{i_\nu j_{\nu+1}}$$
vanish on all rank one matrices.
\end{Rem}

With the elementary computation in Remark~\ref{Rem:rkonelin}, matrix completion becomes estimation with linear boundary constraints. That is, the function $f$ in step~\ref{Alg:loccomp.step4} of Algorithm~\ref{Alg:loccomp} could be taken as the least squares regressor of all $B_{k\ell}$ obtained from completing circuits for $(k,\ell)$. The algorithm in~\cite{KirThe13RankOneEst} gives a version which takes different observation variances into account, and efficient graph theoretical observations making the computation polynomial.

We paraphrase this as Algorithm~\ref{Alg:loccomprkone}; more details, e.g. on how to efficiently
find a basis for the set of completing circuits\footnote{This is equivalent to finding a basis for
first $\mathbb{Z}$-homology of the graph $G$, taken as a $1$-complex.}
is efficiently found, or how the kernel matrix $\Sigma$ is constructed, can be found in~\cite{KirThe13RankOneEst}.

\begin{algorithm}[ht]
\caption{\label{Alg:loccomprkone} Local completion/denoising of a single entry $(k,\ell)$ in a rank $1$ matrix.\newline
\textit{Input:} A set $E\subset \calE$ of observed positions, observation variances $\sigma$, the position $(k,\ell)$. \newline
\textit{Output:} Estimate for $A_{k\ell}$}
\begin{algorithmic}[1]
\State \label{Alg:loccomprkone.step1} Find a basis $C_1,\dots, C_N$ for the set
of completing circuits (w.r.t $E$) for $(k,\ell)$
\State \label{Alg:loccomprkone.step2} Find solutions $a_i$ for the corresponding
circuit polynomials, write $b_i:=\log |a_i|$
\State \label{Alg:loccomprkone.step3} Compute the $(N\times N)$-path kernel matrix
$\Sigma=\Sigma(E,\sigma)$ corresponding to the $C_i$; set $\alpha:=\Sigma^{-1} \cdot \mathbf{1}$
\State \label{Alg:loccomprkone.step4} Compute the weighted mean
$b:= \left(\sum_{i=1}^N \alpha_i\cdot a_i\right)/\left(\sum_{i=1}^N \alpha_i\right)$
\State \label{Alg:loccomprkone.step5} As estimate, return
$\widehat{A}_{k\ell}=\pm\exp (b)$, where the sign is determined by the sign parity of the circuits.
\end{algorithmic}
\end{algorithm}

\subsection{Variance and error estimation}\label{sec:variance-local}

The locality of circuits also allows to obtain estimates for the reconstruction error of single missing entries obtained by the strategy in section~\ref{sec:circuits-local}, independent of the method which does the actual reconstruction. The simplest estimate of this kind is obtained from a variational approach: say $\theta_{C}$ is a completing circuit (w.r.t $E\subseteq \calE$) for the missing entry $(k,\ell)$. In the simplest case, where $\theta_C$ is linear in the missing entry $A_{k\ell}$, we can obtain a solving equation
$$\widehat {A}_{k\ell} = \theta_C (A_e, e\in E),$$
by solving for $A_{k\ell}$ as an unknown. A first order approximation for the standard error can be obtained by the variational approach
$$\delta \widehat {A}_{k\ell} = \sum_{e\in E}\frac{\partial\theta_C}{\partial A_e} (A_e, e\in E)\; \delta A_e.$$
The right hand side can be obtained from a suitable noise model and the observations $A_e$, or, if the error should be estimated independently from the $A_e$, from a noise model plus a sampling model for the $A_e$. A general strategy for entry-wise error estimation is analogous to Algorithm~\ref{Alg:loccomp} for local completion. For rank one, it has been shown in~\cite{KirThe13RankOneEst} that the variance estimate depends only on the noise model and not on the actual observation, and takes a closed logarithmic-linear form, as it is sketched in Algorithm~\ref{Alg:varrkone}.

\begin{algorithm}[ht]
\caption{\label{Alg:varrkone} Error prediction for a single entry $(k,\ell)$, rank one.\newline
\textit{Input:} A set $E\subset \calE$ of observed positions, observation variances $\sigma$, the position $(k,\ell)$. \newline
\textit{Output:} Estimate for the (log-)variance error of the estimate $\widehat{A}_{k\ell}$}
\begin{algorithmic}[1]
\State \label{Alg:varrkone.step1} Calculate $\Sigma$ and $\alpha$, as in Algorithm~\ref{Alg:loccomprkone}.
\State \label{Alg:varrkone.step2} As log-variance, return $\alpha^\top \Sigma\alpha$.
\State \label{Alg:varrkone.step3} If an estimate $\widehat{A}_{k\ell}$ is available, as standard error, return $\widehat{A}_{k\ell}\cdot\left(\exp(\alpha^\top \Sigma\alpha) - 1\right)$
\end{algorithmic}
\end{algorithm}

Note that the log-variance error is independent of the actual estimate $\widehat{A}_{k\ell}$, therefore the variance patterns can be estimated without actually reconstructing the entries.

\section{Combinatorial completability conditions}
\label{Sec:graphconditions}
Through Sections \ref{Sec:finitecompletability} and
\ref{Sec:uniquecompletability}, we have shown that for a given $E\subseteq \calE$, both finitely completable closure
(Theorem \ref{Thm:finite-completable-closure}) and uniquely completable
closure (Theorem \ref{Thm:uniquecomp}) are properties of the
(isomorphism type of) the associated bipartite graph $G(E)$; see also Corollary \ref{Cor:graph-matroid}.

In this Section, using tools from graph and matroid theories, we relate
the structural properties of the bipartite graph $G(E)$ to the finite
completability.

For a set of observed positions, $E\subseteq \calE$, let $G(E)=(V,W,E)$
be a bipartite graph, where the sets of vertices $V$ and $W$ correspond
to row and column of the observed positions; we call $V$ and $W$ {\em row
vertices} and {\em column vertices}, respectively. We assume that $G(E)$
has no isolated vertices (those corresponding to rows or columns with no observed positions.)

As usual, we will take $r$, $n$, and $m$ to be the rank and parameters of the
ground set $\calE$, respectively.  However, since our convention for graphs is that
they do not have isolated vertices, we will take care to indicate the ambient
ground set.

\subsection{Sparsity and independence}
Suppose we want to maximize the size of the completable closure $\close_r(E)$,
with the number of positions to observe fixed.  To do this, consider the
process of constructing $E$ one position at a time.  What we need is
to pick each successive entry in a way that causes $\close_r(E)$ to grow.
Theorem \ref{Thm:finite-completable-closure}
implies that a position $(k,l)$ is finitely completable from $E$, if and only if
$\mJ_{\{(k,l)\}}$ lies in the span of $\mJ_{E}$.  In particular, this tells us that
adding such a $(k,\ell)$ to $E$ will not affect the finite completability of other
unobserved positions; in matroid terminology, we say $(k,l)$ is
dependent on $E$.
We see, then, that it is wasteful to choose positions that are dependent
on the already chosen positions. Therefore intuitively we need to choose the
positions so that they are well spread out, which we call {\em rank-$r$
sparse}; see Section \ref{Sec:sparsity}. Rank-$r$ sparsity implies a more
classical combinatorial property, namely $r$-connectivity; see Section
\ref{Sec:connectivity}. Finally, in Section \ref{Sec:sparsitynotsufficient},
we show by a counterexample that
rank-$r$ sparsity, though necessary, is not a sufficient condition for
finite completability.

We recall some basic terminologies from matroid theory.
The rank function $\rk_r(E)$ of the rank $r$ determinantal matroid is
defined in Definition \ref{Def:determinantal-matoid}. Note that
$\rk_r(E)\leq d_r(m,n)$, where $d_r(m,n)=r(m+n-r)$ if
$m\geq r$ and $n\geq r$, $d_r(m,n)=mn$, otherwise. A set of positions $E\subseteq \calE$ is called {\em independent} if $\card{E}=\rk_r(E)$. On
the other hand, it is called {\em dependent} if $\card{E}>\rk_r(E)$.
A basis $B$ of $E\subseteq \calE$ is a maximally independent subset of
$E$. In addition, a basis of $\calE$ is called a basis of the rank $r$
determinantal matroid. A basis $B$ of $E$ consists of $\rk_r(E)$ edges.
In particular, a basis $B$ of the rank $r$ determinantal matroid
consists of $\rk_r(\calE)=d_r(m,n)$ edges. A basis of $E$ is not unique
unless $E$ is independent. A circuit $C\subseteq \calE$ of of the
rank $r$ determinantal matroid is a minimally dependent set in the sense
that for any $(i,j)\in C$, $C-\{(i,j)\}$ is an independent set; see also
Definition~\ref{Def:circuit}.

We have the following two properties from matroid theory.
\begin{Prop}\label{Prop:matroid}
\begin{enumerate}
\item Let $E\subseteq \calE$ be a set of observed positions and
$B\subseteq E$ be any basis of $E$. Then,  $\close_r(B)=\close_r(E)$.
\item Let $E\subseteq \calE$ be an independent set in the rank $r$
determinantal matroid. Then, any $E'\subseteq E$ is independent.
\end{enumerate}
\end{Prop}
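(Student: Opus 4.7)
Both parts follow from standard facts about linear matroids once we recognize the right framework. The key link to the preceding development is Theorem~\ref{Thm:finite-completable-closure}, which identifies $\close_r(E)$ with the matroid closure operator in the rank-$r$ determinantal matroid of Definition~\ref{Def:determinantal-matoid}: a position $e$ lies in $\close_r(E)$ precisely when $\mJ_{\{e\}} \in \rowspan \mJ_E$ for a generic point. Throughout, I would work at a single generic $\mA$ (equivalently at generic $(\mU,\mV)$); by Lemma~\ref{Lem:generic-rank}, all ranks and spans I invoke are then the matroid-theoretic invariants of $E$, independent of this choice.

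Part~(2) is the easier of the two and I would dispatch it first. By Definition~\ref{Def:determinantal-matoid}, $E$ being independent means that the rows $\{\mJ_e : e \in E\}$ of the Jacobian at the chosen generic point are linearly independent in $\C^{r(m+n)}$. Any subset of a linearly independent family is linearly independent, so $\rk_r(E') = |E'|$ for every $E' \subseteq E$, which is precisely independence of $E'$.

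For Part~(1), the plan is to reduce to an equality of row spans, namely $\rowspan \mJ_B = \rowspan \mJ_E$; the equality of closures then follows from Theorem~\ref{Thm:finite-completable-closure}, since membership in $\close_r(\cdot)$ is determined purely by this row span. One inclusion is automatic: $B \subseteq E$ gives $\rowspan \mJ_B \subseteq \rowspan \mJ_E$. For the reverse, since $B$ is a basis of $E$ we have $\rk_r(B) = |B| = \rk_r(E)$, so $\mJ_B$ and $\mJ_E$ have the same rank; as the rows of $\mJ_B$ form a subfamily of the rows of $\mJ_E$, the two row spans must coincide.

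No step here is a real obstacle, which is the point of building up the matroid-theoretic language in the previous sections: the substantive algebraic-geometric content has already been absorbed into Theorem~\ref{Thm:finite-completable-closure} and Lemma~\ref{Lem:generic-rank}, and what remains is elementary linear algebra. The only bit of care is to fix one generic point throughout so that \emph{independence}, \emph{rank}, and \emph{closure} are all computed consistently within the rank-$r$ determinantal matroid rather than pointwise and possibly inconsistently.
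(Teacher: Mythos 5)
Your proposal is correct and follows essentially the same route as the paper, which justifies both parts in one sentence by appealing to the linear-matroid structure of the rows of $\mJ_E$ and the identification of $\close_r$ with the matroid closure from Theorem~\ref{Thm:finite-completable-closure}; your write-up simply makes the row-span equality $\rowspan \mJ_B = \rowspan \mJ_E$ and the hereditary nature of linear independence explicit. No gaps.
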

In other words, (i)
the finitely completable closures of $E$ and any basis $B$ of $E$ are the
same (ii) and  an independent graph $G(E)$ cannot contain a
dependent subgraph $G(E')$. Both statements arise from the fact that
the rank-$r$ determinantal matroid is a linear matroid defined by
the linear independence of the rows of the Jacobian $\mJ_E$ and that the
matroid closure coincides with the finitely completable closure.

\subsubsection{Rank-$r$-sparsity}
\label{Sec:sparsity}
Let $G'=(V',W',E')$ be a subgraph of $G=(V,W,E)$.
Since $E'$ being independent implies a bound on the
cardinality $\card{E'}\leq d_r(|V'|,|W'|)$, we consider the notion of
\emph{rank-$r$-sparsity} defined as follows.

\begin{Def}\label{Def:rank-r-sparse}
A graph $G = (V,W,E)$ is \emph{rank-$r$-sparse} if, for all subgraphs $G' = (V',W',E')$ of $G$,
$\card{E'} \le d_r(\card{V'},\card{W'})$.
\end{Def}
\begin{Thm}\label{Thm:rank-r-sparse}
Let $E\subseteq \calE$ be an independent set in the rank $r$ determinantal matroid on $[m]\times[n]$.
Then $G(E)$ is rank-$r$-sparse.
\end{Thm}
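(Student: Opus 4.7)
The plan is to reduce rank-$r$-sparsity to a rank bound on $\mJ_{E'}$ and then invoke the dimension statement in Proposition \ref{Prop:detvarprops}(ii). First, because $E$ is independent, Proposition \ref{Prop:matroid}(ii) says every subset $E' \subseteq E$ is also independent, so $\card{E'} = \rk_r(E') = \rk \mJ_{E'}$. Hence, given a subgraph $G' = (V',W',E')$, it suffices to show that
\[
\rk \mJ_{E'} \;\le\; d_r(\card{V'},\card{W'}).
\]

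To prove this bound, I would use the block structure of $\mJ$ in \eqref{eq:jacobian}: the row $\mJ_{(i,j)}$ has nonzero entries only in the blocks associated with $\vu_i$ and $\vv_j$. Consequently, whenever $E' \subseteq V' \times W'$, the matrix $\mJ_{E'}$ has nonzero columns only among those indexed by $\{\vu_i : i \in V'\} \cup \{\vv_j : j \in W'\}$. After discarding the identically-zero columns, $\mJ_{E'}$ coincides with the Jacobian of the composed map $\Omega_{E'} \circ \Upsilon'$, where now $\Upsilon'$ is the parameterization map for $\calM(\card{V'} \times \card{W'}, r)$ and $\Omega_{E'}$ is the masking by $E'$ viewed inside the smaller ground set $V' \times W'$. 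This is exactly the invariance-under-ambient-$(m,n)$ observation used at the end of the proof of Theorem~\ref{Thm:finite-completable-closure}.

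Now the same constant rank / Semialgebraic Sard argument used in Lemma~\ref{Lem:generic-rank} and in the proof of Theorem~\ref{Thm:finite-completable-closure} shows that, at a generic $(\mU,\mV)$, the rank of the Jacobian of $\Omega_{E'} \circ \Upsilon'$ equals the dimension of the image of this map. Since that image is contained in a coordinate projection of $\calM(\card{V'} \times \card{W'}, r)$, Proposition \ref{Prop:detvarprops}(ii) yields
\[
\rk \mJ_{E'} \;=\; \dim\bigl(\Omega_{E'} \circ \Upsilon'(\C^{r(\card{V'}+\card{W'})})\bigr) \;\le\; \dim \calM(\card{V'} \times \card{W'}, r) \;=\; d_r(\card{V'},\card{W'}),
\]
which is the desired inequality.

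The only delicate point in executing the plan is the reduction to the smaller ambient ground set $V' \times W'$: one has to check that deleting the all-zero columns of $\mJ_{E'}$ really does give the Jacobian of the rank-$r$ completion problem on an $\card{V'} \times \card{W'}$ matrix, rather than something whose rank could be accidentally larger. This is bookkeeping of the Kronecker-product formula \eqref{eq:jacobian}, and once it is spelled out, the rest is a direct appeal to Proposition \ref{Prop:detvarprops}(ii) together with the generic-rank equals image-dimension statement that has already been established.
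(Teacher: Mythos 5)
Your proof is correct and follows essentially the same route as the paper's: both reduce the claim to Proposition \ref{Prop:matroid}(ii) (heredity of independence) combined with the bound $\rk_r(E') \le d_r(\card{V'},\card{W'})$. The only difference is that the paper asserts this latter bound without comment, whereas you supply the justification via the block structure of \eqref{eq:jacobian} and the generic-rank-equals-image-dimension argument, which is a worthwhile detail to have spelled out.
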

\begin{proof}
Suppose that there is a subgraph $G'=(V',W',E')$ with $\card{E'}>
d_r(\card{V'},\card{W'})\geq \rk_r(E')$, then this subgraph must be dependent, which
contradicts Proposition \ref{Prop:matroid}, part 2.
\end{proof}

\subsubsection{Connectivity and vertex degrees}
\label{Sec:connectivity}
Rank $r$ sparsity implies some other, more classical, graph theoretic properties in
a straightforward way, since rank-$r$-sparsity is hereditary.
\begin{Cor}\label{Cor:combinatorial-independence-conditions}
Let $m, n > r$, and $E\subseteq \calE$ be the set of observed
positions. If $G(E)$ contains a rank-$r$ sparse subgraph $G(E')$
with $\card{E'}=d_r(m,n)$ edges, then:
\begin{enumerate}
\item $G(E)$ has minimum vertex degree at least $r$.
\item $G(E)$ is $r$-edge-connected.
\end{enumerate}
\end{Cor}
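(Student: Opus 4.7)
The plan is to handle the two items separately; both reduce to bookkeeping against the function $d_r(\cdot,\cdot)$ applied to induced subgraphs of $G(E')$.

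A preliminary observation, used throughout, is that $G(E')$ spans all of $[m]$ and $[n]$ (i.e., has no missing row or column vertex). Otherwise, applying rank-$r$-sparsity to $G(E')$ itself, using the strict monotonicity of $d_r$ in each argument on the relevant range (guaranteed here since $m, n > r$), would yield $d_r(m,n) = \card{E'} \le d_r(\card{V'}, \card{W'}) < d_r(m,n)$, a contradiction. In particular every vertex of $G(E)$ is already a vertex of $G(E')$.

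For part 1, I would pick any row vertex $v$ and consider the subgraph $G(E') - v$; it lies on $m-1$ row vertices and at most $n$ column vertices, and has $d_r(m,n) - \deg_{G(E')}(v)$ edges. Rank-$r$-sparsity combined with the identity $d_r(m-1,n) = d_r(m,n) - r$ (valid since $m > r$) gives $\deg_{G(E')}(v) \ge r$, and the analogous argument for column vertices uses $d_r(m,n-1) = d_r(m,n) - r$. Since $E' \subseteq E$, the same degree bound carries over to $G(E)$.

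For part 2, I would argue by contradiction: suppose that $[m] \cup [n] = A \sqcup B$ is a non-trivial vertex partition with fewer than $r$ crossing edges in $G(E)$. Let $a_v = \card{A \cap V}$, $a_w = \card{A \cap W}$, $b_v = m - a_v$, $b_w = n - a_w$, and let $k'$ be the number of crossing edges in $E'$, so $k' < r$. Applying rank-$r$-sparsity to the two subgraphs of $G(E')$ induced on $A$ and on $B$, and summing with the crossing edges, yields
\begin{equation*}
d_r(m,n) \;=\; \card{E'} \;\le\; d_r(a_v, a_w) + d_r(b_v, b_w) + k'.
\end{equation*}
The contradiction will follow once I establish the numerical inequality
\begin{equation*}
d_r(a_v, a_w) + d_r(b_v, b_w) \;\le\; d_r(m,n) - r,
\end{equation*}
valid whenever $A$ and $B$ are both non-empty and $m, n > r$. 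I anticipate this numerical inequality to be the main obstacle; however, it yields to a short case split on whether each of $a_v, a_w, b_v, b_w$ exceeds the threshold $r$. When all four are at least $r$, the two sides differ by $r^2$ exactly; in the mixed cases, the piecewise formula $d_r(x,y) = xy$ applies to one or both terms, and substitution, together with the non-emptiness assumption $a_v + a_w \ge 1$, $b_v + b_w \ge 1$ and the hypothesis $m, n > r$, yields at least the required slack of $r$. Degenerate partitions where $A$ lies entirely in $V$ or $W$ are handled in the same framework using $d_r(x, 0) = 0$ together with the minimum-degree bound from part 1.
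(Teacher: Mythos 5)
Your proposal is correct and follows essentially the same route as the paper: the edge-connectivity claim is proved by applying rank-$r$-sparsity to the two sides of a vertex bipartition and bounding the number of crossing edges below by $d_r(m,n) - d_r(a_v,a_w) - d_r(b_v,b_w) \ge r$, via the same case split on which of the four vertex counts exceed $r$ (the paper packages the first step as Lemma~\ref{Lem:sparsedual}, but the counting is identical). The only cosmetic differences are that you prove the minimum-degree bound directly from $d_r(m-1,n)=d_r(m,n)-r$ whereas the paper deduces it from $r$-edge-connectivity, and that you make explicit the preliminary observation that $G(E')$ must span all of $[m]$ and $[n]$, which the paper leaves implicit.
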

In particular, if $E$ is finitely completable, it contains a basis $E'$
(Proposition \ref{Prop:matroid}, part 1) with $\card{E'}=d_r(m,n)$
edges and $G(E')$ is rank-$r$ sparse. Thus, $E$ is $r$-edge connected.

The proof of the above corollary relies on the following lemma:
\begin{Lem}\label{Lem:sparsedual}
Let $E\subseteq \calE$ be rank-$r$ sparse with $\card{E}=d_r(m,n)$ edges, and $E=\cup_{i=1}^{N}E_i$ be an edge disjoint
partition of $E$. For any set  $E'\subseteq \calE$ of edges
incident to $m'$ row and $n'$ column vertices, we define
$d_r(E'):=d_r(m',n')$. Then we have
$$ d_r(m,n)\leq \sum_{i=1}^{N}d_r(E_i).$$
\end{Lem}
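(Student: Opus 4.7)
The plan is to apply the rank-$r$-sparsity hypothesis directly to each piece of the edge-disjoint partition, then sum the resulting inequalities. The key observation is that each $E_i$, being a subset of $E$, defines a subgraph of $G(E)$, so Definition \ref{Def:rank-r-sparse} applies to it.

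First I would fix notation: let $V_i \subseteq [m]$ and $W_i \subseteq [n]$ be the row and column vertices incident to the edges of $E_i$, so that $d_r(E_i) = d_r(\card{V_i}, \card{W_i})$. Since $E_i \subseteq E$ and $E$ is rank-$r$-sparse, the hypothesis gives
\[
\card{E_i} \leq d_r(\card{V_i},\card{W_i}) = d_r(E_i)
\]
for each $i$.

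Next, because $E = \bigcup_{i=1}^N E_i$ is an \emph{edge-disjoint} partition, the cardinalities add:
\[
\card{E} = \sum_{i=1}^N \card{E_i} \leq \sum_{i=1}^N d_r(E_i).
\]
Combining this with the assumption $\card{E} = d_r(m,n)$ yields the claimed inequality.

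There is no real obstacle here: the content of the lemma is entirely packaged in the definition of rank-$r$-sparsity, and the proof is essentially a one-line application of that definition together with the edge-disjointness of the partition. The only thing worth flagging is that one should not be tempted to compare $d_r(m,n)$ and $\sum_i d_r(E_i)$ directly via any convexity/subadditivity property of the function $d_r$ itself—such a direct comparison would require that the $V_i$ and $W_i$ cover $[m]$ and $[n]$ in a controlled way, which is not assumed. Instead, the saturation condition $\card{E} = d_r(m,n)$ is precisely what converts the summed sparsity bounds on the pieces into a bound on $d_r(m,n)$.
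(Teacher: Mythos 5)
Your proof is correct and is essentially identical to the paper's: both chain $d_r(m,n)=\card{E}=\sum_i\card{E_i}\leq\sum_i d_r(E_i)$, using edge-disjointness for the middle equality and the per-piece sparsity bound for the inequality. The only cosmetic difference is that you invoke Definition \ref{Def:rank-r-sparse} directly on each subgraph $G(E_i)$, whereas the paper cites Theorem \ref{Thm:rank-r-sparse}; your version is if anything the cleaner reading, since rank-$r$-sparsity of $E$ is exactly the stated hypothesis.
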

\begin{proof}
By the assumption,
$$ d_r(m,n)= \card{E}=\sum_{i=1}^{N}\card{E_i}\leq\sum_{i=1}^{N}d_r(E_i),$$
where the first equality holds because $E$ is independent and the last
inequality follows from  Theorem \ref{Thm:rank-r-sparse}.
\end{proof}

\begin{proof}[Proof of Corollary \ref{Cor:combinatorial-independence-conditions}]
Since Statement 2 implies statement 1, we prove Statement
2. First, we can assume without loss of generality that $E$ is rank-$r$
sparse and $E'=E$ without loss of generality, because if $E'$ is
$r$-edge-connected, so is $E$.

Consider any partition $V=V_1\cup V_2$ and $W=W_1\cup W_2$. $V_1$ or
$W_1$ can be empty (but not at the same time). This induces an edge disjoint partition $E=E_1\cup
E_2 \cup_{(i,j)\in E-E_1-E_2}\{(i,j)\}$, where $E_1$ and $E_2$ are
sets of edges induced by $(V_1,W_1)$ and $(V_2,W_2)$,
respectively. Treating each edge in $E-E_1-E_2$ as a subgraph, we have
$d_r({(i,j)})=1$. By applying Lemma \ref{Lem:sparsedual}, we have
\begin{align}
\label{eq:bound-bridge}
\card{E-E_1-E_2} \geq d_r(m,n)-d_r(E_1)-d_r(E_2).
\end{align}
Let $m_1:=\card{V_1}$,
$m_2:=\card{V_2}$, $n_1:=\card{W_1}$, and $n_2:=\card{W_2}$. Due to
symmetry, there are three situations that we need to consider. First,
if $m_1,m_2,n_1,n_2\geq r$, $\text{RHS of \eqref{eq:bound-bridge}}=r^2$.
Next, if $m_1\leq r$ and $n_2\leq r$, $\text{RHS of
\eqref{eq:bound-bridge}}=r(m+n-r)-m_1n_1-m_2n_2\geq r^2$, which is true
considering maximizing the inner product between $(m_1,m_2)$ and
$(n_1,n_2)$ subject to $m_1+m_2=m$ and $n_1+n_2=m$. Finally, if
$m_1,n_1\leq r$, $\text{RHS of
\eqref{eq:bound-bridge}}=r(m_1+n_1)-m_1n_1\geq r$. The minimum is
obtained for $m_1=1$ and $n_1=0$, or vice versa.
Therefore $E$ is $r$-edge connected.
\end{proof}

\subsubsection{Sparsity is not sufficient}
\label{Sec:sparsitynotsufficient}
On the other hand, rank $r$ sparsity is \emph{not} a sufficient condition for
independence in determinantal matroids. The bipartite graph defined by
the following mask in rank $2$ have $d_2(5,5)=16$ edges and rank-$2$
sparse but not independent:
\[
\begin{pmatrix}
1 & 1 & 1 & 0 & 0 \\
1 & 1 & 1 & 0 & 0 \\
1 & 1 & 0 & 1 & 1 \\
0 & 0 & 1 & 1 & 1 \\
0 & 0 & 1 & 1 & 1
\end{pmatrix}
\]
This example amounts, graph theoretically, to gluing the graphs of two bases of the
determinantal matroid together along $r$ vertices in a way that preserves rank-$r$-sparsity
but not independence. One can make the construction rigorous to show that, for any $r\ge 2$,
there are infinitely many rank-$r$-sparse dependent sets in the determinantal matroid.

\subsection{Circuit and stress supports}
\label{Sec:circuitstress}
We have discussed stresses in Section~\ref{Sec:uniquecompletability} and
circuits in Section~\ref{Sec:closure}. Here we show that for each
circuit $C$, there is a corresponding stress $\mS$ that is supported on
every position of $C$. Here the support $S\in\calE$ of stress $\mS$ is
defined as $S=\{(i,j)\in\calE:S_{ij}\neq 0\}$.
Moreover, using the structure of the Jacobian matrix (see
Definition \ref{Def:Jacobian}), we show that every vertex of circuit $C$ has degree at least $r+1$.
These results further imply that any finitely completable position spans
vertices in the $r$-core (see Section \ref{Sec:r-core}).
Furthermore, combining the above degree lower bound with the rank-$r$
sparsity shown in the previous subsection, we show a bound on the number
of circuits in the rank $r$ determinantal matroid in Section \ref{Sec:circuit-counting}. The proof of the key
Theorem \ref{Thm:stress-for-circuit} is presented in Section \ref{Sec:proof-stress4circuit}.

\begin{Thm}\label{Thm:stress-for-circuit}
For a generic  $\mA\in\calM(m\times n, r)$, and a circuit $C$, the
stress space $\Psi_\mA (C)$ is one dimensional; thus a stress
$\mS$ of a circuit $C$ is unique up to scalar multiplication. Moreover,
the support of $\mS$ is all of $C$.
\end{Thm}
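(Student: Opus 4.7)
The plan is to reduce both claims directly to the definition of a circuit in the rank $r$ determinantal matroid together with the observation that the space of $C$-stresses is canonically identified with the left null space of $\mJ_C$.

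First, I would unpack the definitions. An element $\mS \in \Psi_\mA(C)$ is a matrix supported on $C$ whose vectorization lies in the left kernel of the full Jacobian $\mJ(\mU,\mV)$. Since the entries of $\mS$ outside $C$ are forced to be zero, the condition $\vect(\mS)\cdot \mJ = 0$ depends only on the rows of $\mJ$ indexed by $C$; concretely, writing $\vect_C(\mS)$ for the $|C|$-dimensional vector of the nonzero coordinates, we have
\[
\vect(\mS)\cdot \mJ = 0 \iff \vect_C(\mS)\cdot \mJ_C = 0.
\]
Hence $\Psi_\mA(C)$ is canonically isomorphic (as a $\C$-vector space) to the left null space of $\mJ_C(\mU,\mV)$.

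Next I would compute its dimension using the circuit property and genericity. By Lemma \ref{Lem:generic-rank} and Definition \ref{Def:determinantal-matoid}, for generic $\mA$ we have $\rk \mJ_C = \rk_r(C)$, and by the definition of a circuit (Definition \ref{Def:circuit}), $\rk_r(C) = |C| - 1$. Therefore the left null space of $\mJ_C$ has dimension $|C| - (|C|-1) = 1$, proving that $\Psi_\mA(C)$ is one-dimensional and $\mS$ is unique up to scalar.

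For the support statement, I would argue by contradiction. Suppose the support of a nonzero $\mS \in \Psi_\mA(C)$ is some proper subset $C' \subsetneq C$. Then $\vect_{C'}(\mS) \neq 0$ lies in the left null space of $\mJ_{C'}$. But the defining property of a circuit is that every proper subset is independent, so $\rk_r(C') = |C'|$ and, again by genericity, $\mJ_{C'}$ has full row rank; its left null space is trivial, contradicting $\vect_{C'}(\mS) \neq 0$. Hence the support of $\mS$ is exactly $C$.

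The only delicate point is making sure that the generic rank equalities used in both steps apply simultaneously to $\mJ_C$ and all $\mJ_{C'}$ with $C' \subsetneq C$; this is immediate since there are only finitely many such $C'$, so the intersection of the finitely many dense open genericity conditions coming from Lemma \ref{Lem:generic-rank} is still dense and open. No serious obstacle remains.
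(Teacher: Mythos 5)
Your proof is correct. The dimension count is identical to the paper's: both identify $\Psi_\mA(C)$ with the left null space of $\mJ_C$ and use $\rk_r(C)=\card{C}-1$ together with genericity of the rank. For the support claim, however, you take a genuinely different and more direct route. The paper first constructs an explicit stress as the gradient $(\partial\theta_C/\partial A_{ij})_{(i,j)\in C}$ of the circuit polynomial from Theorem \ref{Thm:circpoly}, argues that no coefficient can be \emph{identically} zero (else deleting that row of $\mJ_C$ would leave a one-dimensional left kernel, contradicting circuit-ness), and then passes from ``not identically zero'' to ``generically nonzero'' via a Zariski-openness argument. You instead argue purely linear-algebraically: a nonzero stress supported on a proper subset $C'\subsetneq C$ would be a nonzero left-kernel vector of $\mJ_{C'}$, which is impossible since proper subsets of a circuit are independent and hence $\mJ_{C'}$ has full row rank generically. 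Your version is more elementary --- it avoids invoking the existence and uniqueness of circuit polynomials entirely and needs only the matroid definition plus one simultaneous genericity intersection (which you correctly note is over finitely many conditions). What the paper's detour buys is the additional structural fact that the stress of a circuit is realized by the gradient of $\theta_C$, i.e.\ that its entries are explicit polynomials in the observations; that identification is useful elsewhere in their framework but is not needed for the statement as given.
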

The power of Theorem~\ref{Thm:stress-for-circuit} can be seen in the following
proposition, which lower bounds the degree of a vertex in a circuit.

\begin{Prop}\label{Prop:deglb}
Let $C\subseteq E$ be a circuit in the  rank $r$ determinantal matroid.
Then every vertex in the graph $G(C)$ has degree at least $r+1$
edges.
\end{Prop}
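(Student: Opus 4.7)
The plan is to use Theorem \ref{Thm:stress-for-circuit} together with Lemma \ref{Lem:UVkernel} to derive a lower bound on vertex degrees in $G(C)$. Since Theorem \ref{Thm:stress-for-circuit} guarantees that the unique (up to scaling) stress $\mS$ of the circuit $C$ is supported on every position of $C$, each vertex of $G(C)$ must correspond to a nonzero row or column of $\mS$. The defining equations $\mU^\top \mS = 0$ and $\mS \mV = 0$ from Lemma \ref{Lem:UVkernel} will then force enough nonzero entries per row/column of $\mS$ to give the claimed degree bound.

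In more detail, fix a generic $\mA = \mU \mV^\top$ of rank $r$, so that $\mU$ and $\mV$ have full column rank $r$ and, by genericity, every $d \times r$ submatrix of $\mV$ with $d \le r$ has rank $d$, and similarly for $\mU$. Let $\mS$ be the stress from Theorem \ref{Thm:stress-for-circuit}, with $\supp(\mS) = C$. Consider a row vertex $i$ appearing in $G(C)$, let $N(i) \subseteq [n]$ be the set of its neighbors in $G(C)$, and let $d_i = \card{N(i)}$. Since the support of $\mS$ is exactly $C$, the $i$-th row $\mS_{i,:}$ is supported precisely on $N(i)$ and is not identically zero. The identity $\mS \mV = 0$, restricted to row $i$, becomes
\[
(\mS_{i,:})_{N(i)} \cdot \mV_{N(i),:} = 0,
\]
where the left factor is a nonzero row vector of length $d_i$ and $\mV_{N(i),:}$ is a $d_i \times r$ submatrix of $\mV$.

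If $d_i \le r$, then by genericity of $\mV$ the submatrix $\mV_{N(i),:}$ has full row rank $d_i$, so its left kernel is trivial. This forces $(\mS_{i,:})_{N(i)} = 0$, contradicting the fact that $\mS_{ij} \neq 0$ for every $(i,j) \in C$. Hence $d_i \ge r + 1$. The same argument, applied to columns and using $\mU^\top \mS = 0$ in place of $\mS \mV = 0$, proves that every column vertex in $G(C)$ has degree at least $r + 1$.

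The main conceptual obstacle is simply setting up the genericity correctly, namely that a random rank-$r$ matrix yields $\mU$ and $\mV$ generic enough that every relevant submatrix of size at most $r$ attains full rank; this is standard via the Zariski-density arguments already used in the proof of Proposition \ref{Prop:genstressrank} and Lemma \ref{Lem:generic-rank}, so no new machinery is required. Everything else is essentially bookkeeping once Theorem \ref{Thm:stress-for-circuit} is in hand.
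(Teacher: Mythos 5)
Your proof is correct and is essentially the paper's own argument: the paper also invokes Theorem \ref{Thm:stress-for-circuit} for the full support of the (unique) stress and then observes that the $r$ Jacobian columns attached to a vertex of degree $d\le r$ cannot be cancelled generically by a stress supported on all $d$ incident edges --- which is exactly your statement that a nonzero length-$d$ vector cannot lie in the left kernel of a generic $d\times r$ submatrix of $\mU$ or $\mV$. You merely route the same computation through the equivalent identities $\mU^\top\mS=0$, $\mS\mV=0$ of Lemma \ref{Lem:UVkernel} and spell out the required genericity slightly more explicitly.
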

\begin{proof}
By Theorem~\ref{Thm:stress-for-circuit},
for generic $(\mU,\mV)$, the
rows of $\mJ_C$ are dependent, with the associated stress $\mS$ supported
on all the rows.

From \eqref{eq:jacobian},  we see that any vertex $j$ is associated with exactly
$r$ columns in $\mJ_C$. Let $J$ be the indices of these columns. The
number of non-zero rows in $\mJ_C[\cdot,J]$ is exactly the degree $d$
of $j$. If we suppose $d\leq r$, the stress $\mS$ cannot generically cancel these
$d$ columns. Therefore, it holds that $d\geq r+1$.
\end{proof}

\subsubsection{Where are the completable positions?}
\label{Sec:r-core}
The concept of $k$-core is useful for narrowing down where the completable
positions can be and where the circuits can lie.

We recall a concept from graph theory:
\begin{Def}\label{Def:k-core}
Let $G$ be a graph, and let $k\in\N$.  The \emph{$k$-core} of $G$, denoted $\core_k(G)$,
is the maximal subgraph of $G$ with minimum vertex degree $k$.
\end{Def}
In rank $r$, the non-trivial aspects of matrix completion occur inside the $r$-core.
\begin{Thm}\label{Thm:r-r+1-core}
Let $E\subseteq \calE$,
\begin{enumerate}
\item[(i)]  If $(i,j)\in\calE\setminus E$ and $(i,j)\in \close_r(E)$,
then the vertices $i$ and $j$ are in $\core_r(G(E))$.
\item[(ii)] Any circuit $C\subseteq E$ is contained in $\core_{r+1}(G(E))$.
\end{enumerate}
\end{Thm}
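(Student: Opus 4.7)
Both parts are essentially direct consequences of Proposition \ref{Prop:deglb}, which guarantees that the graph $G(C)$ of any circuit $C$ has minimum vertex degree at least $r+1$.

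For part (ii), suppose $C \subseteq E$ is a circuit in the rank $r$ determinantal matroid. Then $G(C)$ is a subgraph of $G(E)$, and by Proposition \ref{Prop:deglb} every vertex in $G(C)$ has degree at least $r+1$ in $G(C)$. Since $\core_{r+1}(G(E))$ is by Definition \ref{Def:k-core} the maximal subgraph of $G(E)$ with minimum degree $r+1$, we conclude $G(C)\subseteq \core_{r+1}(G(E))$.

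For part (i), the plan is to reduce to a circuit and then remove the single unobserved edge. Since $(i,j)\in \close_r(E)\setminus E$, Theorem \ref{Thm:finite-completable-closure} (in its circuit reformulation given just above Theorem \ref{Thm:circpoly}) provides a circuit $C\subseteq E\cup \{(i,j)\}$ with $(i,j)\in C$. Let $H$ denote the subgraph of $G(E)$ whose edge set is $C\setminus\{(i,j)\}$. Every vertex of $G(C)$ has degree at least $r+1$ in $G(C)$ by Proposition \ref{Prop:deglb}; removing the single edge $(i,j)$ decreases the degrees of $i$ and $j$ by one and leaves all other degrees unchanged, so every vertex of $H$ has degree at least $r$ in $H$. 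In particular, $i$ and $j$ remain vertices of $H$ (their degrees in $H$ are at least $r\ge 1$). Thus $H$ is a subgraph of $G(E)$ with minimum degree $r$, which by the maximality in Definition \ref{Def:k-core} forces $H\subseteq \core_r(G(E))$, and in particular $i,j\in \core_r(G(E))$.

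The only nontrivial ingredient is Proposition \ref{Prop:deglb}, which has already been established via the stress support result (Theorem \ref{Thm:stress-for-circuit}); no further obstacle arises.
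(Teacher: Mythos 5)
Your proof is correct and follows essentially the same route as the paper's: both parts rest on the degree bound of Proposition \ref{Prop:deglb}, with part (ii) immediate and part (i) obtained from a circuit through $(i,j)$ after discounting the one unobserved edge. If anything, your treatment of part (i) is more careful than the paper's terse reduction, since you explicitly note that $C\subseteq E\cup\{(i,j)\}$ is not a subgraph of $G(E)$ and that removing the edge $(i,j)$ drops the degrees of $i$ and $j$ from at least $r+1$ to at least $r$.
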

\begin{proof}
(i) We have $(i,j)\in \close_r(E)$ if and only if there is a circuit $C\subseteq E\cup \{(i,j)\}$
with $(i,j)\in C$.  Then $(i)$ will follow from $(ii)$ because for
$G(C)\subseteq \core_{r+1}(G(E))$, we need
$i\in\core_r(G(E))$ and $j\in\core_r(G(E))$. \\
(ii) This follows from the fact that the $(r+1)$-core is the union of all
induced subgraphs with minimum degree at least $r+1$ and by Proposition
\ref{Prop:deglb}, every $C$ lies inside such an induced subgraph.
\end{proof}

Note here that $\uclose_r(E)\subseteq \close_r(E)$, so the same things are true for the uniquely
completable closure.

\subsubsection{Circuit size and counting}
\label{Sec:circuit-counting}
Combining the results in this section, we obtain bounds on the number of circuits in the
rank $r$ determinantal matroid.
\begin{Thm}\label{Thm:circuit-size}
Let $C$ be a circuit in the rank $r$ determinantal matroid with graph $G(C) = (V,W,C)$.
Then $\card{W} \le r(|V| - r) + 1$
\end{Thm}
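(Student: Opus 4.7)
The plan is to combine two bounds on $|C|$: a lower bound from the minimum-degree statement of Proposition~\ref{Prop:deglb}, and an upper bound from rank-$r$-sparsity of $C$ minus any single edge.

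First I would invoke Proposition~\ref{Prop:deglb}, which says that in $G(C)$ every vertex has degree at least $r+1$. Summing degrees over the column vertices (each edge of $C$ contributes once to this sum in a bipartite graph) yields
\[
|C| \;=\; \sum_{j\in W}\deg(j) \;\ge\; (r+1)\,|W|.
\]
As a side benefit, the same bound applied on both sides shows $|V|,|W|\ge r+1>r$, which will let us use the ``generic'' formula $d_r(|V|,|W|)=r(|V|+|W|-r)$ for the dimension of the determinantal variety.

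Next, pick any edge $e\in C$. Since $C$ is a circuit, $C\setminus\{e\}$ is independent. Because every vertex of $G(C)$ has degree at least $r+1\ge 2$, removing a single edge leaves every vertex with positive degree, so $G(C\setminus\{e\})$ has the same vertex set $(V,W)$ as $G(C)$. By Theorem~\ref{Thm:rank-r-sparse}, $G(C\setminus\{e\})$ is rank-$r$-sparse; applying the sparsity bound to the whole graph as a subgraph of itself gives
\[
|C|-1 \;=\; |C\setminus\{e\}| \;\le\; d_r(|V|,|W|) \;=\; r(|V|+|W|-r).
\]

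Chaining the two inequalities,
\[
(r+1)|W| \;\le\; |C| \;\le\; r(|V|+|W|-r)+1,
\]
and subtracting $r|W|$ from both sides yields $|W|\le r(|V|-r)+1$, as claimed.

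I do not expect a substantive obstacle here: the two ingredients are already in place (minimum degree $\ge r+1$ from the stress argument, and rank-$r$-sparsity of independent sets), and the only point needing a moment of care is verifying that removing one edge of $C$ does not drop any vertices from $G(C)$ and that $|V|,|W|>r$, both of which follow immediately from the $r+1$ minimum degree.
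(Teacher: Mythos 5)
Your proof is correct and follows essentially the same route as the paper: the lower bound $(r+1)\card{W}\le\card{C}$ from Proposition~\ref{Prop:deglb} via a degree sum over column vertices, the upper bound $\card{C}\le r(\card{V}+\card{W}-r)+1$ from independence of $C$ minus an edge together with Theorem~\ref{Thm:rank-r-sparse}, and then subtraction of $r\card{W}$. Your additional checks (that deleting one edge keeps the vertex set intact and that $\card{V},\card{W}>r$) are details the paper leaves implicit.
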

\begin{proof}
Let $m'=\card{V}$ and $n' = \card{W}$.
Using Proposition \ref{Prop:deglb} for the lower bound and
Theorem \ref{Thm:rank-r-sparse} for the upper bound, we have
$n'(r + 1)\le \card{C}\le r(m' + n' - r) + 1$.  Subtracting $n'r$
from both sides, we get $\card{W} = n'\le r(m'-r) + 1$.
\end{proof}
\begin{Cor}\label{Thm:number-of-circuits}
The number of circuits in the rank $r$ determinantal matroid on $[m]\times [n]$
is at most $2^{mr(m - r) + m}$.
\end{Cor}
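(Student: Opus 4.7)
The plan is to derive the bound as a direct consequence of the circuit-size estimate from Theorem~\ref{Thm:circuit-size} together with a crude subset enumeration.

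First, I would apply Theorem~\ref{Thm:circuit-size} to bound the size of any circuit $C \subseteq \calE$ with bipartite graph $G(C) = (V, W, C)$. Since $V \subseteq [m]$, we have $\card{V} \le m$, and therefore
\begin{equation*}
\card{W} \le r(\card{V}-r) + 1 \le r(m-r) + 1.
\end{equation*}
Combined with $\card{C} \le \card{V} \cdot \card{W}$, this gives $\card{C} \le m(r(m-r)+1) = mr(m-r) + m$, so every circuit fits inside a bipartite ``box'' with at most $m$ row vertices and $r(m-r)+1$ column vertices.

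Second, I would encode each circuit by recording, for each row $i \in [m]$, the subset $C_i \subseteq W$ of active columns incident to $i$. Once the at most $r(m-r)+1$ active columns are labeled, each $C_i$ is one of at most $2^{r(m-r)+1}$ possible subsets, so the number of distinct row-indexed incidence patterns is at most
\begin{equation*}
\bigl(2^{r(m-r)+1}\bigr)^m = 2^{mr(m-r)+m},
\end{equation*}
which is the stated bound.

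The main subtlety --- and the principal obstacle --- is handling the choice of the active column set $W \subseteq [n]$ itself. The bound as stated is independent of $n$, so implicitly it either counts circuits up to relabeling of the $\le r(m-r)+1$ active columns, or absorbs that choice into the counting of row-subset patterns. Making the step rigorous amounts to viewing the corollary as bounding circuit \emph{patterns} on an ambient $m \times (r(m-r)+1)$ grid, after the active columns have been identified; the genuine global count on $[m] \times [n]$ would carry an additional $\binom{n}{\le r(m-r)+1}$ factor which the corollary suppresses.
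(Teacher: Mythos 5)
Your argument is essentially the paper's intended one: the corollary is stated without proof as an immediate consequence of Theorem~\ref{Thm:circuit-size}, the point being exactly that every circuit fits inside an $m\times\bigl(r(m-r)+1\bigr)$ box, whose subsets number $2^{mr(m-r)+m}$. Your closing caveat is also well taken --- the bound is independent of $n$ and therefore really counts circuit patterns after the $\le r(m-r)+1$ active columns have been fixed; a literal count of circuits as subsets of $[m]\times[n]$ would need the extra $\binom{n}{\le r(m-r)+1}$ factor you identify.
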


\subsubsection{Proof of Theorem \ref{Thm:stress-for-circuit}}
\label{Sec:proof-stress4circuit}
First, by Definition \ref{Def:stress}, $\rk_r(C)=\rk \mJ_C=|C|-1$. Thus
the left null space of $\mJ_C$ is one dimensional.

Next, we explicitly construct a stress $\mS$. By Theorem \ref{Thm:circpoly},
there is a unique polynomial $\theta_C$
for each circuit $C$. Then taking the derivative of $\theta_C$, we have
\begin{align*}
\sum_{(i,j)\in C}\left.\frac{\partial \theta_C}{\partial A_{ij}}\right|_{\Omega_C(\mA)} dA_{ij}=0,
\end{align*}
for any tangent vector $(dA_{ij})_{(i,j)\in C}$ of $\calM(m\times n,r)$
at $\mA$. The vector $(\partial \theta_C/\partial
A_{ij})_{(i,j)\in C}|_{\Omega_C(\mA)}$ is, then, a stress for $C$. In addition, the
coefficient of the stress is uniquely determined by the entries
$\Omega_C(\mA)$.  If any of the coefficients of $(\partial \theta_C/\partial A_{ij})$
were identically zero, we could remove the associated
row $ij$ of $\mJ_C$ and the left-kernel of $\mJ_{C\setminus (i,j)}$ would still be
one-dimensional.  Since this is a contradiction to $C$ being a circuit, we
conclude that none of the coefficients are identically zero.  Since the
coefficients are, in addition, polynomials in $\Omega_C(\mA)$,
each of them is non-vanishing on a Zariski open subset of
$\mathcal{M}(m\times n,r)$.  The (finite) intersection of these
sets is again open, proving that the generic
support of the stress is all of $C$.

\subsection{Completability of random masks}
\label{sec:Theory.RandomMasks}
Up to this point we have considered the completability of a fixed mask, which we
have shown to be equivalent to questions about the associated bipartite graph.
We now turn to the case where the masking is sampled at random, which,
by Corollary~\ref{Cor:graph-matroid}, implies that, generically, this is a question about
\emph{random bipartite graphs}.

\subsubsection{Random graph models}
A \emph{random graph} is a graph valued random variable.  We are specifically
interested in two such models for bipartite random graphs:
\begin{Def}
The \emph{Erd\H{o}s-R\'{e}nyi} random bipartite graph $G(m,n,p)$ is a bipartite graph on $m$ row and $n$ column vertices
vertices with each edge present with probability $p$, independently.
\end{Def}
\begin{Def}
The \emph{$(d,d')$-biregular} random bipartite graph $G(m,n,d,d')$ is the uniform distribution on
graphs with $m$ row vertices, $n$ column ones, and each row vertex with degree $d$ and
each column vertex with degree $d'$.
\end{Def}
Clearly, we need $md = nd'$, and if $m=n$, the $(d,d')$-regular random bipartite graph is,
in fact $d$-regular.

We will call a mask corresponding to a random graph a \emph{random mask}.  We
now quote some standard properties of random graphs we need.
\begin{Prop}\label{Prop:graphprops}
\begin{description}
\item[(Connectivity threshold)] The threshold for $G(m,n,p)$ to become connected, w.h.p., is
$p=\Theta((m + n)^{-1}\log n)$ \cite[Theorem 7.1]{BollobasBook}.
\item[(Minimum degree threshold)] The threshold for the minimum degree in $G(n,n,p)$
to reach $d$ is $p = \Theta((m+n)^{-1}(\log n + d\log\log n + \omega(1)))$.  When
$p = cn$, w.h.p., there are isolated vertices \cite[Exercise 3.2]{BollobasBook}.
\item [($d$-regular connectivity)] With high probability, $G(m,n,d,d')$ is $d$-connected \cite[Theorem 7.3.2]{BollobasBook}.  (Recall
that we assume $m\le n$) .
\item[(Density principle)] Suppose that the expected number of edges in either of our
random graph models is at most $Cn$, for constant $C$.  Then for every $\epsilon>0$,
there is a constant $c$, depending on only $C$ and $\epsilon$ such that, w.h.p.,
every subgraph of $n'$ vertices spanning at least $(1+\epsilon)n'$ edges has $n'\ge cn$ \cite[Lemma 5.1]{JL07}.
\item[(Emergence of the $k$-core)] Define the $k$-core of a graph to be the maximal
induced subgraph with minimum  $k$.  For each $k$, there is a constant $c_k$ such that
$p = c_k/n$ is the first-order threshold for the $k$-core to emerge.  When the $k$-core
emerges, it is giant and afterwards its size and number of edges spanned grows smoothly with $p$ \cite{PSW96}.
\end{description}
\end{Prop}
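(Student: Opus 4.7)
The five items in Proposition~\ref{Prop:graphprops} are essentially restatements of classical random graph results in the bipartite setting, and each is already accompanied by a citation in the statement itself. My plan, therefore, is not to reprove them from scratch but to verify in each case that the standard (non-bipartite or bipartite) theorem cited applies verbatim after accounting for our convention that $m \le n$ and our two-sided vertex partition, and to collect the one-line reductions.

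For the connectivity and minimum-degree thresholds, the plan is to observe that the first-moment/second-moment computations of Bollob\'as go through unchanged in the bipartite setting: the expected number of isolated vertices is $m(1-p)^n + n(1-p)^m$, and when $p$ is at the stated threshold this tends to a non-degenerate Poisson limit; a variance calculation then pins down the sharp threshold. The minimum-degree variant is the analogous Poisson approximation for the number of vertices of degree less than $d$. For the $d$-regular connectivity item I would simply cite \cite[Theorem~7.3.2]{BollobasBook}, noting that with $m \le n$ and $md = nd'$ the smaller side's degree $d$ is the relevant connectivity parameter. The density principle is a direct union bound: the expected number of subsets of $n'$ vertices spanning at least $(1+\epsilon)n'$ edges is at most $\binom{m+n}{n'} \binom{d_r(m,n)}{(1+\epsilon)n'} p^{(1+\epsilon)n'}$, and applying Stirling plus the hypothesis on expected edges makes this bound decay exponentially once $n' < cn$ for $c$ small enough depending on $C$ and $\epsilon$; this is the argument of \cite[Lemma~5.1]{JL07}.

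The most delicate item is the emergence of the $k$-core, where the plan is to cite the Pittel--Spencer--Wormald analysis \cite{PSW96} of the peeling process. One should verify the reduction: the peeling process (repeatedly remove vertices of degree less than $k$) in $G(m,n,p)$ is analyzed by the same differential equation method as in the unipartite $G(n,p)$ case, because the joint degree sequence is still asymptotically Poisson with parameter $p(m+n)/2$ on each side; the resulting fixed-point equation for the $k$-core size gives the threshold constant $c_k$ and the discontinuous emergence.

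The main obstacle, as far as there is one, is simply the bookkeeping of comparing the bipartite statements actually needed here to the unipartite versions in the cited sources; in each case the extension is routine because the key probabilistic inputs (Poisson approximation of degrees, concentration of subgraph counts, differential-equation tracking of the peeling process) do not depend on the graph being one-sided. Accordingly I would present the proof as a short paragraph-per-item verification, explicitly pointing the reader to the cited theorem and, where the citation is unipartite, noting that the bipartite version follows by the identical computation with $n$ replaced by $m+n$ and edge probability adjusted accordingly.
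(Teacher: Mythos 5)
The paper offers no proof of this proposition at all---it is explicitly introduced with ``We now quote some standard properties of random graphs we need,'' each item justified only by the inline citation---so your approach of reducing each item to the cited classical result and noting that the bipartite adaptation is routine is essentially the paper's own (implicit) argument, just spelled out. One small slip worth fixing: in your union bound for the density principle, the number of potential edges inside a subgraph on $n'$ vertices is at most $\lfloor n'/2\rfloor\cdot\lceil n'/2\rceil$, not $d_r(m,n)$ (which is the dimension of the determinantal variety and unrelated to that count); with that correction your sketch matches \cite[Lemma 5.1]{JL07}.
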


\subsubsection{Sparser sampling and the completable closure}
The lower bounds on sample size for completion of rank $r$ \emph{incoherent}
matrices do \emph{not} carry over verbatim to the generic setting
of this paper.  This is because genericity and incoherence
are related, but incomparable concepts: there are generic
matrices that are not incoherent (consider a very small perturbation of the
identity matrix); and, importantly, the block diagonal examples showing the
lower bound for incoherent completability are not generic, since many of the
entries are zero.

Thus, in the generic setting, we expect sparse sampling to be more powerful.
This is demonstrated experimentally in Section~\ref{sec:Exps.PhaseTr}. In the rest
of this section, we derive some heuristics for the expected generic
completability behavior of sparse random masks.  We are particularly
interested in the question of: \emph{when are $\Omega(mn)$ of the
entries completable from a sparse random mask}?
We call this the \emph{completability transition}.
We will conjecture that
there is a sharp threshold for the completability transition, and that the threshold
occurs well below the threshold for $G(n,m,p)$ to be completable.

Let $c$ be a constant.  We first consider  the emergence of a
circuit in $G(n,n,c/n)$.  Theorem~\ref{Thm:r-r+1-core} implies that any circuit
is a subgraph of the $(r+1)$-core.  By Theorem \ref{Thm:finite-completable-closure}
and Proposition \ref{Prop:matroid}, having a circuit is
a monotone property, which occurs with probability one for graphs with more than $2rn$
edges, and thus the value
\[
t_r := \sup \{ t : \text{$G(n,n,t/n)$ is $r$-independent, w.h.p.} \}
\]
is a constant.  If we define $C_r$ as
\[
C_r := \sup \{ c : \text{the $(r+1)$-core of $G(n,n,c/n)$ has average degree at most $2r$, w.h.p.} \}
\]
smoothness of the growth of the $(r+1)$-core implies that we have
\[
c_{r+1} \le t_r \le C_{r+1}
\]
where we recall that $c_{r+1}$ is the threshold degree for the $(r+1)$-core to emerge.
Putting things together we get:
\begin{Prop}\label{Prop:circthresh}
There is a constant $t_r$ such that, if $c < t_r$ then w.h.p., $G(n,n,c/n)$ is
$r$-independent, and, if $c > t_r$ then w.h.p. $G(n,n,c/n)$ contains a
giant $r$-circuit inside the $(r+1)$-core.  Moreover, $t_r$ is at most the
threshold for the $(r+1)$-core to reach average degree $2r$.
\end{Prop}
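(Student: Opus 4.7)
The plan is to prove the three assertions --- existence of a threshold constant $t_r$, giant-circuit behavior above it, and the upper bound $t_r \le C_{r+1}$ --- by combining monotonicity of matroidal dependence with the structural results from Sections \ref{Sec:finitecompletability}--\ref{Sec:graphconditions} and the standard random-graph facts catalogued in Proposition \ref{Prop:graphprops}. Much of the outline is already sketched in the prose preceding the statement; my task is to fill in the quantitative bounds and the sharp-threshold behavior.

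First I would establish that $t_r$, as defined above the proposition, is well-defined, positive and finite. The key observation is that ``$E$ contains a rank-$r$ circuit'' is a monotone increasing graph property: by Proposition \ref{Prop:matroid}, adding edges to a dependent set keeps it dependent and hence preserves the existence of a circuit. Positivity $t_r \ge c_{r+1} > 0$ follows from Theorem \ref{Thm:r-r+1-core}(ii), which locates every circuit inside the $(r+1)$-core; below the core-emergence threshold $c_{r+1}/n$ (Proposition \ref{Prop:graphprops}) the $(r+1)$-core is empty w.h.p., so no circuit exists. Finiteness follows from the rank-$r$-sparsity bound of Theorem \ref{Thm:rank-r-sparse}: any independent $E \subseteq \calE$ satisfies $|E| \le d_r(n,n) = r(2n-r)$, while $G(n,n,c/n)$ has $cn$ expected edges and is sharply concentrated, so for any $c > 2r$ the graph must be dependent w.h.p. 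Monotonicity together with a standard sharp-threshold theorem for monotone bipartite random-graph properties then produces a sharp threshold $t_r \in [c_{r+1}, 2r]$.

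Next I would derive the tighter upper bound $t_r \le C_{r+1}$. By definition of $C_{r+1}$, for $c$ strictly above $C_{r+1}$ the $(r+1)$-core of $G(n,n,c/n)$ has average degree exceeding $2r$ w.h.p., and hence spans strictly more than $r(|V'|+|W'|)$ edges, where $(V',W')$ is its vertex set. Since $d_r(|V'|,|W'|) \le r(|V'|+|W'|)$ for all subgraphs, Theorem \ref{Thm:rank-r-sparse} forces the core to be dependent and therefore to contain a circuit. For the ``giant'' part of the conclusion, I combine the smooth-growth statement for $k$-cores in Proposition \ref{Prop:graphprops} with the fact that the $(r+1)$-core is itself giant once it has emerged: above $C_{r+1}$ the matroidal nullity $|E'| - \rk_r(E')$ of the core grows linearly in $n$, so the union of circuits inside the core covers $\Theta(n)$ edges.

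The hard part will be making the ``giant'' qualifier for the circuit fully rigorous. A priori a single circuit $C$ can be quite small --- Theorem \ref{Thm:circuit-size} only upper bounds $|W|$ by $r(|V|-r)+1$ without forcing it to be large --- so the bare existence of \emph{some} circuit does not immediately yield a single circuit of linear size. The cleanest route is probably a matroid-union argument: a linear-sized matroidal nullity inside the $(r+1)$-core produces either one linear-sized circuit or a linear family of disjoint smaller ones, and in either case the natural reading of ``giant $r$-circuit inside the $(r+1)$-core'' is satisfied. If producing a single circuit of linear size proves delicate, one can retreat to the weaker but still informative statement that the finite completable closure inside the $(r+1)$-core has $\Theta(n)$ positions, using Theorem \ref{Thm:finite-completable-closure} to translate matroid closure into completability.
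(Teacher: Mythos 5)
Your treatment of the threshold constant and of the bound $t_r \le C_{r+1}$ matches the paper's (informal) argument: monotonicity of dependence, emptiness of the $(r+1)$-core below $c_{r+1}$ via Theorem \ref{Thm:r-r+1-core}(ii) for the lower bound, the count $|E| \le d_r(n,n) = r(2n-r)$ from Theorem \ref{Thm:rank-r-sparse} for finiteness, and the average-degree computation $|E'| > r(|V'|+|W'|) > d_r(|V'|,|W'|)$ forcing dependence of the core above $C_{r+1}$. (Your appeal to a ``standard sharp-threshold theorem'' to upgrade ``not independent w.h.p.'' to ``dependent w.h.p.'' is no less rigorous than what the paper does, which is simply to assert that $t_r$ is a constant; neither treatment fully nails this down.)

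The genuine gap is in the ``giant'' claim, which you flag as the hard part and then handle with a matroid-union argument. That argument does not deliver the stated conclusion: its output is ``either one linear-sized circuit or a linear family of disjoint smaller ones,'' and the second branch is not a giant circuit, so you have only proved a weaker statement. The paper's own toolkit closes this directly and you should use it: by Proposition \ref{Prop:deglb}, \emph{every} vertex of \emph{every} rank-$r$ circuit $C$ has degree at least $r+1$, so $G(C)$ spans at least $\tfrac{r+1}{2}\,n'$ edges on its $n'$ vertices, i.e.\ at least $(1+\epsilon)n'$ edges with $\epsilon = \tfrac{r-1}{2} > 0$ for $r \ge 2$. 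The density principle of Proposition \ref{Prop:graphprops} then says that, w.h.p., any such subgraph of $G(n,n,c/n)$ has $n' \ge cn$ vertices --- hence every circuit that appears is automatically giant, with no need to manufacture one from the nullity. This is what the paper means when it says circuits ``are all giant when they do'' emerge. (Note this argument genuinely requires $r \ge 2$; for $r=1$ circuits are cycles of density exactly $1$ and the claim fails, so the proposition should be read with that restriction.)
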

Proposition \ref{Prop:circthresh} gives us some structural information about where to
look for rank $r$ circuits in $G(n,n,c/n)$: they emerge suddenly inside of the $(r+1)$-core
and are all giant when they do.  If rank $r$ circuits were themselves completable, this would then yield a threshold for the completability transition.
Unfortunately, the discussion in Section \ref{Sec:sparsitynotsufficient} tell us
that this is not always true.  Nonetheless, we conjecture:
\begin{Conj}\label{Conj:completablecomponent}
The constant $t_r$ is the threshold for the completability transition in
$G(n,n,c/n)$.  Moreover, we conjecture that almost all of the $(r+1)$-core is
completable above the threshold.
\end{Conj}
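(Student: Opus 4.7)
The plan is to establish both parts of Conjecture \ref{Conj:completablecomponent} via a two-stage argument that bridges the combinatorial threshold from Proposition \ref{Prop:circthresh} with algebraic independence in the rank $r$ determinantal matroid. I would first show that for $c > t_r$ the giant rank-$r$-sparse dependent subgraph produced inside the $(r+1)$-core by Proposition \ref{Prop:circthresh} is, with high probability, a genuine matroid circuit and hence admits a completable closure of positive size by Theorem \ref{Thm:finite-completable-closure}. I would then use the monotonicity of $\close_r(\cdot)$ together with the structural results on the $(r+1)$-core from Section \ref{Sec:r-core} to propagate completability outward until it exhausts a $1-o(1)$ fraction of that core.

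For the first stage, the main difficulty is precisely the obstruction in Section \ref{Sec:sparsitynotsufficient}: rank-$r$-sparsity can coexist with algebraic dependence when two bases are glued along $r$ common vertices. The natural tool to rule such configurations out is the density principle from Proposition \ref{Prop:graphprops}, applied to $G(n,n,c/n)$ with $c$ a constant: any subgraph on $n'$ vertices spanning at least $(1+\epsilon)n'$ edges satisfies $n' = \Omega(n)$, so constant-sized pathological overlap regions do not appear w.h.p. The program is then to classify, matroid-theoretically, the local patterns through which a rank-$r$-sparse set can fail to be independent in the determinantal matroid, and to combine each such pattern with the density principle at the appropriate scale; the expectation is that the gluing phenomenon is the only obstruction, up to rearrangements allowed by the matroid.

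For the second stage, once a giant matroid circuit $C$ exists inside the $(r+1)$-core, I would exploit the fact that every vertex of the $(r+1)$-core has at least $r+1$ incident edges, so that as soon as a sufficient portion of the edges at such a vertex becomes completable, the remaining ones follow via a single $(r+1)\times(r+1)$ minor of Algorithm \ref{Alg:minor}, or more generally via the circuit polynomials of Theorem \ref{Thm:circpoly}. Making this precise calls for a percolation/bootstrap argument on the $(r+1)$-core, seeded by $C$: iteratively enlarge $\close_r$ one vertex at a time, using Proposition \ref{Prop:deglb} as the local spreading lemma. The smooth growth of the $(r+1)$-core past its emergence (Proposition \ref{Prop:graphprops}) should then yield the ``almost all'' conclusion quantitatively.

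The hardest ingredient, in my view, is pinning the critical constant to be exactly $t_r$ rather than some larger value: both Proposition \ref{Prop:circthresh} and the density principle are a priori compatible with a gap between $t_r$ and the true completability transition, because $t_r$ only detects the first rank-$r$-sparse dependent subgraph. Closing this gap would require a sharper understanding of how matroid independence and rank-$r$-sparsity coincide for random masks at the precise density where the first circuit emerges; a plausible but delicate route is a second-moment or switching argument comparing the count of rank-$r$-sparse dependent subgraphs with the count of matroid circuits in $G(n,n,c/n)$ near $c = t_r$, which appears to demand genuinely new combinatorial tools for the determinantal matroid beyond those developed in Section \ref{Sec:graphconditions}.
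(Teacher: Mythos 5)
This statement is a conjecture, not a theorem: the paper gives no proof, and explicitly cautions that even the \emph{existence} of the completability threshold has not been established, since $r$-independent graphs whose rank-$r$ closure is nonetheless giant have not been ruled out. Your proposal is a research program rather than a proof, and each of its two stages contains a genuine gap. In the first stage, the density principle of Proposition~\ref{Prop:graphprops} only forces subgraphs of edge density above $(1+\epsilon)$ to have linearly many vertices; it does not exclude the gluing obstruction of Section~\ref{Sec:sparsitynotsufficient} occurring at the \emph{giant} scale (two giant bases sharing $r$ vertices), so ruling out ``constant-sized pathological overlap regions'' does not rule out the pathology. Moreover, the classification of ``local patterns through which a rank-$r$-sparse set can fail to be independent'' that you invoke is precisely the open problem of characterizing circuits of the determinantal matroid in rank $\ge 2$, listed in Section~\ref{Sec:discussion}; this is the determinantal analogue of the Laman-type characterizations whose availability in dimension $2$ is exactly what makes the rigidity-transition result of \cite{KMT11} provable, and whose absence here is why the statement remains a conjecture.

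In the second stage, Proposition~\ref{Prop:deglb} is a \emph{necessary} condition on circuits (every vertex of a circuit has degree at least $r+1$); it is not a spreading lemma, and having $r+1$ completable entries incident to a vertex does not by itself make further entries at that vertex completable. The minor-by-minor propagation via Algorithm~\ref{Alg:minor} cannot even be seeded near the conjectured threshold: at density $c/n$ with $c$ constant and $r\ge 2$, the density principle implies that w.h.p.\ there is no $(r+1)\times(r+1)$ biclique minus an edge anywhere in the graph, which the paper itself notes when explaining why $2r$-regular masks appear ``hard'' to complete despite being completable. Finally, you correctly identify that pinning the transition at exactly $t_r$ rather than some larger constant is the crux and concede it requires genuinely new tools; that concession is the conjecture itself. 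What the paper offers in place of a proof is the sandwich $c_{r+1}\le t_r\le C_{r+1}$ of Proposition~\ref{Prop:circthresh}, the analogy with two-dimensional bar-joint frameworks, and the experimental evidence of Section~\ref{sec:Exps.PhaseTr}.
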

We want to stress that the conjecture includes a conjecture about the
\emph{existence} of the threshold for the completabilty transition, which hasn't been
established here, unlike the existence for the emergence of a circuit.  The subtlety is
that we haven't ruled out examples of $r$-independent graphs with no rank-$r$-spanning
subgraph for which, nonetheless, the closure in the rank $r$ completion matroid is giant.  Conjecture
\ref{Conj:completablecomponent} is explored experimentally in Sections~\ref{sec:Exps.randomdiff}
and~\ref{sec:Exps.PhaseTr}.  The conjectured behavior is analogous to what has been proved for
distance matrices (also known as \emph{bar-joint frameworks}) in dimension $2$ in \citep{KMT11}.

Our second conjecture is about $2r$-regular masks.
\begin{Conj}\label{Conj:2r}
With high probability $G(n,n,2r,2r)$ is completable.  Moreover, we conjecture that
it remains so, w.h.p., after removing $r^2$ edges uniformly at random.
\end{Conj}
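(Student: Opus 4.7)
The plan is to attack both parts by reducing to matroid basis structure in $G(n,n,2r,2r)$, combining the tools of Section \ref{Sec:graphconditions} with the random graph facts from Proposition \ref{Prop:graphprops}. The starting observation is an exact edge count: a $2r$-regular bipartite graph on $n+n$ vertices has $2rn$ edges, while the rank of the determinantal matroid on $[n]\times[n]$ is $d_r(n,n)=2rn-r^2$. So completability in part (i) is precisely the statement that $G$ spans the rank-$r$ determinantal matroid, and its corank equals $r^2$; therefore in part (ii) removing $r^2$ edges uniformly at random asks that the remaining edges form a basis, equivalently, that the deleted $r^2$-set is a cobasis.

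For part (i), I would establish spanning by first verifying rank-$r$-sparsity of subgraphs and then promoting it to matroid independence. Biregularity forces $|E'|\le 2r\min(m',n')$ for every induced subgraph $(V',W',E')$, and for $\min(m',n')\le n-r$ this is automatically bounded by $d_r(m',n')$, so only near-balanced, large subgraphs need further attention, which the density principle plus a first-moment union bound over biregular subgraph counts should control. The upgrade from sparsity to independence is the delicate step; the tools on hand are the circuit bound of Corollary \ref{Thm:number-of-circuits}, the degree lower bound of Proposition \ref{Prop:deglb}, and the stress characterization Theorem \ref{Thm:stress-for-circuit}. A plausible route is to show that any would-be matroid circuit carries both a structural signature (min degree $r+1$ everywhere in its support, a localized stress) that is unlikely to be consistent with the expansion/biregularity of $G(n,n,2r,2r)$, and then apply a first-moment argument over possible circuit supports.

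For part (ii), condition on the part (i) event and analyze the random $r^2$-deletion. Because the corank equals the number of deleted edges, the remaining graph is a basis iff the deletion is a cobasis of the (matroid) dual. One can hope to show a stronger statement: w.h.p.\ over the choice of $G$, \emph{every} $r^2$-subset of $E(G)$ that is sufficiently spread across the graph is a cobasis. This reduces part (ii) to a concentration statement about the interaction of the uniformly random $r^2$-set with the (finitely many) fundamental circuits identified in part (i), which can be handled via matroid exchange together with a standard ball-in-bin argument using that $r^2$ is constant while $2rn$ grows.

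The hard part is unquestionably the promotion from rank-$r$-sparsity to matroid independence: as highlighted in Section \ref{Sec:sparsitynotsufficient}, the two notions diverge already for $r=2$, so edge-count arguments alone cannot close the gap. Any successful proof must inject a structural input beyond counting, for instance an expansion-based obstruction showing that the ``two-bases-glued-along-$r$-vertices'' configurations underlying the counterexamples are excluded by biregular concentration, or a direct generic-Jacobian argument that exploits the algebraic structure of $\calM(n\times n, r)$ to rule out the stresses such gluings would produce. I would expect the argument to require both: an expansion step to isolate a ``core'' spanning structure and a Jacobian/stress-rank computation to certify it is matroid-independent.
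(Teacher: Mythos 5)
This statement is Conjecture~\ref{Conj:2r} in the paper: the authors do not prove it, they only offer experimental evidence (Section~\ref{sec:Exps.PhaseTr}) and an analogy with proven results for $2$-dimensional bar-joint frameworks. So there is no paper proof to compare against, and your proposal does not close the gap either --- it is a plan whose central step is left unresolved, as you yourself acknowledge. Your framing of the numerology is correct and matches the paper's: $G(n,n,2r,2r)$ has $2rn$ edges while $d_r(n,n)=2rn-r^2$, so completability of the $2r$-regular mask is exactly the statement that $E$ spans the rank-$r$ determinantal matroid with corank $r^2$, and part (ii) is then the statement that a uniformly random $r^2$-subset is a cobasis. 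This explains why $r^2$ is the right number of deletions and is consistent with the experiments, where the transition occurs at $2rn-r^2$ edges.

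The genuine gap is the promotion from rank-$r$-sparsity to matroid independence, and the remedies you sketch do not supply it. A first-moment union bound over ``possible circuit supports'' requires an a priori combinatorial description of which edge sets can be dependent in the rank-$r$ determinantal matroid, which is precisely what is unknown: the paper lists the characterization of circuits in rank $\ge 2$ as an open problem, and Section~\ref{Sec:sparsitynotsufficient} shows that sparsity-type counting cannot distinguish dependent sets from independent ones already for $r=2$. The only structural signatures available (Proposition~\ref{Prop:deglb}, Theorem~\ref{Thm:stress-for-circuit}, Corollary~\ref{Thm:number-of-circuits}) constrain degrees and count circuits but do not give a probability bound for a fixed support being dependent under the biregular model, so the union bound has nothing to sum. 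There is also a smaller error in your sparsity check: the inequality $2r\min(m',n')\le d_r(m',n')$ holds when $\min(m',n')\le\max(m',n')-r$, not when $\min(m',n')\le n-r$; balanced \emph{small} induced subgraphs (such as the $5\times 5$ counterexample with $16$ edges in rank $2$) are exactly the ones that need to be excluded by a separate argument. Finally, part (ii) inherits the same gap: deciding whether a spread $r^2$-set is a cobasis again requires knowing the circuit structure of the spanning graph from part (i). The conjecture should be treated as open; any honest attempt must inject a new algebraic or expansion-theoretic input of the kind proved for the $2$-dimensional rigidity matroid in the reference the authors cite.
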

We provide evidence in Section~\ref{sec:Exps.PhaseTr}. This behavior is strikingly different than
the incoherent case, and consistent with proven results about $2$-dimensional distance matrices~\citep[Theorem 4.1]{JSS07}.

\subsubsection{Denser sampling and the $r$-closure}
The conjectures above, even if true, provide only information about matrix \emph{completability} and
not matrix \emph{completion}.  In fact, the convex relaxation of \citet{CanRec09} does not seem to
do very well on $2r$-regular masks in our experiments,  % 
and the density principle for sparse
random graphs implies that, w.h.p., a $2r$-regular mask has no dense enough subgraphs for our
closability algorithm in section~\ref{sec:Algos.closability.graph} to even get started.  Thus it seems possible that
these instances are quite ``hard'' to complete even if they are known to be completable.

If we consider denser random masks, then the closability algorithm becomes more practical.  A
particularly favorable case for it is when every missing entry is part of some $K^-_{r+1,r+1}$.
In this case, the error propagation will be minimal and, heuristically, finding a $K^-_{r+1,r+1}$
is not too hard, even though the problem is NP-complete in general.

Define the \emph{$1$-step $r$-closure} of a bipartite graph $G$ as the graph $G'$ obtained by adding the missing
edge to each $K^-_{r+1,r+1}$ in $G$.  If the $1$-step closure of $G$ is $K_{n,n}$, we define $G$ to be
\emph{$1$-step $r$-closable}. We conjecture an upper bound on the threshold for $1$-step $r$-closability.
\begin{Conj}\label{Thm:1step}
There is a constant $C > 0$ such that, if $p = Cn^{-2/(r+2)}\log n$ then, w.h.p.,
$G(n,n,p)$ is $1$-step $r$-closable.
\end{Conj}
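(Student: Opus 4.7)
The plan is to show that, for $p = C n^{-2/(r+2)} \log n$ with $C$ sufficiently large, every pair $(i,j) \in [n]\times [n]$ (whether an edge of $G(n,n,p)$ or not) is contained in some $K^-_{r+1,r+1}$ subgraph with $(i,j)$ as the missing edge.  For a fixed pair $(i,j)$, let $X_{i,j}$ count the number of configurations $(I,J)$ with $I\subseteq [n]\setminus\{i\}$, $J\subseteq [n]\setminus\{j\}$, $|I|=|J|=r$, such that every edge in $(\{i\}\cup I)\times(\{j\}\cup J)$ except possibly $(i,j)$ is present in $G$.  Each configuration requires $r^2 + 2r = r(r+2)$ prescribed edges, so
\[
\mathbb{E}[X_{i,j}] = \binom{n-1}{r}^2 p^{r(r+2)} \;\sim\; \frac{1}{(r!)^2}\,C^{r(r+2)}\,(\log n)^{r(r+2)}.
\]
By a union bound, it then suffices to prove $\Pr[X_{i,j} = 0] = o(n^{-2})$ for each fixed $(i,j)$.

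The main tool would be Janson's inequality.  Label the configurations by $\sigma = (I,J)$, let $X_\sigma$ be its indicator, and write $\sigma\sim\sigma'$ when $\sigma\neq\sigma'$ and the two configurations share a required edge.  One then sets
\(
\Delta = \sum_{\sigma\sim\sigma'}\mathbb{E}[X_\sigma X_{\sigma'}]
\)
and invokes the bound $\Pr[X_{i,j} = 0] \le \exp\bigl(-\mu^2/(2(\mu+\Delta))\bigr)$ with $\mu = \mathbb{E}[X_{i,j}]$.  Grouping $\sigma\sim\sigma'$ by $a = |I\cap I'|$ and $b = |J\cap J'|$ (with $a+b+ab\ge 1$), the overlap contains $a+b+ab$ shared edges, so
\[
\Delta \;\lesssim\; \mu^2 \sum_{\substack{0\le a,b\le r\\ a+b+ab\ge 1}} n^{-a-b}\,p^{-a-b-ab}.
\]
A direct calculation shows that each term of this sum is $O((\log n)^{-1})$ when $C$ is chosen large enough: the extremal term $(a,b)=(1,0)$ contributes $n^{-r/(r+2)}/\log n$, while $(a,b)=(1,1)$ contributes $n^{(2-2r)/(r+2)}/\log^{3} n$, and all other $(a,b)$ give strictly smaller exponents.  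Hence $\Delta \lesssim \mu^2 /\log n$ for $r\ge 1$, and consequently $\mu^2/(2(\mu+\Delta)) = \Omega(\mu) = \Omega((\log n)^{r(r+2)})$.

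Plugging this into Janson yields $\Pr[X_{i,j} = 0] \le \exp\bigl(-c\,(\log n)^{r(r+2)}\bigr)$ for a constant $c = c(C,r) > 0$, which is dramatically smaller than $n^{-2}$; a union bound over the at most $n^2$ pairs $(i,j)$ then completes the argument.  The main obstacle I anticipate is the careful bookkeeping in the Janson computation: correctly enumerating ordered pairs $(\sigma,\sigma')$ by $(a,b)$, verifying that $(a,b)=(1,0)$ or $(1,1)$ really are the dominant contributions for every $r\ge 1$, and choosing $C$ large enough that the contribution of the $(1,1)$ term (which for $r=1$ does not decay in $n$) is swamped by $\mu$.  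Once those are handled, the rest is a standard first-moment plus Janson plus union-bound computation.
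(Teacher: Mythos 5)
This statement is Conjecture \ref{Thm:1step}: the paper offers no proof of it, so there is nothing to compare your argument against, and I can only assess it on its own terms. The good news is that it looks essentially correct. The reduction to showing $\Pr[X_{i,j}=0]=o(n^{-2})$ for each fixed pair is valid (the configurations counted by $X_{i,j}$ never use the edge $(i,j)$ itself, so $X_{i,j}$ is independent of whether $(i,j)$ is observed, and pairs that happen to be edges are harmless in the union bound); the counts of $r(r+2)$ prescribed edges per configuration and of $a+b+ab$ shared edges for an overlap $(a,b)$ are both right; and $\mu\asymp C^{r(r+2)}(\log n)^{r(r+2)}/(r!)^2$ is large enough to beat the $n^2$ union bound precisely because $r(r+2)\ge 3>1$. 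This is the natural first-moment-plus-Janson route, and the key inequality it needs --- that every admissible overlap term is $o(1)$ --- does hold.

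Three bookkeeping corrections. First, the $(a,b)$ term in your sum for $\Delta/\mu^2$ has $n$-exponent $\bigl(2ab-r(a+b)\bigr)/(r+2)$, whose maximum over admissible overlaps is $-r/(r+2)$, attained not only at $(1,0)$ and $(0,1)$ but also at $(r,r-1)$ and $(r-1,r)$ (and, when $r=2$, at $(1,1)$ as well); your claim that everything other than $(1,0)$ and $(1,1)$ has strictly smaller exponent fails for $r\ge 3$, but harmlessly, since all that matters is that the maximum is strictly negative, which one checks by verifying $r(a+b)-2ab\ge r$ for all $(a,b)\in\{0,\dots,r\}^2$ other than $(0,0)$ and $(r,r)$. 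Second, the case you flag as delicate --- $(a,b)=(1,1)$ when $r=1$ --- is the diagonal $\sigma=\sigma'$ (as is $(r,r)$ for general $r$) and is excluded from $\Delta$, so it never arises. Third, and as a consequence, you get more than you claim: $\Delta/\mu^2=O\bigl(n^{-r/(r+2)}\,\mathrm{polylog}\,n\bigr)\to 0$ polynomially, hence $\Delta=o(1)=o(\mu)$, and the plain form $\Pr[X_{i,j}=0]\le e^{-\mu+\Delta/2}$ of Janson's inequality already gives $e^{-(1-o(1))\mu}=o(n^{-2})$. In particular no largeness assumption on $C$ is needed anywhere, so the argument proves the statement for every constant $C>0$. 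Written out carefully, this settles the conjecture affirmatively.
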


\section{Experiments}
\label{Sec:experiments}
In this section we will investigate the set of entries that are
finitely completable from a set of given entries. In
section~\ref{Sec:finitecompletability} we have seen that the finitely
completable closure $\close_r(E)$ does not depend on the values of the
observed entries but only on their positions $E$. First, we check the set of completable entries for synthetic random positions and empirically investigate the completability phase transitions in terms of the number of known entries, as described in Section~\ref{sec:Theory.RandomMasks}. We also check the number of completable entries for MovieLens data set in terms of the putative rank. Then, we present experiments on actual reconstruction and algorithm-independent error estimation in the case of rank one matrices.

\subsection{Randomized algorithms for completability}
\label{sec:Exps.randomdiff}

For a quantitative analysis, we perform experiments to investigate how the expected number of completable entries is influenced by the number of known entries. In particular, section~\ref{sec:Theory.RandomMasks} suggests that a phase transition between the state where only very few additional entries can be completed and the state where a large set of entries can be completed should take place at some point. Figure~\ref{fig:compcore} shows that this is indeed the case when slowly increasing the number of known entries: first, the set of completable entries is roughly equal to the set of known entries, but then, a sudden phase transition occurs and the set of completable entries quickly reaches the set of all entries.

\begin{figure}[ht]
\begin{center}
\subfigure[Results for $m=15,n=15,r=2$]{%
\label{fig:compcore1515r2}
\includegraphics[height=0.4\textwidth]{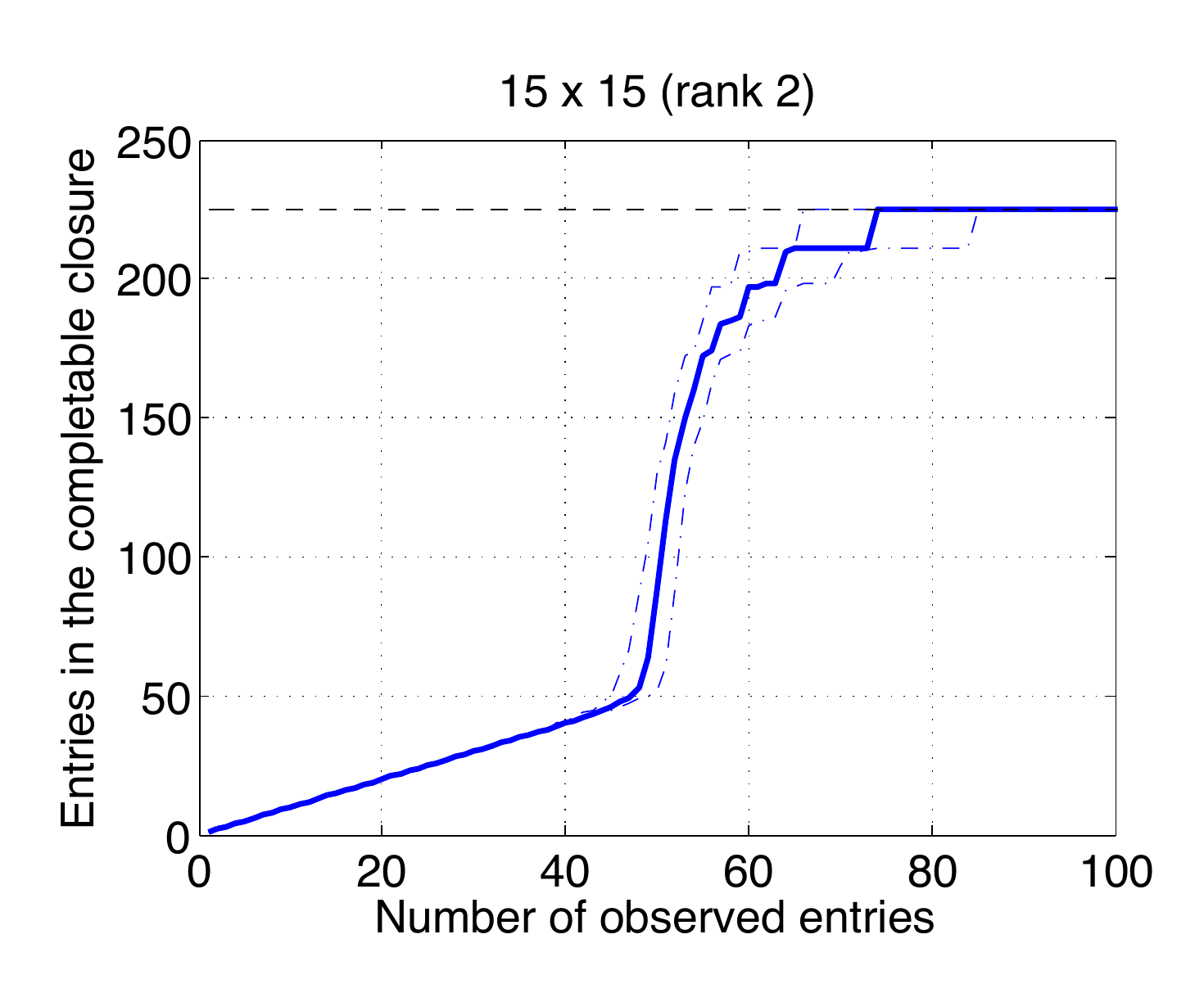}
}~\subfigure[Results for $m=20,n=20,r=5$]{%
\label{fig:compcore2020r5}
\includegraphics[height=0.4\textwidth]{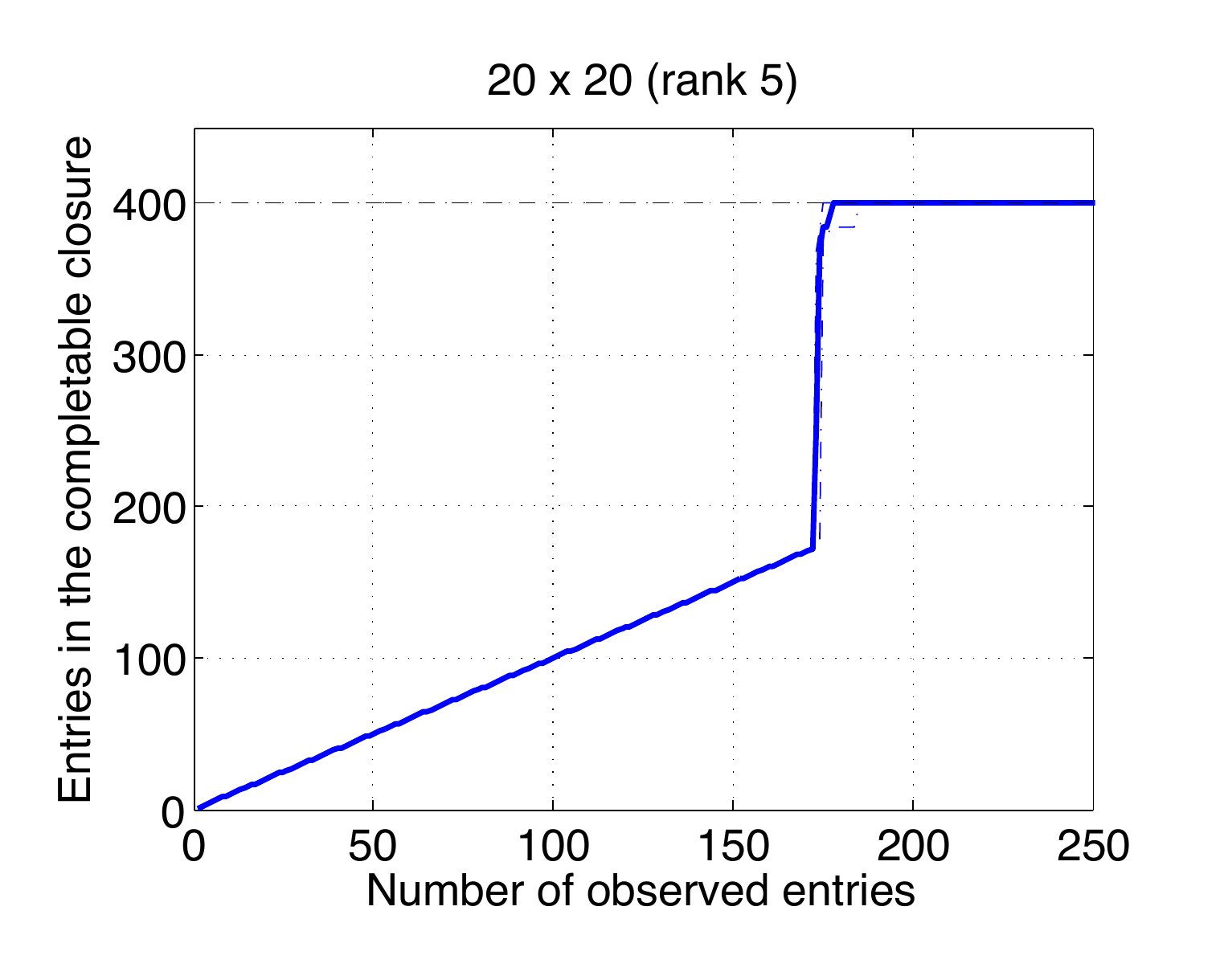}
}

\end{center}
\caption{
Expected number of completable entries (in rank $r$) versus the number of known entries where the positions of the known entries are uniformly randomly sampled in an $(m\times n)$-matrix. The expected number of completable entries was estimated for each data points from repeated calculations of the completable closure ($200$ for $r=2$, and $20$ for $r=5$). The blue solid line is the median, the blue dotted lines are the two other quartiles. The black dotted line is the total number of entries, $m\cdot n$.
}
\label{fig:compcore}
\end{figure}

\subsection{Phase transitions}
\label{sec:Exps.PhaseTr}

Figure~\ref{fig:conditions_100_100_3} shows phase transition curves of
various conditions for $100\times 100$ matrices at rank 3. We consider
uniform sampling model here. More specifically, we generated random
$100\times 100$ masks with various number of edges by
first randomly sampling the order of edges (using MATLAB
\texttt{randperm} function) and adding 100 entries at a
time from 100 to 6000 sequentially. In this way, we made sure to preserve the
monotonicity of the properties considered here. This experiment was
repeated 100 times and averaged to obtain estimates of success probabilities.
The conditions plotted are (a) minimum degree at least $r$, (b)
$r$-connected, (c) completable at rank $r$, (d) minor closable in rank
$r$ (e) nuclear norm successful, and (f) one-step minor closable. For nuclear norm minimization (e),
we used the implementation of the algorithm in~\citep{THK10} which solves the
minimization problem
\begin{align*}
\hat{\mX}= {\rm arg}\min_{\mX}\|\mX\|_{\ast}\quad\text{subject to}\quad X_{ij}=A_{ij}\quad\forall (i,j)\in E,
\end{align*}
where $\|\mX\|_{\ast}=\sum_{j=1}^{r}\sigma_j(\mX)$ is the nuclear norm of
$X$. The success of nuclear norm minimization is defined as the relative
error $\|\hat{\mX}-\mA\|_F/\|\mA\|_F$ less than 0.01.

The success probabilities of the (a) minimum degree, (b) $r$-connected,
and (c) completable are almost on top of each other, and
exceeds chance (probability 0.5) around $|E|\simeq 1,000$. The success probability of the (d)
minor closable curve passes through 0.5 around $|E|\simeq
1,300$. Therefore the $r$-closure method is nearly optimal. On the other hand, the nuclear norm minimization required about
$2,200$ entries to succeed with probability larger than 0.5.

Figure~\ref{fig:conditions_100_100_6} shows the same plot as above for
$100\times 100$ matrices at rank 6. The success probabilities of the (a)
minimum degree, (b) $r$-connected, (c) completable are again
almost the same, and exceeds chance probability 0.5 around $|E|\simeq
1,400$. On the other hand, the number of entries required for
minor closability is at least $3,700$. This is because the masks that we
need to handle around the optimal sampling density is so large
and sparse that we cannot hope to find a $6\times 6$ biclique required by
the minor clusre algorithm to even get started. The nuclear norm minimization required about $3,100$ samples.

\begin{figure}[tb]
\begin{center}
\includegraphics[width=.7\textwidth,clip]{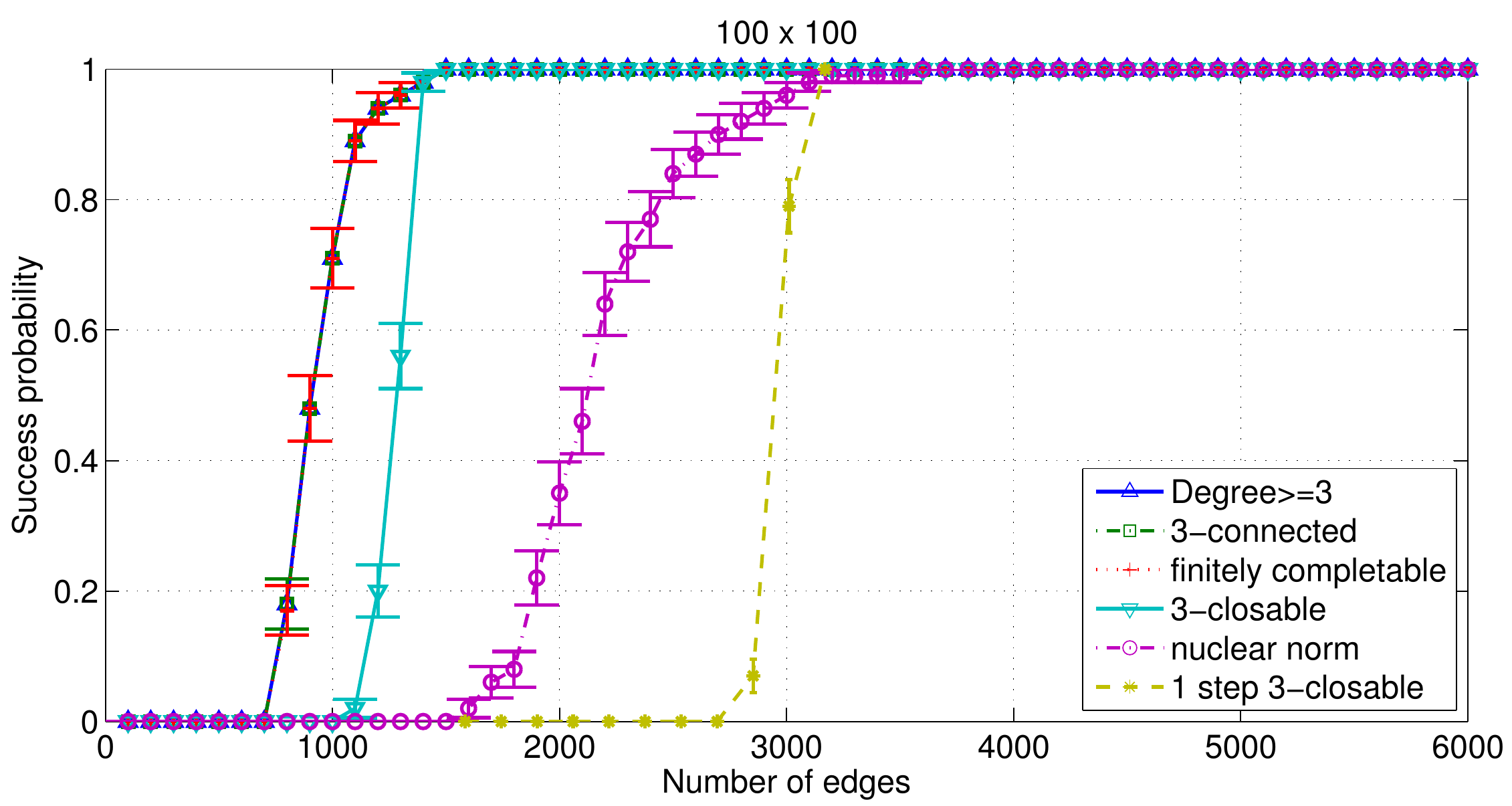}
\caption{Phase transition curves of various conditions for $100\times
100$ matrices at rank 3.}
\label{fig:conditions_100_100_3}
\end{center}
\end{figure}

\begin{figure}[tb]
\begin{center}
\includegraphics[width=.7\textwidth,clip]{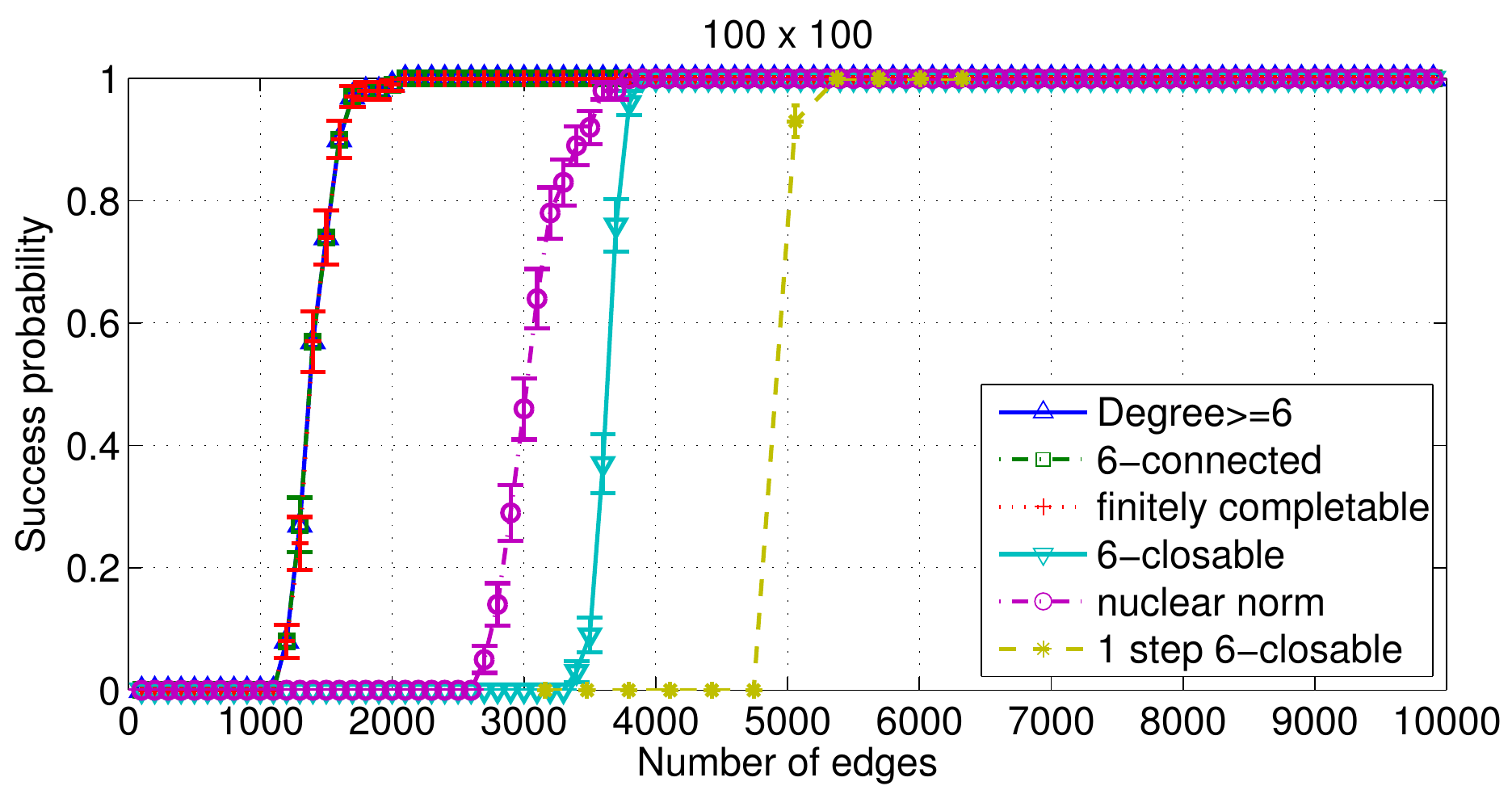}
\caption{Phase transition curves of various conditions for $100\times
100$ matrices at rank 6.}
\label{fig:conditions_100_100_6}
\end{center}
\end{figure}

Figure~\ref{fig:regular} shows the phase transition from a
non-completable mask to a completable mask for almost $2r$-regular
random masks. Here we first randomly sampled $n\times n$ $2r$-regular
masks using Steger \& Wormald algorithm~\citep{SteWor99}. Next we randomly permuted the
edges included in the mask and the edges not included in the mask
independently and concatenated them into a single list of edges. In this
way, we obtained a length $mn$ ordered list of edges that become $2r$-regular
exactly at the $2rn$th edge. For each ordered list sampled this way, we
took the first $2rn+i$ edges and checked whether the mask corresponding to
these edges was completable for $i=-15,-14,\ldots,5$.
This procedure was repeated 100 times and averaged to obtain a probability estimate.
In order to make sure that the phase transition is indeed caused by the
regularity of the mask, we conducted the same experiment with row-wise
$2r$-regular masks, i.e., each row of the mask contained exactly $2r$
entries while the number of non-zero entries varied from a column to another.

In Figure~\ref{fig:regular}, the phase transition curves for
different $n$ at rank 2 and 3 are shown. The two plots in the top part show the
results for the $2r$-regular masks, and the two plots in the bottom show the
same results for the $2r$-row-wise regular masks. For the $2r$-regular
masks, the success probability of completability sharply rises
when the number of edges exceeds $2rn-r^2$ ($i=-4$ for $r=2$ and $i=-9$
for $r=3$); the phase transition is already rather sharp for $n=10$ and
for $n\geq 20$ it becomes almost zero or one. On the other hand, the
success probabilities for the $2r$-row-wise regular masks grow rather
slowly and approach zero for large $n$. This is natural, since it is
likely for large $n$ that there is some column with non-zero entries
less than $r$, which violates the necessary conditions in Corollary~\ref{Cor:combinatorial-independence-conditions}.

\begin{figure}[tb]
\begin{center}
\includegraphics[width=.7\textwidth,clip]{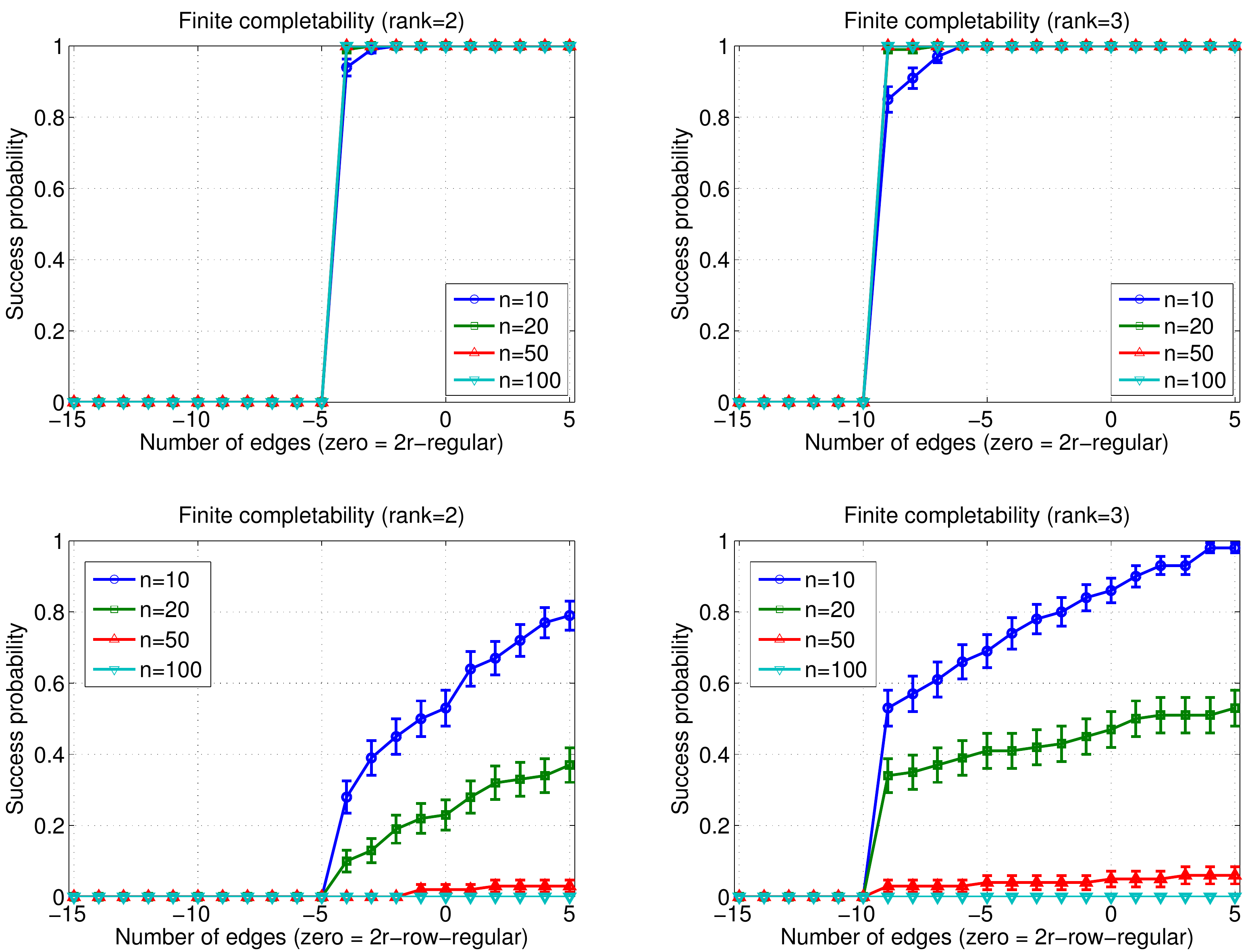}
\caption{Phase transition in an almost regular mask.}
\label{fig:regular}
\end{center}
\end{figure}

\subsection{Completability of the MovieLens data set}
\label{sec:Exps.Movielens}
This section is devoted to studying a well-known data set - the
MovieLens data published by \citet{movielensdata} - with the methods developed in this paper. We demonstrated how the algorithms given above can be used to make statements about the sets of entries which are (a) completable, (b) uniquely completable, and (c) not completable with any algorithm.

The underlying data set for the following analyses is the MovieLens 100k
data set. By convention, columns will correspond to the $1682$ movies, while the rows will correspond to the $943$ users in the data set.

For growing rank $r$, the $r$-core of the MovieLens data set was
computed by the algorithm which is standard in graph theory - by Theorem~\ref{Thm:r-r+1-core} only the missing entries in the $r$-core can be completed, and any entry not contained in the $r$-core is not completable by any algorithm. Figure~\ref{fig:movielens-rcore} shows the size (columns, rows, entries) of the $r$-core of the MovieLens data for growing $r$.

\begin{figure}[tb]
\begin{center}
\includegraphics[width=.7\textwidth,clip]{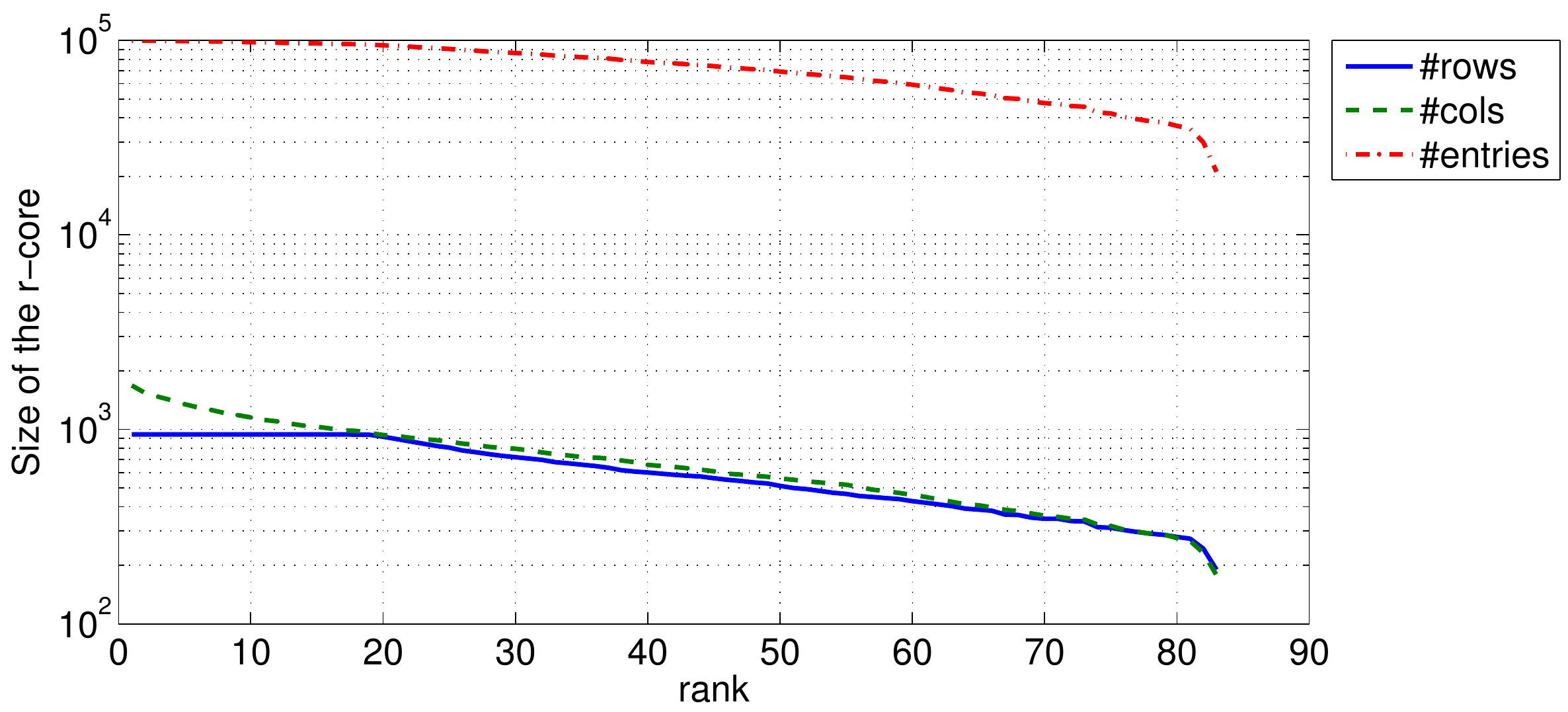}
\caption{Size of the $r$-core of the MovieLens 100k data set for
varying $r$. For each rank $r$, the figure shows the number of rows
(solid blue), the number of columns (dashed green), and the number of
entries (dash-dotted red) in the $r$-core of the mask corresponding to the observed entries of the MovieLens 100k data set. The biggest rank with non-empty $r$-core is $r=83$.}
\label{fig:movielens-rcore}
\end{center}
\end{figure}

Under rank $18$, the vast majority of the entries are in $r$-core, and so is the majority of the rows, while some columns with very few entries are removed with increasing $r$. At rank $r=18$, the number of columns in the $r$-core attains the number of rows in the $r$-core; above rank $18$, the number of rows and columns in the $r$-core diminish exponentially with the same speed. Above rank $79$, the $r$-core rapidly starts to shrink, with $r=83$ being the biggest rank with non-empty $r$-core.

For growing rank $r$, finitely completable closure $\close_r(E)$ in the
MovieLens data set were identified in the following way: First, it was
checked with Algorithm~\ref{Alg:cclosure} whether the $83$-core was
$r$-completable. If not, the completable entries in the $83$-core were
computed by an implementation of Algorithm~\ref{Alg:cclosure}. Then, the
minor closure of the completed $83$-cores was computed by Algorithm~\ref{Alg:minor}; by Theorem~\ref{Thm:r-r+1-core}, it was sufficient to check for completable entries in the $r$-core. Note that the positions of the completable entries were also computed in the process.

Figure~\ref{fig:movielens-completable} shows the number of completable entries in the MovieLens data set for growing $r$ determined in this way. % 

\begin{figure}[tb]
\begin{center}
\includegraphics[width=.7\textwidth,clip]{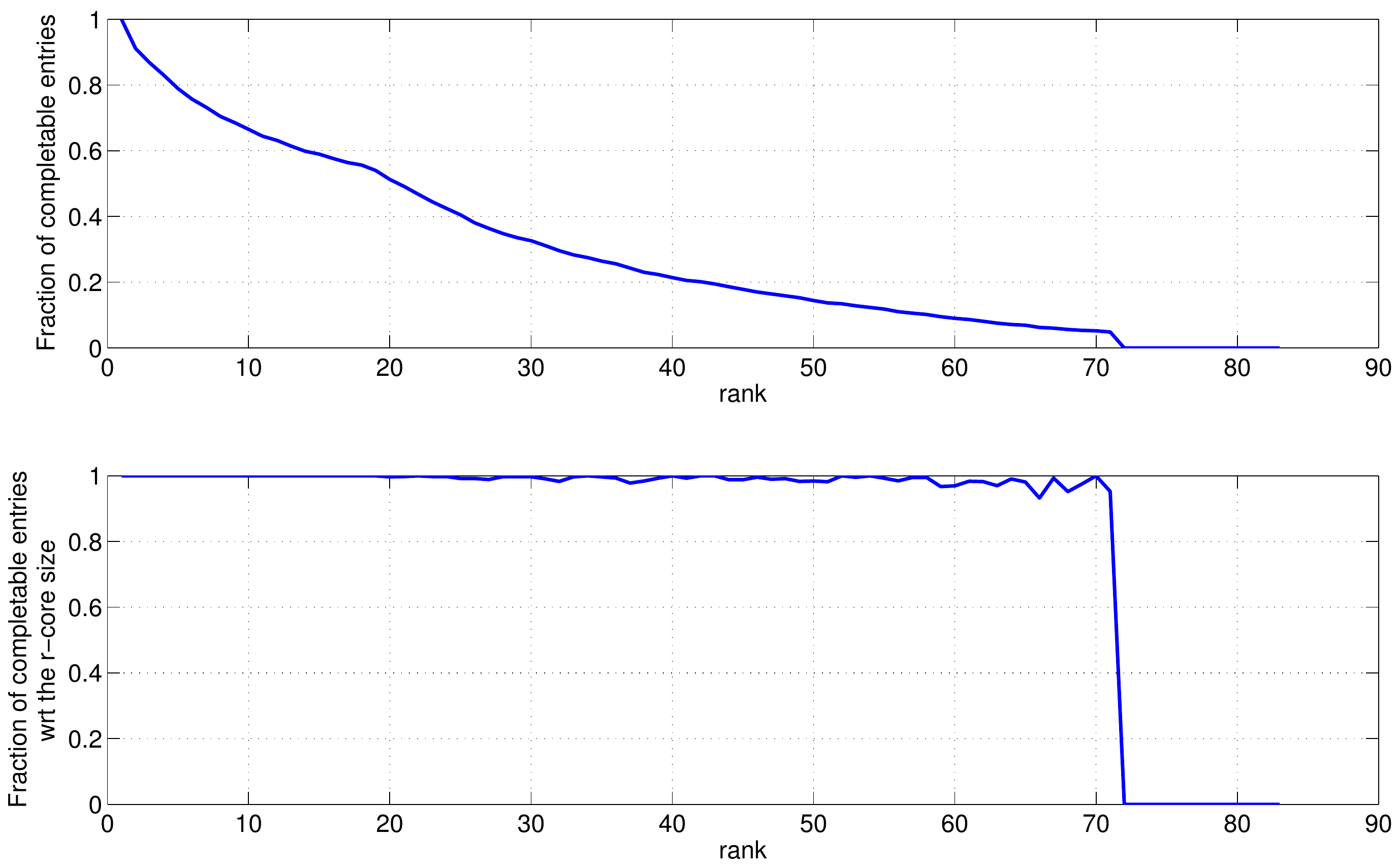}
\caption{Number of completable entries in the MovieLens 100k data set for varying $r$; observed entries are not counted as completable, only completable entries which are not observed. For each rank $r$, the upper figure shows the number of completable entries, as a fraction of all missing entries. The lower figure shows the number of completable entries, as a fraction of the missing entries in the $r$-core. For $r\ge 84$, the $r$-core is empty, thus no missing entries can be completed, see Figure~\ref{fig:movielens-rcore}.}
\label{fig:movielens-completable}
\end{center}
\end{figure}

An interesting thing to note is the inflection point at rank $r=18$. It corresponds to the phase transition in Figure~\ref{fig:movielens-rcore} where the $r$-core starts to shrink exponentially and simultaneously in rows and columns. At rank $r=72$ and above, no missing entry in the $83$-core can be completed.

\subsection{Entry-wise completion and error prediction}
In the rest of the experiments, we recapitulate some results from~\cite{KirThe13RankOneEst} on entry-wise reconstruction and error prediction for rank one matrices.

To test reconstruction, we generated $10$ random masks of size $50\times 50$ with $200$ entries sampled uniformly and a
random $(50\times 50)$ matrix of rank one.  The multiplicative noise was chosen entry-wise independent, with variance $\sigma_{i} = (i-1)/10$ for each entry. Figure \ref{fig:3algs.noisemse} compares the Mean Squared Error (MSE) for three algorithms: Nuclear Norm (using the implementation \citet{THK10}), OptSpace \citep{KMO10}, and Algorithm~\ref{Alg:loccomprkone}. It can be seen that on these masks, Algorithm~\ref{Alg:loccomprkone} is competitive with the other methods and even outperforms them for low noise.

Figure~\ref{fig:3algs.linearity} compares the error of each of the methods with the variance predicted by Algorithm~\ref{Alg:varrkone} each time the noise level changed. The figure shows that for any of the algorithms, the mean of the actual error increases with the predicted error, showing that the error estimate is useful for a-priori prediction of the actual error  - independently of the particular algorithm. Note that by construction of the data this statement holds in particular for entry-wise predictions. Furthermore, in quantitative comparison Algorithm~\ref{Alg:varrkone} also outperforms the other two in each of the bins. The qualitative reversal between the algorithms in Figures~\ref{fig:3algs.linearity}~(a) and~(b) comes from the different error measure and the conditioning on the bins.

\subsection{Universal error estimates}
For three different masks, we calculated the predicted minimum variance for each entry of the mask. The mask
sizes are all $140\times 140$. The noise was assumed to be i.i.d.~Gaussian multiplicative with $\sigma_e=1$ for each entry. Figure~\ref{fig:heatmaps} shows the predicted a-priori minimum variances for each of the masks. The structure of the mask affects the expected error. Known
entries generally have least variance, and it is less than the initial variance of $1$, which implies that
the (independent) estimates coming from other paths can be used to successfully denoise observed data.  For
unknown entries, the structure of the mask is mirrored in the pattern of the predicted errors; a diffuse mask gives a
similar error on each missing entry, while the more structured masks have structured error which is determined by
combinatorial properties of the completion graph.

\begin{figure*}[ht]
\begin{center}
\subfigure{%
\includegraphics[height=0.12\textwidth]{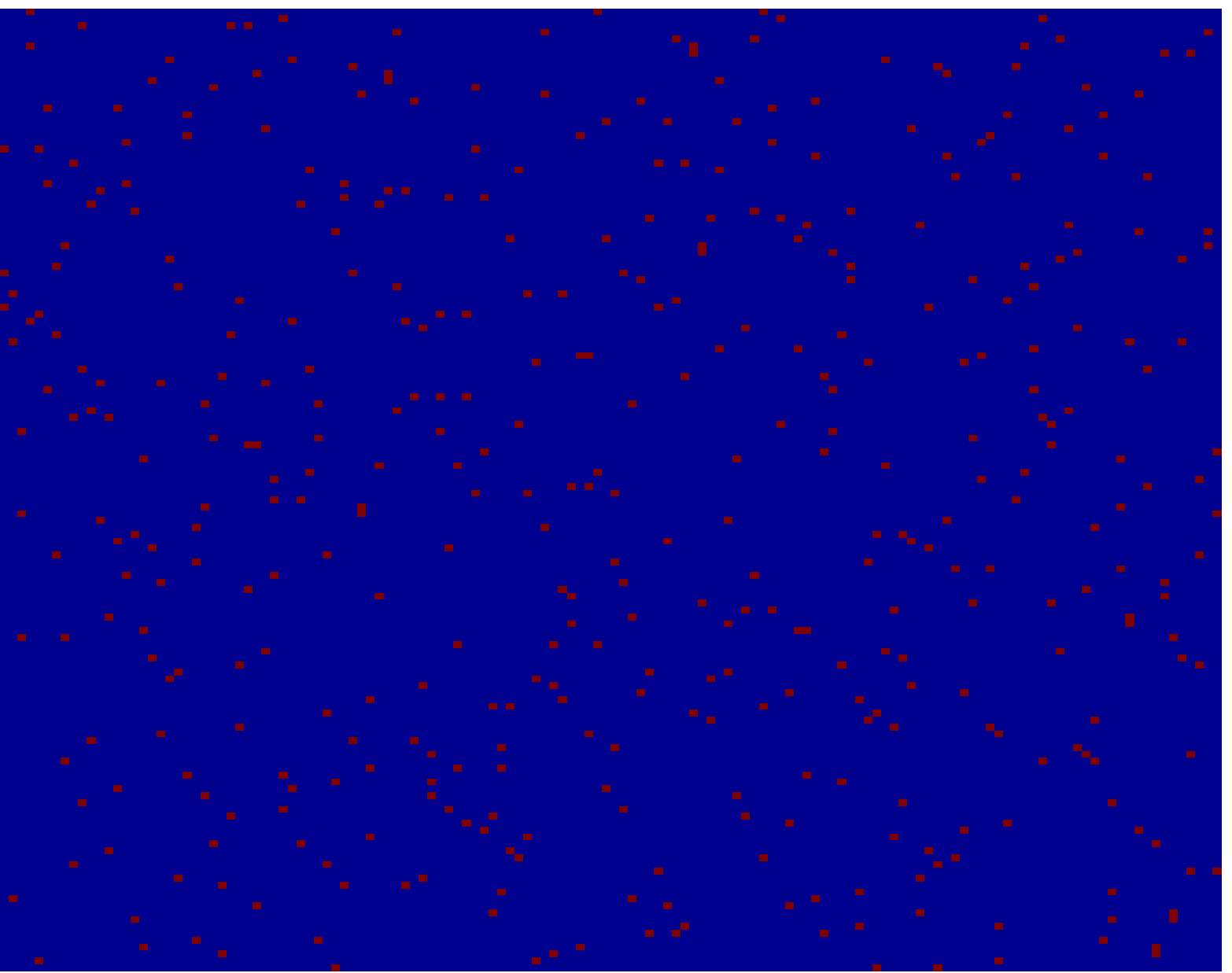}
}
\subfigure{%
\includegraphics[height=0.12\textwidth]{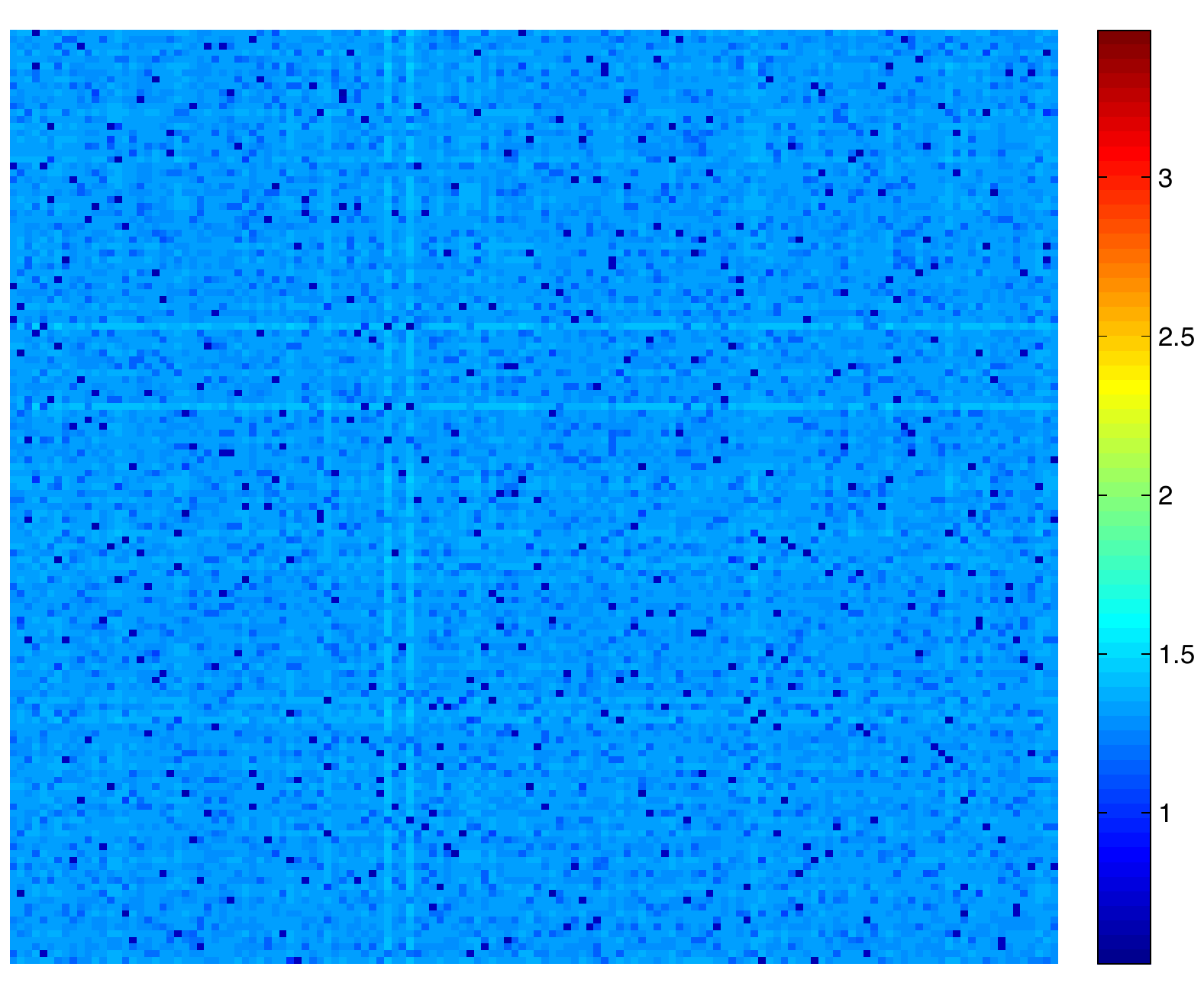}
}
\subfigure{%
\includegraphics[height=0.12\textwidth]{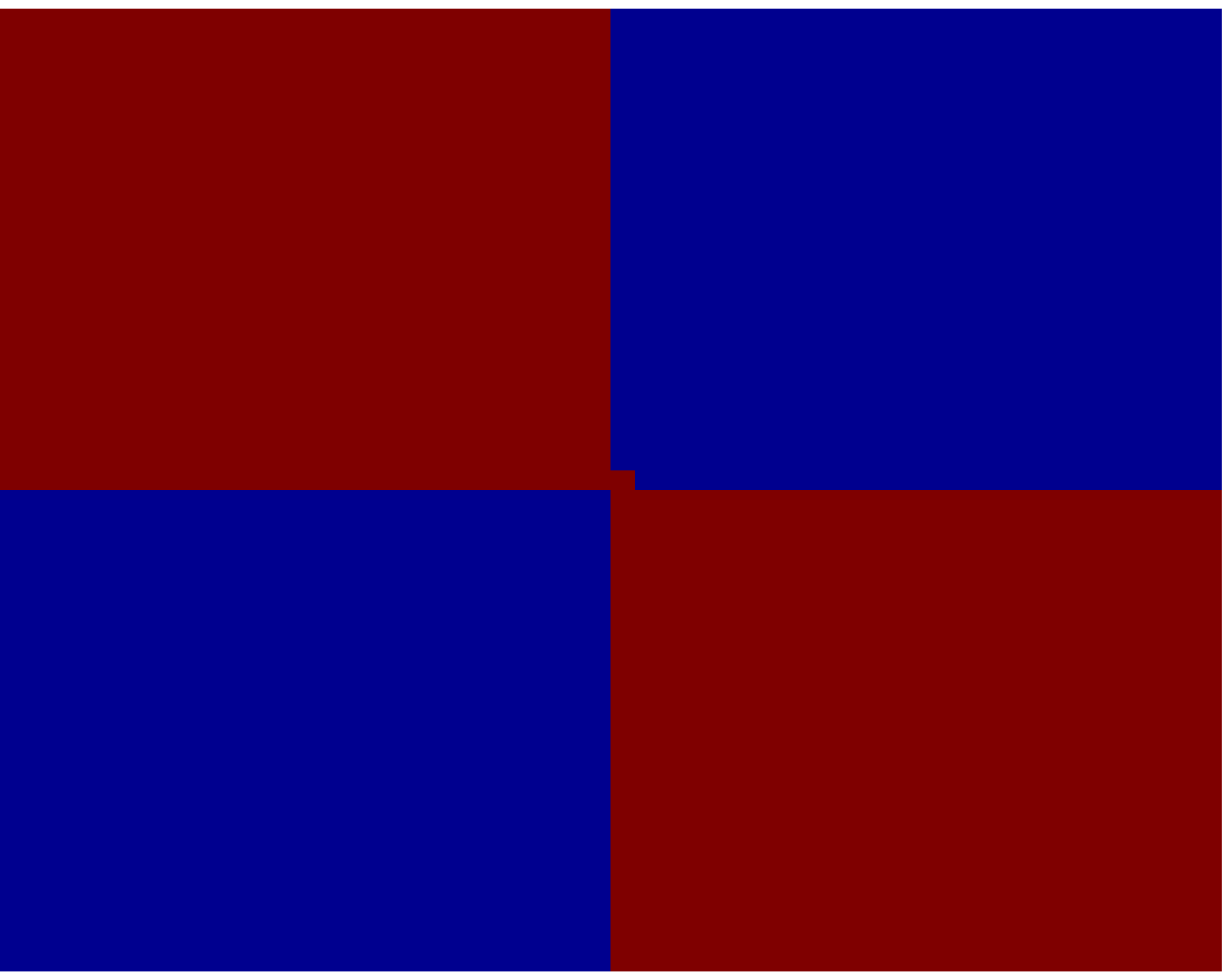}
}
\subfigure{%
\includegraphics[height=0.12\textwidth]{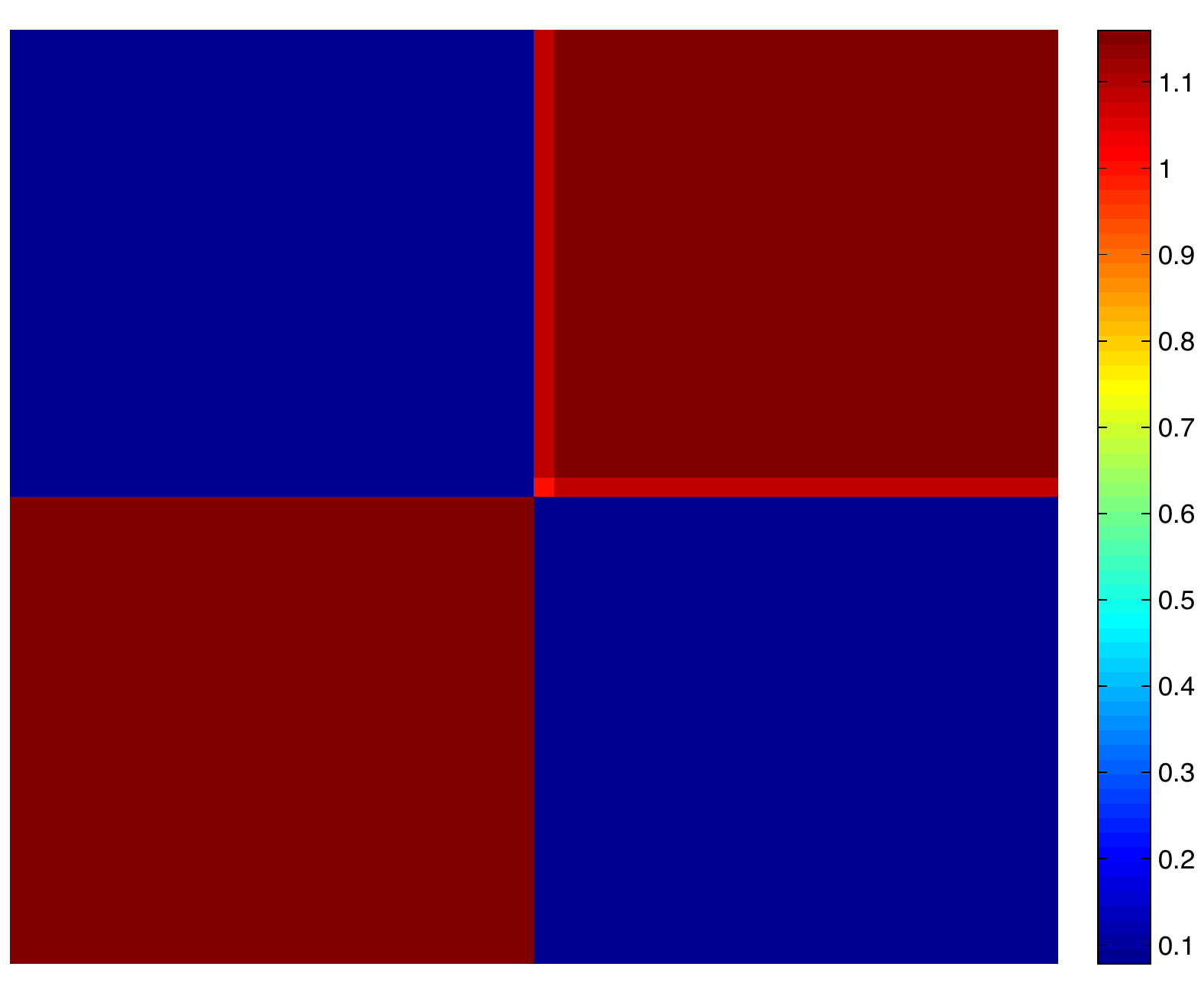}
}
\subfigure{%
\includegraphics[height=0.12\textwidth]{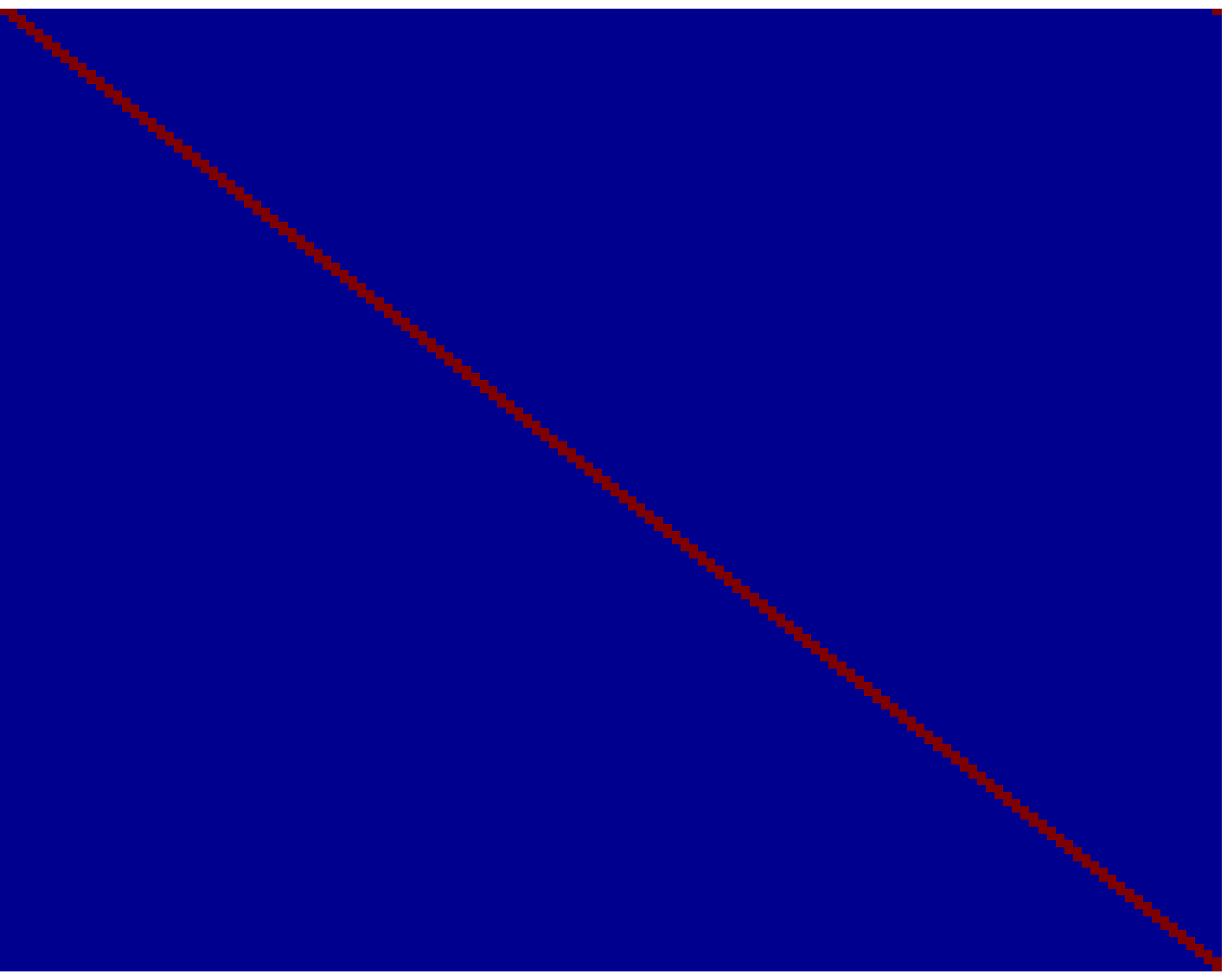}
}
\subfigure{%
\includegraphics[height=0.12\textwidth]{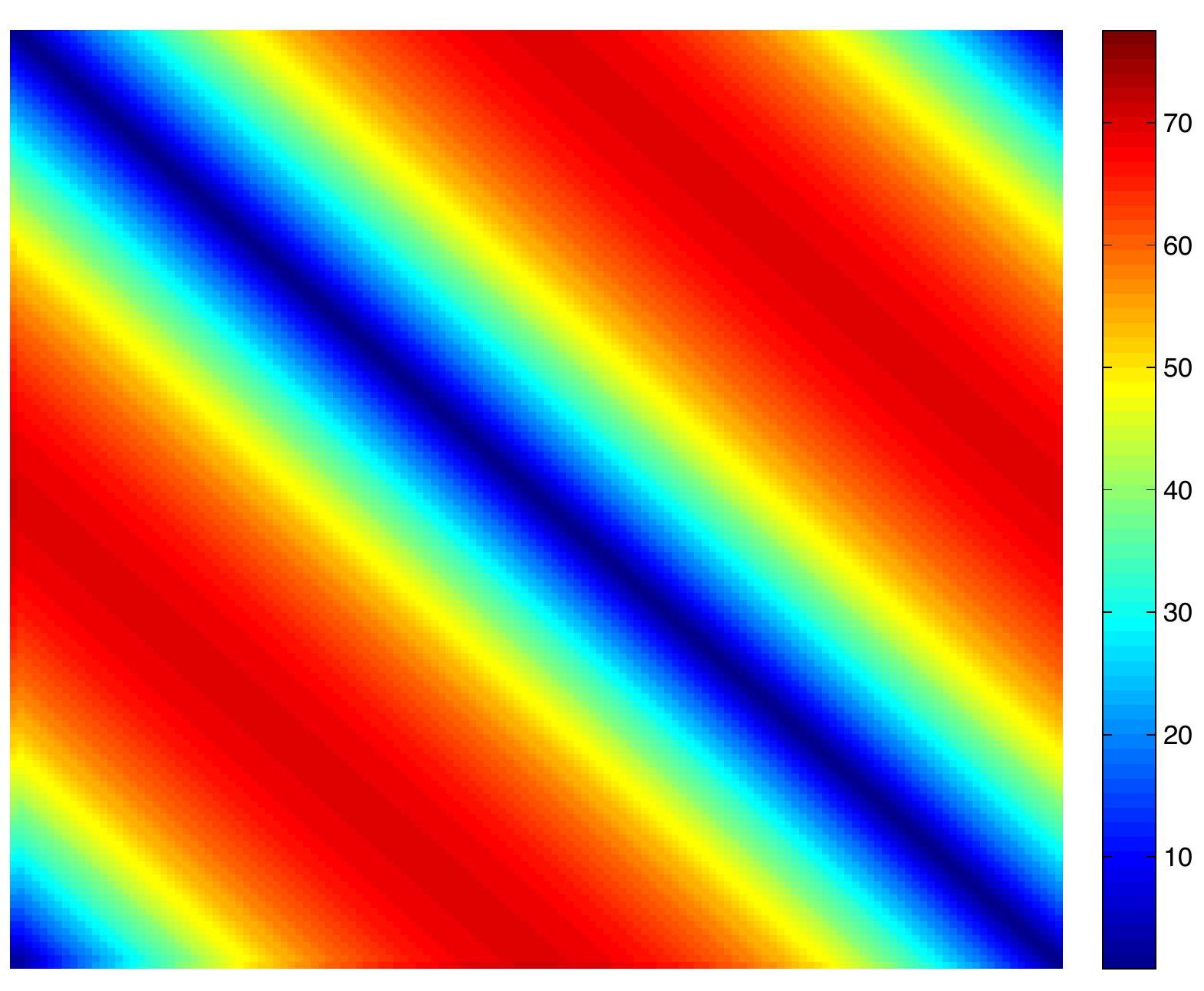}
}
\end{center}
\caption{%
\label{fig:heatmaps}
The figure shows three pairs of masks and predicted variances. A pair consists of two adjacent squares. The left half is the mask which is depicted by red/blue heatmap with red entries known and blue unknown. The right half is a multicolor heatmap with color scale, showing the predicted variance of the completion. Variances were calculated by our implementation of Algorithm~\ref{Alg:varrkone}.
}
\end{figure*}

\begin{figure*}[ht]
\begin{center}
\subfigure[mean squared errors]{%
\label{fig:3algs.noisemse}
\includegraphics[width=0.4\textwidth]{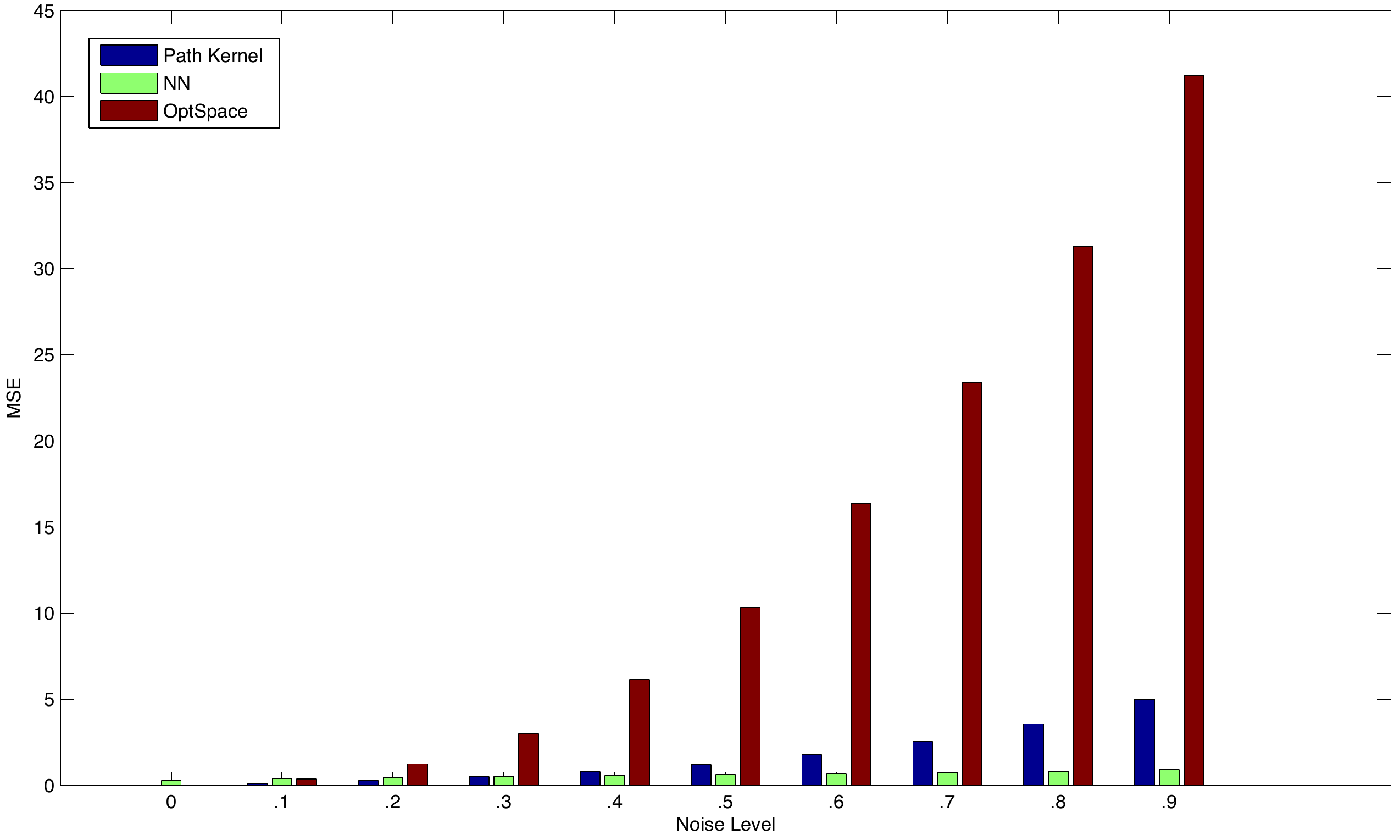}
}% 
\subfigure[error vs.~predicted variance]{%
\label{fig:3algs.linearity}
\includegraphics[width=0.3\textwidth]{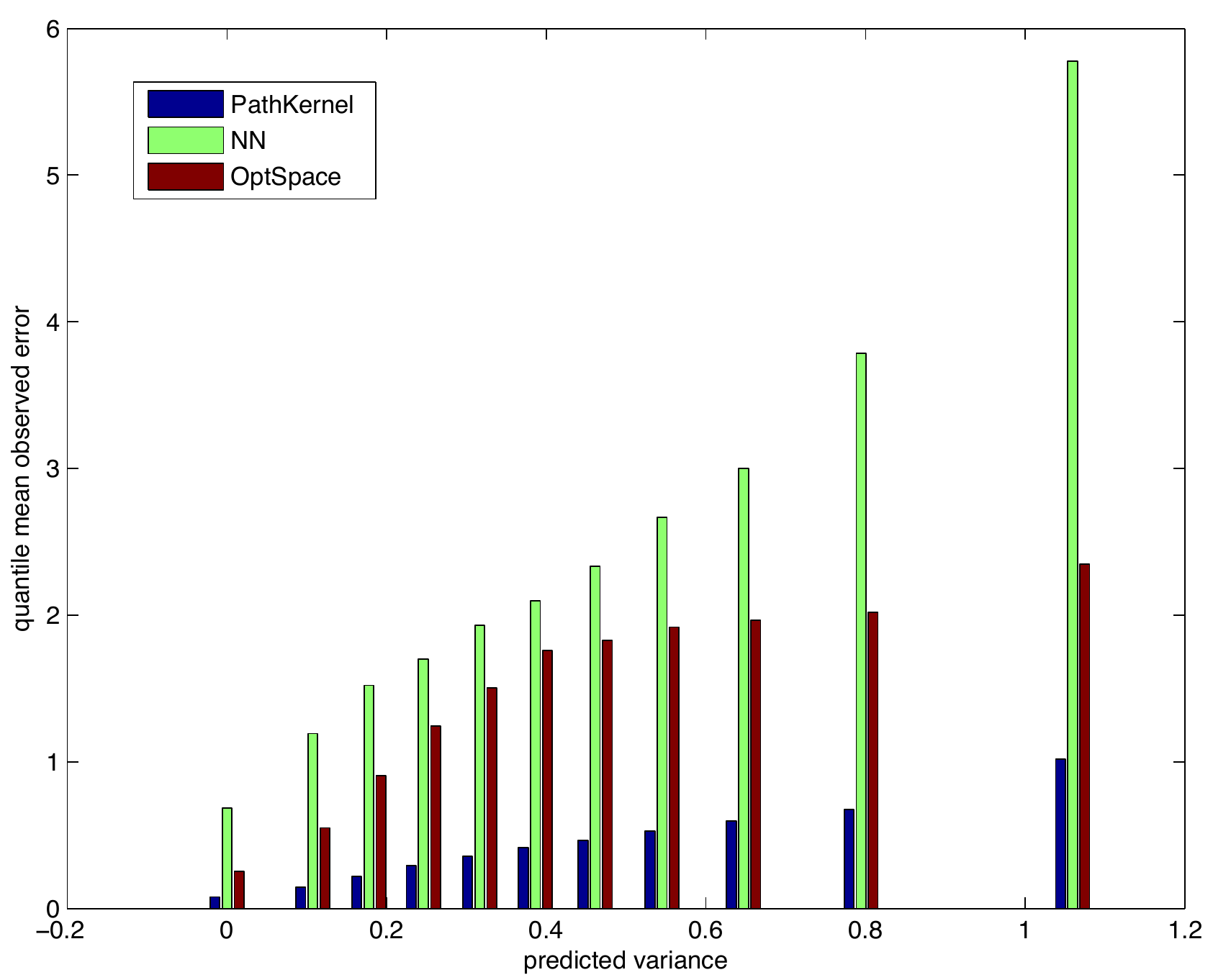}
}

\end{center}
\caption{%
For $10$ randomly chosen masks and $50\times 50$ true matrix, matrix completions were performed with Nuclear Norm (green), OptSpace (red), and Algorithm~\ref{Alg:loccomprkone} (blue) under multiplicative noise with variance increasing in increments of $0.1$. For each completed entry, minimum variances were predicted by Algorithm~\ref{Alg:varrkone}. \ref{fig:3algs.noisemse} shows the mean squared error of the three algorithms for each noise level, coded by the algorithms' respective colors. \ref{fig:3algs.linearity} shows a bin-plot of errors (y-axis) versus predicted variances (x-axis) for each of the three algorithms: for each completed entry, a pair (predicted error, true error) was calculated, predicted error being the predicted variance, and the actual prediction error being the squared logarithmic error (i.e., $\left (\log |a_{true}| - \log |a_{predicted}| \right)^2$ for an entry $a$). Then, the points were binned into $11$ bins with equal numbers of points. The figure shows the mean of the errors (second coordinate) of the value pairs with predicted variance (first coordinate) in each of the bins, the color
corresponds to the particular algorithm; each group of bars is centered on the minimum value of the associated bin.
}%
\end{figure*}

\section{Discussion and outlook}
\label{Sec:discussion}

In this paper we have demonstrated the usefulness and practicability of the algebraic combinatorial approach for matrix completion, by deriving reconstructability statments, and actual reconstruction algorithms for single missing entries. Our theory allows to treat the positions of the observations separately from the entries themselves. As a prominent model feature, we are able to separate the sampling scheme from algebraic and combinatorial conditions for reconstruction and explain existing reconstruction bounds by the combinatorial phase transition for the uniform random sampling scheme.

In our new setting, we are left with a number of major \textbf{open questions}:
\begin{itemize}
\item Characterize all \textbf{circuits and circuit polynomials} in rank 2 or higher (a more extensive account on this problem can be found in~\cite{KRT13}).
\item Give a sufficient and necessary combinatorial \textbf{criterion for unique completability}.
\item Give an \textbf{efficient algorithm certifying for unique completability} when given the positions of the observed entries (or, more generally, one which computes the number of solutions).
\item Prove the \textbf{phase transition bound for the completable core} (the phase transition bound for completability has been shown in~\cite{KirThe13Coherence}).
\item Explain the existing \textbf{guarantees for whole matrix reconstruction MSE in terms of single entry expected error}, for the various sampling models in literature. An explanation for rank one can be inferred from~\cite{KirThe13RankOneEst}.
\end{itemize}

The presented results also suggest a number of \textbf{future directions}:
\begin{itemize}
\item Problems such as \textbf{matrix completion under further constraints} such as for \textbf{symmetric matrices}, \textbf{distance matrices} or \textbf{kernel matrices}, are closely related to the ones we consider here, and can be treated by similar techniques.
(under a phase transition aspect, these models were studied in~\cite{KirThe13Coherence}; under a matroidal aspect, the theory in~\cite{KRT13} yields a starting point)
\item \textbf{Completion of tensors} is a natural generalization of matrix completion and accessible to the techniques presented here or in~\cite{KirThe13Coherence,KRT13}.
\item We have essentially shown matrix completion to be an \textbf{algebraic manifold learning problem}. This makes it accessible to the kernel/ideal learning techniques presented in~\cite{KKT14}.
\item The algebraic theory used to infer genericity and identifiability is largely independent of the matrix completion setting and can be applied to \textbf{inverse problems and compressed sensing problems that are algebraic}. In~\cite{KE14}, this was demonstrated for phase retrieval.
\end{itemize}

Sumamrizing, we argue that recognizing and exploiting algebra and combinatorics in machine learning problems is
beneficial from the practical and theoretical perspectives. When it is present, methods using underlying algebraic and combinatorial structures yield sounder statements and more practical algorithms than can be obtained when ignoring it, conversely algebra and combinatorics can profit from the various interesting structure surfacing in machine learning problems. Therefore all involved fields can only profit from a more widespread interdiscplinary collaboration with and between each other.

\section*{Acknowledgements}
We thank Andriy Bondarenko, Winfried Bruns, Eyke H\"ullermeyer, Mihyun Kang, Yusuke Kobayashi, Martin Kreuzer,
Cris Moore, Klaus-Robert M\"uller, Kazuo Murota, Kiyohito Nagano, Zvi Rosen,
Raman Sanyal, Bernd Sturmfels, Sumio Watanabe, Volkmar Welker, and G\"unter Ziegler for valuable discussions.
RT is partially supported by MEXT KAKENHI 22700138, the Global COE program ``The Research and Training Center for
New Development in Mathematics'', FK by Mathematisches Forschungsinstitut Oberwolfach (MFO), and LT
by the European Research Council under the European Union's Seventh Framework
Programme (FP7/2007-2013) / ERC grant agreement no 247029-SDModels. This research was
partially carried out at MFO, supported by FK's Oberwolfach Leibniz Fellowship.

\newpage
\bibliographystyle{plainnat}
\small

\newpage

\appendix

\section{Algebraic Geometry Fundamentals}\label{Sec:app-algebra}

\subsection{Algebraic Genericity}
We will briefly review the concept of genericity for our purposes. Intuitively, algebraic genericity describes that some statements holds for almost all objects, with the exceptions having an algebraic structure. The following results will be stated for algebraic varieties over the real or complex numbers, that is, over the field $\K$, where $\K=\R$ or $\K=\C$.

\begin{Def}\label{Def:genconds}
Let $\calY\subseteq \K^n$ be an algebraic variety. Let $P$ be some property of points $y\in\calY$. Write $P(\calY)=\{y\in\calY\;:\;y\;\mbox{has property}\;P\}$, and $\neg P(\calY) = \calY\setminus P(\calY)$.
\begin{description}
\item[(i)] We call $P$ an \emph{open condition} if $P(\calY)$ is a Zariski open subset of $\calY$.
\item[(ii)] We call $P$ a \emph{Zariski-generic condition} if there is an open dense subset $U\subseteq \calY$ such that $U\subseteq P(\calY)$.
\item[(iii)] We call $P$ a \emph{Hausdorff-generic condition} if $\neg P(\calY)$ is a $\calY$-Hausdorff zero set.
\end{description}
\end{Def}

The different types of conditions above can be put in relation to each other:
\begin{Prop}\label{Prop:genconds}
Keep the notation of Definition~\ref{Def:genconds}.
\begin{description}
\item[(i)] If $P$ is a Zariski-generic condition, then $P$ is a Hausdorff-generic condition as well.
\item[(ii)] Assume $\calY$ is irreducible, and $P(\calY)$ is non-empty. If $P$ is an open condition, then it is a Zariski-generic condition.
\item[(iii)] Assume $\calY$ is irreducible, and $P(\calY)$ is constructible in the Zariski topology, i.e., can be written as finite union and intersection of open and closed sets. If $P$ is a Hausdorff-generic condition, then it is a Zariski-generic condition as well.
\end{description}
\end{Prop}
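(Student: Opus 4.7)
My plan is to address the three parts in order, with part (iii) being the only one requiring nontrivial input.

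For (i), I would start with an open dense $U \subseteq P(\calY)$, so that $\neg P(\calY) \subseteq \calY \setminus U$, which is Zariski-closed and, by density of $U$, contains no irreducible component of $\calY$. Intersecting with each irreducible component $\calY_j$ therefore produces a proper Zariski-closed subset of $\calY_j$, which has strictly smaller dimension than $\calY_j$, hence vanishing $\calY_j$-Hausdorff measure. A finite union of Hausdorff null sets is Hausdorff null, giving the claim. This step is essentially the standard fact that lower-dimensional algebraic subsets are measure-zero; I would cite it rather than reprove it.

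For (ii), I would just note that in an irreducible variety the complement of any nonempty open set is a proper closed subset, and irreducibility forbids $\calY$ from being written as the union of two proper closed sets; hence any nonempty open $P(\calY)$ is automatically dense.

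For (iii), the plan is to use the structure theorem for constructible sets together with irreducibility to force a dichotomy. Write $P(\calY) = \bigcup_{i=1}^{k} U_i \cap Z_i$ with each $U_i$ open and each $Z_i$ closed. The Zariski closure is $\overline{P(\calY)} = \bigcup_i \overline{U_i \cap Z_i}$, which is a finite union of closed sets. Since $\calY$ is irreducible, either (a) $\overline{P(\calY)} = \calY$, in which case one of the $\overline{U_i \cap Z_i}$ must equal $\calY$; then $U_i \cap Z_i$ is dense in $\calY$, forcing $Z_i = \calY$, so $P(\calY) \supseteq U_i$, and the nonempty open $U_i$ is dense by irreducibility, making $P$ Zariski-generic; or (b) $\overline{P(\calY)}$ is a proper closed subset of $\calY$, hence of strictly smaller dimension and therefore Hausdorff-null. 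But then $\neg P(\calY)$ contains the nonempty open set $\calY \setminus \overline{P(\calY)}$, which has positive Hausdorff measure (by irreducibility it is dense, hence has the same dimension as $\calY$), contradicting the Hausdorff-genericity of $P$. So case (b) is ruled out and we are done.

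The only real obstacle is justifying the measure-theoretic step — that a proper Zariski-closed subset of an irreducible variety has strictly smaller dimension and so is Hausdorff-null, while a nonempty Zariski-open subset carries positive Hausdorff measure in the ambient dimension. Both are standard facts for real and complex algebraic varieties (e.g.\ via Whitney stratification, or via the fact that a proper closed subset is contained in a finite union of smooth manifolds of smaller dimension), and I would invoke them as background rather than reprove them. With that in hand, the three parts follow cleanly from irreducibility together with the constructible-set structure.
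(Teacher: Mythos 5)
Your proposal is correct and follows essentially the same route as the paper: (i) via the fact that proper Zariski-closed subsets have smaller dimension and hence vanishing Hausdorff measure, (ii) via density of nonempty opens in irreducible varieties, and (iii) via the fact that a constructible set of positive Hausdorff measure must contain a nonempty Zariski-open set. Your dichotomy on $\overline{P(\calY)}$ in part (iii) merely unpacks the one-line assertion the paper makes about constructible sets, and your componentwise treatment in (i) is a slightly more careful version of the paper's argument.
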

\begin{proof}
(i) follows from the fact that Zariski closed sets of smaller Krull dimension are Hausdorff zero sets.\\
(ii) follows from the fact that non-empty Zariski open sets are dense in an irreducible algebraic set.\\
(iii) as $P(\calY)$ is Zariski-constructible, it will have positive Hausdorff measure if and only if it contains a non-empty (relatively Zariski) open set. The statement then follows from (ii).
\end{proof}

Furthermore, Hausdorff-genericity is essentially states that the condition holds, universally with probability one:
\begin{Prop}
Keep the notation of Definition~\ref{Def:genconds}. The following are equivalent:
\begin{description}
\item[(i)] $P$ is a Hausdorff-generic condition.
\item[(ii)] For all Hausdorff-continuous random variables $X$ taking values in $\calY$, the statement $P(X)$ holds with probability one.
\end{description}
\end{Prop}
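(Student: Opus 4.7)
The plan is to unwind definitions and use standard measure theory, treating the two directions separately. I will assume the convention (implicit in the paper and consistent with Proposition \ref{Prop:genconds}) that a ``$\calY$-Hausdorff zero set'' means a set of $d$-dimensional Hausdorff measure zero, where $d = \dim \calY$, and that ``Hausdorff-continuous random variable'' means a random variable whose law is absolutely continuous with respect to the $d$-dimensional Hausdorff measure $\mathcal{H}^d$ on $\calY$.

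For the direction (i) $\Rightarrow$ (ii), I would argue directly: suppose $\neg P(\calY)$ has $\mathcal{H}^d$-measure zero and let $X$ be a Hausdorff-continuous random variable with law $\mu$. By absolute continuity, $\mu(A) = 0$ whenever $\mathcal{H}^d(A) = 0$, so $\Pr[\neg P(X)] = \mu(\neg P(\calY)) = 0$ and hence $\Pr[P(X)] = 1$.

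For (ii) $\Rightarrow$ (i), I would prove the contrapositive by exhibiting a single Hausdorff-continuous probability measure putting positive mass on $\neg P(\calY)$. If $\neg P(\calY)$ is not a Hausdorff zero set, then $\mathcal{H}^d(\neg P(\calY)) > 0$, so $\sigma$-finiteness of $\mathcal{H}^d$ on the algebraic variety $\calY$ (cover $\calY$ by countably many compact pieces $K_n$, each of finite $\mathcal{H}^d$-measure since $\calY$ has pure dimension at most $d$ in a neighborhood of each point) yields a compact $K \subseteq \calY$ with $0 < \mathcal{H}^d(K \cap \neg P(\calY)) < \infty$ and $0 < \mathcal{H}^d(K) < \infty$. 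Define
\[
\mu(A) := \frac{\mathcal{H}^d(A \cap K)}{\mathcal{H}^d(K)}.
\]
Then $\mu$ is a probability measure on $\calY$ which is absolutely continuous with respect to $\mathcal{H}^d$ (it has density $\mathbf{1}_K/\mathcal{H}^d(K)$), so it is the law of a Hausdorff-continuous random variable $X$, and $\mu(\neg P(\calY)) > 0$, contradicting (ii).

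The main technical point is the $\sigma$-finiteness claim, which is where one must use that $\calY$ is an algebraic variety rather than an arbitrary metric space: locally $\calY$ is contained in a finite union of graphs of analytic functions over open subsets of $\R^d$ (or $\C^d$), so each compact piece has finite $d$-dimensional Hausdorff measure. Beyond this, the argument is a routine measure-theoretic fact about absolute continuity, so no further obstacles are expected.
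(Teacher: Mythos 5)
Your proof is correct and follows essentially the same route as the paper, which disposes of both directions in one line by invoking Radon--Nikodym derivatives, i.e.\ exactly the absolute-continuity argument you spell out for (i) $\Rightarrow$ (ii). Your treatment of (ii) $\Rightarrow$ (i) --- constructing a Hausdorff-continuous law concentrated on a compact piece of $\neg P(\calY)$ of positive finite measure, justified by $\sigma$-finiteness of $\mathcal{H}^d$ on the variety --- fills in the detail the paper leaves implicit, and is the right way to do it.
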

\begin{proof}
(i)$\Leftrightarrow$ (ii) follows from taking Radon-Nikodym derivatives. % 
\end{proof}

All relevant properties and conditions which are referenced from the main corpus describe (a) irreducible varieties - in this case, the determinantal variety, and (b) are Zariski-constructible. Therefore, by Proposition~\ref{Prop:genconds}, all three definitions agree for the purpose of this paper. The terminology used in the paper can be given as follows in the above definitions:

\begin{Def}
Let $\calY\subseteq \K^n$ be an algebraic variety. Let $P$ be some property of points $y\in\calY$. We say ``a generic $y\in\calY$ has property $P$'' if $P$ is a Hausdorff-generic condition for points in $\calY$.
\end{Def}

\subsection{Open Conditions and Generic Properties of Morphisms}\label{app:algebraic-geometry}
In this section, we will summarize some algebraic geometry results used in the main corpus. The following results will always be stated for algebraic varieties over $\C$.

\begin{Prop}\label{Prop:irreducible}
Let $f:\calX\rightarrow \calY$ be a morphism of algebraic varieties (over any field). Then, if $\calX$ is irreducible, so is $f(\calX)$. In particular, if $f$ is surjective, and $\calX$ is irreducible, then $\calY$ also is.
\end{Prop}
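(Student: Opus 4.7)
The plan is to reduce the statement to a purely topological fact about continuous maps in the Zariski topology: \emph{the image of an irreducible space under a continuous map is irreducible}. Since morphisms of algebraic varieties are by definition continuous with respect to the Zariski topology, the proposition follows immediately once this topological lemma is established. The ``in particular'' clause is then just the special case $f(\calX)=\calY$.

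To establish the topological lemma, I would proceed by contradiction. Suppose $f(\calX)$, equipped with the subspace topology inherited from $\calY$, decomposes as $f(\calX) = Z_1 \cup Z_2$, where $Z_1$ and $Z_2$ are proper closed subsets. By the definition of the subspace topology, there exist Zariski-closed subsets $C_1, C_2 \subseteq \calY$ such that $Z_i = C_i \cap f(\calX)$. Pulling back along $f$, we obtain
\[
\calX = f^{-1}(f(\calX)) = f^{-1}(Z_1) \cup f^{-1}(Z_2) = f^{-1}(C_1) \cup f^{-1}(C_2),
\]
and each $f^{-1}(C_i)$ is closed in $\calX$ because $f$ is continuous.

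Since $\calX$ is irreducible, one of these pre-images must equal all of $\calX$; say $f^{-1}(C_1) = \calX$. This forces $f(\calX) \subseteq C_1$, so $Z_1 = C_1 \cap f(\calX) = f(\calX)$, contradicting the assumption that $Z_1$ was a proper subset. Hence $f(\calX)$ is irreducible. Applying this to the surjective case $f(\calX) = \calY$ yields irreducibility of $\calY$.

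There is no real obstacle here: the only subtlety worth flagging is that $f(\calX)$ need not itself be Zariski-closed in $\calY$, so we must take care to work with the subspace topology rather than claim $Z_i$ is closed in $\calY$ outright. Once this is handled, the argument is entirely formal and does not require any specifically algebraic input (such as Chevalley's theorem on constructible sets) beyond the continuity of morphisms.
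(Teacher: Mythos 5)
Your proof is correct and follows exactly the route the paper indicates: the paper's own proof consists of the one-line remark that the claim ``is classical and follows directly from the fact that morphisms of algebraic varieties are continuous in the Zariski topology,'' and your argument is precisely the standard expansion of that remark (image of an irreducible space under a continuous map is irreducible, proved via the subspace topology). Your care in noting that $f(\calX)$ need not be closed in $\calY$ is a correct and worthwhile detail, but the approach is the same.
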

\begin{proof}
This is classical and follows directly from the fact that morphisms of algebraic varieties are continuous in the Zariski topology.
\end{proof}

\begin{Thm}
Let $f:\calX\rightarrow \calY$ be a morphism of algebraic varieties. The function
$$\calY\rightarrow\N,\quad y\mapsto \dim f^{-1}(y)$$
is upper semicontinuous in the Zariski topology.
\end{Thm}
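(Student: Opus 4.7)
The plan is to show that, for every $n \in \N$, the set $Z_n := \{y \in \calY : \dim f^{-1}(y) \geq n\}$ is Zariski closed in $\calY$. By working one irreducible component of $\calY$ at a time, and partitioning $\calX$ into preimages of those components, I reduce immediately to the case where $\calY$ is irreducible. A further reduction handles the case where $\calX$ is irreducible as well: for a general $\calX = \calX_1 \cup \cdots \cup \calX_k$, the fiber $f^{-1}(y)$ is the union of the fibers of $f|_{\calX_i}$, so $\dim f^{-1}(y) = \max_i \dim (f|_{\calX_i})^{-1}(y)$, and $Z_n$ is a finite union of the analogous sets for each restriction. Finally, by replacing $\calY$ with the closure of $f(\calX)$ (irreducible by Proposition~\ref{Prop:irreducible}) and using that fibers outside $f(\calX)$ are empty, I may assume $f$ is dominant, both $\calX$ and $\calY$ are irreducible, and set $d := \dim \calX - \dim \calY$.

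The key input is the generic fiber dimension theorem: there exists a non-empty Zariski open $U \subseteq \calY$ such that for every $y \in U$, $\dim f^{-1}(y) = d$, while for every $y \in \calY$ one has $\dim f^{-1}(y) \geq d$. (This is the classical fiber dimension theorem, e.g.\ Mumford's \emph{Red Book}, Chapter I, Section 8.) Consequently, for $n \leq d$ we have $Z_n = \calY$, which is trivially closed, while for $n > d$ we have $Z_n \subseteq \calY \setminus U$, which is a proper closed subvariety of strictly smaller dimension.

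I now conclude by Noetherian induction on $\dim \calY$. Set $Y' := \calY \setminus U$ and $X' := f^{-1}(Y')$, with $f' := f|_{X'} : X' \to Y'$; note $Y'$ is closed in $\calY$ and $\dim Y' < \dim \calY$. Since $(f')^{-1}(y) = f^{-1}(y)$ for every $y \in Y'$, the set $Z_n$ (for $n > d$) coincides with the corresponding set $Z_n'$ for the morphism $f'$, which by the inductive hypothesis is closed in $Y'$, and hence closed in $\calY$ since $Y'$ itself is closed. The base case, $\dim \calY = 0$, is trivial, as $\calY$ is a finite set of points and any subset is closed.

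The main obstacle is not a technical one but the correct invocation of the generic fiber dimension theorem together with its companion lower bound $\dim f^{-1}(y) \geq \dim \calX - \dim \calY$ on \emph{every} fiber; this is exactly what forces all ``exceptional'' behavior into a proper closed subset, enabling the Noetherian induction to terminate. Once this is in hand, the reductions to the irreducible dominant case and the descent to lower-dimensional $\calY$ are routine.
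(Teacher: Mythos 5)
The paper does not actually prove this statement: its ``proof'' is a bare citation of Th\'eor\`eme~13.1.3 of EGA~IV, so your self-contained argument is necessarily a different route. What you give is the standard proof of Chevalley's semicontinuity theorem, and its skeleton is correct: reduce to a dominant morphism of irreducible varieties, invoke the fiber dimension theorem to produce a dense open $U\subseteq\calY$ over which the fibers have dimension exactly $d=\dim\calX-\dim\calY$ together with the lower bound $\dim f^{-1}(y)\ge d$ on every \emph{nonempty} fiber, and then run Noetherian induction on the closed complement $\calY\setminus U$. The two preliminary reductions and the descent step are all fine. One caveat deserves flagging: the assertion ``$Z_n=\calY$ for $n\le d$'' tacitly assumes that every fiber over $\calY$ (after replacing $\calY$ by the closure of $f(\calX)$) is nonempty. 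Since the image of a morphism need not be closed --- project the hyperbola $xz=1$ onto the $x$-axis --- the set $Z_n$ for small $n$ is really $f(\calX)$, which is only constructible, and the reduction to irreducible components of $\calX$ can destroy surjectivity even when the original $f$ is surjective. So the theorem as literally stated needs either the convention that the function is only evaluated on $f(\calX)$ (which the codomain $\N$ presumably signals), or an extra hypothesis such as properness, or surjectivity together with irreducibility of $\calX$. This imprecision is inherited from the paper's own statement --- note that EGA~IV~13.1.3 asserts semicontinuity of $x\mapsto\dim_x f^{-1}(f(x))$ on the \emph{source}, not the target --- and in the paper's application the relevant map is surjective with irreducible source, so your argument proves exactly what is needed there.
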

\begin{proof}
This follows from~\cite[Th{\'e}or{\`e}me~13.1.3]{EGAIV.3}.
\end{proof}

\begin{Prop}\label{Prop:flatgen}
Let $f:\calX\rightarrow \calY$ be a morphism of algebraic varieties, with $\calY$ be irreducible. Then, there is an open dense subset $V\subseteq \calY$ such that $f: U\rightarrow V$, where $U=f^{-1}(V)$, is a flat morphism.
\end{Prop}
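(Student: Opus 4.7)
The plan is to invoke Grothendieck's theorem of generic flatness, which is essentially the content of this proposition and is already available in the cited \cite{EGAIV.3}. The statement reduces to a classical algebraic fact once things are set up properly in the affine case.

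First I would reduce to the affine situation. Since $\calY$ is irreducible, every non-empty Zariski open subset is automatically dense, so it suffices to produce an open flat locus in a single affine chart. Pick an affine open $V_0 = \Spec A \subseteq \calY$; because $\calY$ is an irreducible variety, $A$ is a Noetherian integral domain. Then cover the preimage $f^{-1}(V_0)$ by finitely many affine opens $\Spec B_1, \ldots, \Spec B_n$, where each $B_i$ is a finitely generated $A$-algebra via the ring map induced by $f$.

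Next I would apply the generic freeness lemma: for any finitely generated algebra $B$ over a Noetherian integral domain $A$, there exists a non-zero element $a \in A$ such that $B_a$ is a free $A_a$-module, hence flat over $A_a$. Applying this to each $B_i$ produces elements $a_1, \ldots, a_n \in A \setminus \{0\}$; setting $a := \prod_i a_i$ and $V := D(a) \subseteq V_0$, the induced morphism $\Spec (B_i)_a \to \Spec A_a$ is flat for every $i$. Since flatness is a local property on the source, this means $f$ restricted to $U := f^{-1}(V)$ is flat over $V$. The set $V$ is non-empty (as $a \neq 0$ in the domain $A$) and hence dense in $\calY$ by irreducibility.

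The only real obstacle is the finite-cover bookkeeping: one must check that flatness glues correctly from an affine cover of the source and that intersecting finitely many non-empty opens in an irreducible variety yields a non-empty (hence dense) open. Both are standard. The substantive content is the generic freeness lemma itself, which is precisely \cite[Th\'eor\`eme 11.1.1]{EGAIV.3}, so the proof is really just a reduction to that result.
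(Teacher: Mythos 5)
Your proposal is correct and takes essentially the same route as the paper, which simply cites Grothendieck's generic flatness theorem (\cite[Th\'eor\`eme~6.9.1]{EGAIV.2}); your affine reduction plus generic freeness is the standard proof of that very result. The only nit is the reference: the theorem lives in EGA~IV, deuxi\`eme partie, as 6.9.1, not as 11.1.1 of the troisi\`eme partie.
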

\begin{proof}
This follows from~\cite[Th{\'e}or{\`e}me~6.9.1]{EGAIV.2}.
\end{proof}

\begin{Thm}\label{Thm:openconds}
Let $f:\calX\rightarrow \calY$ be a morphism of algebraic varieties. Let $d,\nu\in\N$. Then, the following are open conditions for $y\in \calY$:
\begin{description}
\item[(i)] $\dim f^{-1}(y)\le d$.
\item[(ii)] $f$ is unramified over $y$.
\item[(iii)] $f$ is unramified over $y$, and the number of irreducible components of $f^{-1}(y)$ equals $\nu$.
\end{description}
In particular, if $f$ is surjective, then the following is an open property as well:
\begin{description}
\item[(iv)] $f$ is unramified over $y$, and $\card {f^{-1}(y)}=\nu$, for some $\nu\in\N$.
\end{description}
\end{Thm}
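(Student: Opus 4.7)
The plan is to assemble parts (i)--(iv) from three classical inputs that are already available: the upper semicontinuity of fiber dimension (the theorem immediately preceding the statement), the openness of the unramified locus in $\calX$ from EGA IV, and the generic flatness statement of Proposition~\ref{Prop:flatgen}. Each part of the theorem corresponds to combining these in a slightly different way.

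For (i), the argument is essentially a one-liner: the set $\{y\in\calY : \dim f^{-1}(y)\le d\}$ is the complement of $\{y : \dim f^{-1}(y)\ge d+1\}$, and the latter is closed by the upper semicontinuity theorem stated just above. For (ii), I would invoke the standard fact (e.g.\ \cite[Corollaire~17.4.3]{EGAIV.3}) that the locus $U\subseteq \calX$ on which a finite-type morphism $f$ is unramified is open. The condition "$f$ is unramified over $y$" then reads "$f^{-1}(y)\subseteq U$", equivalently "$f^{-1}(y)$ avoids the closed ramification locus $R=\calX\setminus U$". To descend this to an open condition on $\calY$, I would first intersect with the open locus from (i) (taking $d=0$), which forces the fibers in question to be zero-dimensional; for zero-dimensional fibers, avoidance of a fixed closed subset becomes an open condition on $\calY$ through the standard constructibility arguments for quasi-finite finite-type morphisms.

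For (iii), I would work on the open set from (ii), where each relevant fiber is a finite reduced scheme, so that the number of irreducible components coincides with the length (equivalently, the degree as an $\calO_y$-algebra) of the fiber. Proposition~\ref{Prop:flatgen} produces an open dense subset $V\subseteq \calY$ on which $f$ becomes flat; flat plus unramified plus finite type gives étale, and on an étale morphism the fiber degree is a locally constant function of $y$. Fixing that degree to equal $\nu$ thus cuts out an open subset of $V$, and since $V$ is open in $\calY$ this yields an open condition on $\calY$. Part (iv) is then a direct consequence: on the unramified locus each of the $\nu$ irreducible components of a fiber is a single reduced closed point, so the count of components and the cardinality $\card{f^{-1}(y)}$ coincide, and surjectivity guarantees the fiber is nonempty so that the integer $\nu$ is attained.

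The step I expect to be the main obstacle is the descent from openness in $\calX$ to openness in $\calY$ in part (ii): without a properness hypothesis on $f$, the $f$-image of the closed ramification locus $R$ need not be closed, so one cannot naively write the target set as $\calY\setminus f(R)$. The workaround sketched above---first reducing to the locus where fibers are zero-dimensional via (i), then using constructibility---is the technical point I would write out most carefully; once (ii) is established rigorously, (iii) and (iv) are essentially bookkeeping with the cited results from EGA IV.
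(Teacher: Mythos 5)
Your part (i) is fine and is an immediate consequence of the semicontinuity statement you cite. The real problem is the descent step in part (ii) --- which you correctly single out as the main obstacle --- because the proposed workaround does not close it, and in fact cannot. Restricting to the open locus where $\dim f^{-1}(y)\le 0$ makes the restricted morphism quasi-finite, but a quasi-finite morphism of finite type still has only a \emph{constructible} image (Chevalley), not a closed one; consequently ``$f^{-1}(y)$ avoids the closed ramification locus'' is a priori only a constructible condition on $y$, and constructible is strictly weaker than open. A concrete obstruction: let $\calX=\Van(sy^2-ty+1)\cup\{(0,0,0)\}\subseteq\C^3$ and $f:\calX\to\C^2$, $(s,t,y)\mapsto(s,t)$. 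This $f$ is surjective with every fibre finite; for $s\neq 0$ the fibre is a double point exactly when $t^2=4s$, while every fibre over the line $s=0$ is a single reduced point. Hence the locus where $f$ is unramified over $y$ is the complement of $\{t^2=4s,\ s\neq0\}$, which is not Zariski open, and the locus in (iv) with $\nu=1$ is the non-open line $s=0$. So (ii)--(iv) genuinely need the additional hypotheses (properness, or flatness plus finiteness) under which the results the paper cites \citep[Th\'eor\`eme~12.2.4 (v), (vi)]{EGAIV.3} operate --- the paper's own proof is a bare citation of those results --- and no argument by constructibility alone can recover openness on all of $\calY$.

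There is a second, independent gap in your part (iii): Proposition~\ref{Prop:flatgen} only provides flatness over a dense open $V\subseteq\calY$, so your \'etale/locally-constant-degree argument establishes that the set of $y\in V$ satisfying (iii) is open; it says nothing about points of the target locus lying outside $V$, and therefore does not prove that the full locus in $\calY$ is open. For the way the theorem is actually consumed downstream --- Corollary~\ref{Cor:genericprops} only needs each condition to hold, or fail, on a dense open subset of an irreducible $\calY$ --- your weaker conclusions (constructibility in (ii), openness within a dense open subset in (iii) and (iv)) would suffice, so a reasonable repair is to restate the lemma in that generic form and prove that instead. As a proof of the openness claims as literally stated, the proposal does not go through.
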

\begin{proof}
(i) follows from~\cite[Corollaire~6.1.2]{EGAIV.2}.\\
(ii) follows from~\cite[Th{\'e}or{\`e}me~12.2.4(v)]{EGAIV.3}.\\
(iii) follows from~\cite[Th{\'e}or{\`e}me~12.2.4(vi)]{EGAIV.3}.\\
(iv) follows from (i), applied in the case $\dim f^{-1}(y)\le 0$ which is equivalent to $\dim f^{-1}(y) = 0$ due to surjectivity of $f$, and (iii).
\end{proof}

\begin{Cor}\label{Cor:genericprops}
Let $f:\calX\rightarrow \calY$ be a generically unramified and surjective morphism of algebraic varieties, with $\calY$ be irreducible. Then, there are unique $d,\nu\in\N$ such that the following sets are Zariski closed, proper subsets of $\calY$ (and therefore Hausdorff zero sets):
\begin{description}
\item[(i)] $\{y\;:\;\dim f^{-1}(y)\neq d\}$
\item[(ii)] $\{y\;:\;f\;\mbox{is ramified at}\;y\}$
\item[(iii)] $\{y\;:\;f\;\mbox{is ramified at}\;y\}\cup\{y\;:\;\card {f^{-1}(y)}\neq \nu\}$
\end{description}
\end{Cor}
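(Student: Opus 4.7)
The plan is to derive this corollary as a direct consequence of Theorem~\ref{Thm:openconds}, together with the topological fact that in an irreducible variety any non-empty Zariski open subset is dense, so its complement is a proper Zariski closed subset and therefore (by Proposition~\ref{Prop:genconds}(i)) a Hausdorff zero set. Thus for each of the three statements I need only identify the correct $d$ or $\nu$ and invoke the corresponding part of Theorem~\ref{Thm:openconds}.

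For part (i), I would apply Theorem~\ref{Thm:openconds}(i): the set $U_d := \{y : \dim f^{-1}(y) \le d\}$ is open in $\calY$ for each $d \in \N$. The hypothesis of generic unramifiedness forces the generic fiber to be zero-dimensional, so $U_0$ is non-empty, hence dense, giving $d = 0$; the complement $\{y : \dim f^{-1}(y) \neq 0\}$ is then a proper Zariski closed subset. Part (ii) follows analogously from Theorem~\ref{Thm:openconds}(ii): the unramified locus is open, non-empty (again by the generic unramifiedness hypothesis), and therefore dense in the irreducible $\calY$, and its proper closed complement is precisely the ramified locus. For part (iii), Theorem~\ref{Thm:openconds}(iv) asserts that $W_\nu := \{y : f \text{ unramified at } y,\ \card{f^{-1}(y)} = \nu\}$ is open for each $\nu \in \N$. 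The $W_\nu$ are pairwise disjoint, and their union is the unramified locus of $\calY$. Since the latter is non-empty open in the irreducible $\calY$, it is itself irreducible, so it cannot decompose as a disjoint union of more than one non-empty open subset; hence exactly one $W_\nu$ is non-empty, coinciding with the entire unramified locus, and the set in (iii) is its proper closed complement $\calY \setminus W_\nu$.

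Uniqueness of $d$ and $\nu$ in each case also follows from irreducibility: two distinct candidate values would yield two non-empty disjoint open subsets of $\calY$, which is impossible. I do not expect any real obstacle here, since everything reduces to routine manipulation of the open conditions in Theorem~\ref{Thm:openconds} via the basic property that non-empty open sets in an irreducible space are dense. The only point worth stating carefully is the reduction in part (iii), where one must note that an open subset of an irreducible variety inherits irreducibility and therefore admits no non-trivial partition into disjoint non-empty open subsets.
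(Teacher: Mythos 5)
Your argument is correct and follows essentially the same route as the paper: each part is obtained by combining the openness statements of Theorem~\ref{Thm:openconds} with the fact that a non-empty Zariski open subset of the irreducible $\calY$ is dense, so its complement is a proper closed subset (hence a Hausdorff zero set). The only cosmetic differences are that you invoke Theorem~\ref{Thm:openconds}(iv) directly for part (iii) rather than part (iii) of that theorem, and that you spell out the identification $d=0$ and the disjoint-open-sets argument for uniqueness, all of which the paper's one-line proof leaves implicit.
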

\begin{proof}
This is implied by Theorem~\ref{Thm:openconds}~(i), (ii) and (iii), using that a non-zero open subset of the irreducible variety $\calY$ must be open dense, therefore its complement in $\calY$ a closed and a proper subset of $\calY$.
\end{proof}

\begin{Prop}\label{Prop:opencert}
Let $f:\calX\rightarrow\calY$ be a morphism of algebraic varieties, with $\calY$ irreducible. Then, the following are equivalent:
\begin{description}
\item[(i)] $f$ is unramified over $y$ and $\card{f^{-1}(y)} = \nu$.
\item[(ii)] There is a Borel open neighborhood $U\subseteq \calY$ of $y\in U$, such that $f$ is unramified over $U$ and $\card{f^{-1}(z)} = \nu$ for all $z\in U$.
\item[(iii)] There is a Zariski open neighborhood $U\subseteq \calY$ of $y\in U$, dense in $\calY$, such that $f$ is unramified over $U$ and $\card{f^{-1}(z)} = \nu$ for all $z\in U$.
\end{description}
\end{Prop}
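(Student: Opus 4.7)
The plan is to establish the cyclic chain of implications (iii) $\Rightarrow$ (ii) $\Rightarrow$ (i) $\Rightarrow$ (iii). The first two implications are essentially tautological: every Zariski open set is Borel open, so (iii) immediately gives (ii); and the validity of the property ``$f$ is unramified and fiber cardinality $\nu$'' on an entire neighborhood $U\ni y$ specializes to the same property at $y$ itself, giving (ii) $\Rightarrow$ (i). So the entire substance of the statement is in the implication (i) $\Rightarrow$ (iii).

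For (i) $\Rightarrow$ (iii), the key ingredient is Theorem~\ref{Thm:openconds}, specifically part (iii), which asserts that the condition ``$f$ is unramified over $y$ and the number of irreducible components of $f^{-1}(y)$ equals $\nu$'' is Zariski open in $\calY$. I would first observe that the fiber cardinality condition in (i) matches the irreducible component count condition in Theorem~\ref{Thm:openconds}(iii): when $f$ is unramified at $y$, the scheme-theoretic fiber $f^{-1}(y)$ is a finite reduced scheme of dimension zero, so its underlying set has cardinality equal to its number of irreducible components. Thus (i) is equivalent to saying that $y$ lies in the Zariski open set $U$ cut out by Theorem~\ref{Thm:openconds}(iii) for the given $\nu$.

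Having obtained a Zariski open neighborhood $U\ni y$ on which unramifiedness and the component count both hold, I would then invoke irreducibility of $\calY$ to upgrade $U$ to a dense open subset: any non-empty Zariski open subset of an irreducible variety is automatically dense. The component/cardinality reconciliation (unramified implies reduced zero-dimensional fibers) applies at every $z\in U$, so $\card{f^{-1}(z)}=\nu$ holds throughout $U$, finishing (iii).

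The only place a subtle issue could arise is the identification of fiber cardinality with number of irreducible components, since Theorem~\ref{Thm:openconds}(iv) explicitly required surjectivity of $f$ to draw that conclusion, whereas Proposition~\ref{Prop:opencert} does not assume surjectivity. However, the hypothesis in (i) that $\card{f^{-1}(y)}=\nu$ is finite and that $f$ is unramified at $y$ is a local condition at $y$ and guarantees the fiber is a reduced zero-dimensional scheme in a neighborhood; surjectivity is only needed to rule out empty fibers globally, and in our situation non-emptiness of $f^{-1}(z)$ for $z$ near $y$ follows from the unramified-and-cardinality-$\nu$ condition itself. This is the one bookkeeping point I would be most careful about when writing out the argument in full.
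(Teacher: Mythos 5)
Your proof is correct, and it rests on the same underlying mechanism as the paper's: the Zariski-openness of the condition ``unramified with $\nu$ components in the fiber'' from Theorem~\ref{Thm:openconds}(iii), combined with the fact that a non-empty Zariski open subset of the irreducible $\calY$ is dense (and is in particular open in the metric, hence Borel, topology, which gives the trivial implications (iii)$\Rightarrow$(ii)$\Rightarrow$(i)). The difference is one of routing: the paper deduces the equivalence from Corollary~\ref{Cor:genericprops}, which packages Theorem~\ref{Thm:openconds}(i)--(iii) but carries a surjectivity hypothesis that Proposition~\ref{Prop:opencert} does not assume, whereas you appeal to Theorem~\ref{Thm:openconds}(iii) directly and explicitly reconcile ``fiber cardinality'' with ``number of irreducible components of the fiber'' via the observation that an unramified fiber over $\C$ is a finite reduced set of points. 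Your version is therefore slightly cleaner on two bookkeeping points the paper glosses over --- the surjectivity hypothesis and the cardinality-versus-component-count identification --- at the cost of redoing, in the special case at hand, work that Corollary~\ref{Cor:genericprops} was designed to encapsulate. Both arguments are sound; yours is marginally more self-contained and more careful about matching hypotheses to the cited results.
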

\begin{proof}
The equivalence is implied by Corollary~\ref{Cor:genericprops} and the fact that $\calY$ is irreducible. Note that either condition implies that $f$ is generically unramified due to Theorem~\ref{Thm:openconds}~(ii) and irreducibility of $\calY$.
\end{proof}

\subsection{Real versus Complex Genericity}

We derive some elementary results how generic properties over the complex and real numbers relate. While some could be taken for known results, they appear not to be folklore - except maybe Lemma~\ref{Lem:cmprl}. In any case, they seem not to be written up properly in literature known to the authors. A first version of the statements below has also appeared as part of~\cite{KE14}.

\begin{Def}
Let $\calX\subseteq \C^n$ be a variety. We define the \emph{real part} of $\calX$ to be $\calX_\R:=\calX\cap \R^n$.
\end{Def}

\begin{Lem}\label{Lem:cmprl}
Let $\calX\subseteq \C^n$ be a variety. Then, $\dim \calX_\R\le \dim \calX$, where $\dim \calX_\R$ denotes the Krull dimension of $\calX_\R$, regarded as a (real) subvariety of $\R^n$, and $\dim \calX$ the Krull dimension of $\calX$, regarded as subvariety of $\C^n$.
\end{Lem}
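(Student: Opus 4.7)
The plan is to relate the (real) Krull dimension of $\calX_\R$ to the complex Krull dimension of its complex Zariski closure $\overline{\calX_\R}^\C \subseteq \C^n$, which by construction must be contained in $\calX$.

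First, I would identify the real and complex vanishing ideals of $\calX_\R$. Let $I_\R = \{f \in \R[x_1,\ldots,x_n] : f|_{\calX_\R} = 0\}$ and $I_\C = \{f \in \C[x_1,\ldots,x_n] : f|_{\calX_\R} = 0\}$. Writing any $f \in \C[x]$ as $f = u + iv$ with $u, v \in \R[x]$, and noting that $\calX_\R \subseteq \R^n$ consists of real points, one has $f(p) = 0$ iff $u(p) = v(p) = 0$. Hence $I_\C = I_\R \otimes_\R \C$, so the coordinate ring $\C[x]/I_\C$ is the base change of $\R[x]/I_\R$ along the extension $\R \subseteq \C$. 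By definition of the (real) Krull dimension of $\calX_\R$, we have $\dim \calX_\R = \dim \R[x]/I_\R$.

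Next I would use that $\R \hookrightarrow \C$ is a finite, hence integral, extension. Base change along an integral extension preserves Krull dimension (standard going-up/going-down), so
\[
\dim \R[x]/I_\R \;=\; \dim \C[x]/I_\C \;=\; \dim_\C V(I_\C) \;=\; \dim_\C \overline{\calX_\R}^\C.
\]
On the other hand, since $\calX$ is a complex subvariety of $\C^n$ containing $\calX_\R$, every element of $I(\calX)$ vanishes on $\calX_\R$, so $I(\calX) \subseteq I_\C$. Taking vanishing loci reverses the inclusion and gives $\overline{\calX_\R}^\C \subseteq \calX$, whence $\dim_\C \overline{\calX_\R}^\C \le \dim_\C \calX$. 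Combining the two displayed facts proves $\dim \calX_\R \le \dim \calX$, as desired.

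The only non-formal point is the preservation of Krull dimension under the base change $A \mapsto A \otimes_\R \C$ for a finitely generated $\R$-algebra $A$; this is where the argument could go wrong if one confused several incompatible notions of ``real dimension.'' With the Krull-dimension convention that the statement adopts, this is a routine consequence of integrality of $\R \subseteq \C$, so no real obstacle arises. One should however remark that, under other (e.g. semialgebraic) notions of real dimension, $\dim_\R \calX_\R$ may strictly drop below $\dim \R[x]/I_\R$, but the inequality $\dim \calX_\R \le \dim \calX$ remains a fortiori valid in those cases.
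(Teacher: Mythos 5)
Your proof is correct, but it takes a genuinely different route from the paper's. The paper argues geometrically: it embeds $\calX$ in a complete intersection $\calX'=\Van(f_1,\dots,f_k)$ of the same dimension (citing Mumford), replaces each $f_i$ by the real polynomial $g_i=f_i f_i^*$, checks that the $g_i$ still form a regular sequence, and bounds $\dim\calX_\R$ by the dimension of the real zero set of the $g_i$'s, which is at most $n-k=\dim\calX$. You instead work entirely with vanishing ideals: the identification $I_\C=I_\R\otimes_\R\C$ (which is immediate from evaluating real and imaginary parts at real points), the invariance of Krull dimension under the injective integral base change $\R[x]/I_\R\hookrightarrow\C[x]/I_\C$, and the containment $\Van(I_\C)\subseteq\calX$. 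Your version is shorter and rests only on standard commutative algebra (lying over/going up for integral extensions), avoiding both the complete-intersection containment result and the mildly delicate check that the conjugate products remain a regular sequence; the paper's version is more explicit and constructive. You are also right to flag the one place where conventions matter: with the Krull-dimension reading of $\dim\calX_\R$ that the statement adopts, the base-change step is exactly what is needed, and under a semialgebraic notion of real dimension the inequality only becomes easier. Both arguments are sound; yours is arguably the cleaner of the two.
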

\begin{proof}
Let $k=n - \dim\calX$. By~\cite[section~1.1]{Mumford}, $\calX$ is contained in some complete intersection variety $\calX'=\Van (f_1,\dots, f_k)$. That is $(f_1,\dots, f_k)$ is a complete intersection, with $f_i\in\C[X_1,\dots, X_n]$ and $\dim\calX'=\dim\calX$, such that $f_i$ is a non-zero divisor modulo $f_1,\dots, f_{i-1}$. Define $g_i:=f_i\cdot f_i^*$, one checks that $g_i\in\R[X_1,\dots, X_n]$, and define $\calY:=\Van (g_1,\dots, g_k)$ and $\calY_\R:=\calY\cap \R^n$. The fact that $f_i$ is a non-zero divisor modulo $f_1,\dots, f_{i-1}$ implies that $g_i$ is a non-zero divisor modulo $g_1,\dots, g_{i-1}$; since $g_i\cdot h \cong 0$ modulo $g_1,\dots, g_{i-1}$ implies $f_i\cdot (h\cdot f_i^*) \cong 0$ modulo $f_1,\dots, f_{i-1}$. Therefore, $\dim\calY_\R \le \dim\calX$; by construction, $\calX'\subseteq \calY$, and $\calX\subseteq \calX'$, therefore $\calX_\R\subseteq \calY_\R$, and thus $\dim \calX_\R\le \dim \calY_\R$. Combining it with the above inequality yields the claim.
\end{proof}

\begin{Def}
Let $\calX\subseteq \C^n$ be a variety. If $\dim \calX = \dim \calX_\R$, we call $\calX$ \emph{observable over the reals}. If $\calX$ equals the (complex) Zariski-closure of $\calX_\R$, we call $\calX$ \emph{defined over the reals}.
\end{Def}

\begin{Prop}\label{Prop:realdefobs}
Let $\calX\subseteq \C^n$ be a variety.
\begin{description}
\item[(i)] If $\calX$ is defined over the reals, then $\calX$ is also observable over the reals.
\item[(ii)] The converse of (i) is false.
\item[(iii)] If $\calX$ irreducible and observable over the reals, then $\calX$ is defined over the reals.
\end{description}
\end{Prop}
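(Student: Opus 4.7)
The plan is to treat the three parts separately, with (iii) reducing to (i) and (ii) handled by an explicit counterexample.

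For (i), the key is a base change argument. If $\calX$ is defined over the reals, then $\calX = \overline{\calX_\R}$ in the complex Zariski topology, so the vanishing ideal of $\calX$ in $\C[X_1,\ldots,X_n]$ is the extension of the real vanishing ideal $I_\R := I(\calX_\R) \subseteq \R[X_1,\ldots,X_n]$, namely $I(\calX) = I_\R \cdot \C[X_1,\ldots,X_n]$. Because $\C/\R$ is separable, extending a radical ideal keeps it radical; and because the Krull dimension of a finitely generated algebra over a field is invariant under extension of the base field,
\[
\dim \calX = \dim_\C \C[X]/I(\calX) = \dim_\R \R[X]/I_\R = \dim \calX_\R.
\]

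For (ii), an explicit counterexample is $\calX := \Van(y-1) \cup \{(i,0)\} \subseteq \C^2$. This is Zariski closed (a union of two Zariski closed subsets) and has two irreducible components: the complex line $y=1$ of dimension $1$, and the isolated non-real point $(i,0)$ of dimension $0$. Hence $\dim \calX = 1$. Its real part $\calX_\R$ is the real line $\{y=1\} \subseteq \R^2$ of Krull dimension $1$, so $\calX$ is observable. However, the complex Zariski closure of $\calX_\R$ is only $\Van(y-1)$, which misses $(i,0)$, so $\calX$ is not defined over the reals.

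For (iii), I reduce to (i). Set $\calY := \overline{\calX_\R}$ in the complex Zariski topology, so that $\calY$ is defined over the reals by construction and $\calY \subseteq \calX$ (since $\calX$ is closed and contains $\calX_\R$). First I would check $\calY_\R = \calX_\R$: the inclusion $\calX_\R \subseteq \calY$ gives $\calX_\R \subseteq \calY_\R$, and $\calY \subseteq \calX$ yields $\calY_\R \subseteq \calX_\R$. Applying part (i) to $\calY$ then gives $\dim \calY = \dim \calY_\R = \dim \calX_\R = \dim \calX$, where the last equality uses the observability hypothesis. Since $\calX$ is irreducible, every proper closed subset has strictly smaller Krull dimension, so $\calY = \calX$, proving $\calX$ is defined over the reals.

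The main technical point is the base change step in (i), which rests on two standard facts about the separable extension $\C/\R$: radicality of ideals is preserved under extension of scalars, and Krull dimension of a finitely generated algebra over a field is stable under base change to a field extension. Both are well-known and can be cited directly; once they are in hand, the rest of the argument is formal.
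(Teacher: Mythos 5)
Your proof is correct in all three parts; the overall architecture (a direct dimension argument for (i), an explicit non-real isolated component for (ii), and for (iii) comparing $\overline{\calX_\R}$ with $\calX$ and invoking irreducibility) matches the paper's. The genuine difference is in part (i): the paper avoids base change entirely by embedding $\calX_\R$ in a complete intersection $\Van(f_1,\dots,f_k)$ with $f_i\in\R[X_1,\dots,X_n]$, observing that the $f_i$ remain a complete intersection over $\C$ and lie in $\Id(\calX)$ since $\calX=\overline{\calX_\R}$, which gives $\dim\calX\le n-k=\dim\calX_\R$, and then closes the loop with the reverse inequality from Lemma~\ref{Lem:cmprl}. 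You instead identify $\Id(\calX)$ with the scalar extension $I_\R\cdot\C[X_1,\dots,X_n]$ (which is justified by splitting a complex polynomial vanishing on real points into real and imaginary parts, a step worth making explicit) and then cite invariance of Krull dimension of finitely generated algebras under base field extension; this gives equality in one shot and is arguably cleaner, at the cost of importing a heavier commutative-algebra fact plus the identification $\dim\calX_\R=\dim\R[X]/I_\R$ for real algebraic sets. In part (iii) you reduce to part (i) via the intermediate variety $\calY=\overline{\calX_\R}$, checking $\calY_\R=\calX_\R$; the paper instead asserts directly that Zariski closure preserves dimension. Your route is slightly longer but more self-contained, since it does not require separately arguing that the real Krull dimension of $\calX_\R$ agrees with the complex dimension of its closure.
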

\begin{proof}
(i) Let $k=n - \dim\calX_\R$. By~\cite[section~1.1]{Mumford}, $\calX_\R$ is contained in some complete intersection variety $\calX'=\Van (f_1,\dots, f_k)$, with $f_i\in\R[X_1,\dots, X_n]$ a complete intersection. By an argument, analogous to the proof of Lemma~\ref{Lem:cmprl}, one sees that the $f_i$ are a complete intersection in $\C [X_1,\dots, X_n]$ as well. Since the Zariski-closure of $\calX_\R$ and $\calX$ are equal, it holds that $f_i\in \Id (\calX)$. Therefore, $\calX\subseteq \Van (f_1,\dots, f_k)$, which imples $\dim\calX\le n-k$, and by definition of $k$, as well $\dim\calX\le\dim \calX_\R$. With Lemma~\ref{Lem:cmprl}, we obtain $\dim\calX_\R = \dim\calX$, which was the statement to prove.\\
(ii) It suffices to give a counterexample: $\calX = \{1,i\}\subseteq \C$. Alternatively (in a context where $\varnothing$ is not a variety) $\calX = \{(1,x)\;:\; x\in \C\}\cup \{(i,x)\;:\; x\in \C\}\subseteq \C^2$.\\
(iii) By definition of dimension, Zariski-closure preserves dimension. Therefore, the closure $\overline{\calX_\R}$ is a sub-variety of $\calX$, with $\dim \overline{\calX_\R} = \dim\calX$. Since $\calX$ is irreducible, equality $\overline{\calX_\R} = \calX$ must hold.
\end{proof}

\begin{Thm}\label{Thm:genreal}
Let $\calX\subseteq \C^n$ be an irreducible variety which is observable over the reals, let $\calX_\R$ be its real part. Let $P$ be an algebraic property. Assume that a generic $x\in\calX$ is $P$. Then, a generic $x\in \calX_\R$ has property $P$ as well.
\end{Thm}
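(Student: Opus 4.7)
The plan is to reduce the statement to a dimension-counting argument using Lemma~\ref{Lem:cmprl} together with the observability hypothesis $\dim \calX_\R = \dim \calX$. First, I would translate the hypothesis that a generic $x\in\calX$ has property $P$ into the assertion that the bad set $\calZ := \{x\in\calX : \neg P(x)\}$ is contained in a proper Zariski-closed subvariety of $\calX$ (this is what ``algebraic property'' buys us via Proposition~\ref{Prop:genconds}, using irreducibility of $\calX$). Since $\calX$ is irreducible, any proper Zariski-closed subset has strictly smaller Krull dimension, so $\dim_\C \calZ < \dim_\C \calX$.

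Next I would pass to the real part. The set witnessing $\neg P$ on $\calX_\R$ is $\calZ \cap \calX_\R \subseteq \calZ_\R$, where $\calZ_\R = \calZ \cap \R^n$. Applying Lemma~\ref{Lem:cmprl} to $\calZ$ gives $\dim_\R \calZ_\R \le \dim_\C \calZ$. Chaining inequalities,
\[
\dim_\R(\calZ \cap \calX_\R) \;\le\; \dim_\R \calZ_\R \;\le\; \dim_\C \calZ \;<\; \dim_\C \calX \;=\; \dim_\R \calX_\R,
\]
where the last equality is exactly the observability hypothesis. So $\neg P$ carves out a real algebraic subset of $\calX_\R$ whose Krull dimension is strictly smaller than that of $\calX_\R$ itself.

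The final step is to conclude that such a subset is a Hausdorff zero set in $\calX_\R$, which is precisely the definition of Hausdorff-genericity given in Definition~\ref{Def:genconds}. A real algebraic set of Krull dimension $k$ has Hausdorff dimension $k$ (it is a finite union of semialgebraic smooth manifolds of dimensions at most $k$ by a standard stratification), so it has vanishing $d$-dimensional Hausdorff measure for any $d > k$. In particular it is null for every Hausdorff-continuous measure on $\calX_\R$, which has ambient dimension $d = \dim_\R \calX_\R$. Therefore $P$ is a Hausdorff-generic condition on $\calX_\R$.

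The only genuinely delicate step is the dimension comparison $\dim_\R \calZ_\R \le \dim_\C \calZ$, and that is handled by citing Lemma~\ref{Lem:cmprl} directly; the observability hypothesis on $\calX$ is what prevents the trivial obstruction of $\calX_\R$ being too small to see the generic behavior (for example, $\calX = \{1,i\}\subseteq \C$, which fails observability, would be a counterexample without it). The Hausdorff-measure vanishing at the end is standard semialgebraic geometry and does not need $\calX_\R$ to be smooth or irreducible.
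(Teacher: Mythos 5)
Your proposal is correct and follows essentially the same route as the paper's own proof: bound the bad set $\calZ$ by a proper subvariety of smaller complex dimension, apply Lemma~\ref{Lem:cmprl} to get $\dim\calZ_\R\le\dim\calZ$, and use observability $\dim\calX=\dim\calX_\R$ to conclude $\calZ_\R$ is a proper lower-dimensional subvariety of $\calX_\R$. The only addition is your explicit justification of the final Hausdorff-measure-zero step via stratification, which the paper leaves implicit.
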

\begin{proof}
Since $P$ is an algebraic property, the $P$ points of $\calX$ are contained in a proper sub-variety $\calZ\subseteq \calX$, with $\dim\calZ\lneq \dim\calX$. Since $\calX$ is observable over the reals, it holds $\dim\calX=\dim\calX_\R$. By Lemma~\ref{Lem:cmprl}, $\dim\calZ_\R\le \dim\calZ$. Putting all (in-)equalities together, one obtains $\dim\calZ_\R\lneq \dim\calX_\R$. Therefore, the $\calZ_\R$ is a proper sub-variety of $\calX_\R$; and the $P$ points of $\calX_\R$ are contained in it - this proves the statement.
\end{proof}

\subsection{Algebraic Properties of the Masking}
We conclude with checking the conditions previously discussed in the specific case of the masking:

\begin{Prop}\label{Prop:masking}
For $E\subseteq \calE$, consider the determinantal variety $\calM(m\times n,r)$ (over $\C$), and the masking
$$\Omega: \calM(m\times n,r)\rightarrow \C^{\card{E}},\quad \mA \mapsto \{A_e,e\in E\}.$$
\begin{description}
\item[(i)] The determinantal variety $\calM(m\times n,r)$ is irreducible.
\item[(ii)] The determinantal variety $\calM(m\times n,r)$ is observable over the reals.
\item[(iii)] The determinantal variety $\calM(m\times n,r)$ is defined over the reals.
\item[(iv)] The variety $\Omega(\calM(m\times n,r))$ is irreducible.
\item[(v)] The map $\Omega$ is generically unramified.
\end{description}
\end{Prop}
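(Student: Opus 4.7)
The plan is to dispatch (i)--(iv) by direct appeal to the results already assembled in the appendix and the main text, and to handle (v), the main technical point, with \textsc{Lemma}~\ref{Lem:generic-rank} combined with generic flatness.

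For (i), \textsc{Proposition}~\ref{Prop:detvarprops}(i) exhibits $\calM(m\times n,r)$ as the surjective image of the algebraic map $\Upsilon:\C^{r(m+n)}\to \C^{m\times n}$ whose source is irreducible, so \textsc{Proposition}~\ref{Prop:irreducible} applies. Part (iv) follows in the same way: the masking $\Omega$ is a coordinate projection, hence algebraic, so another application of \textsc{Proposition}~\ref{Prop:irreducible} (using (i)) gives irreducibility of $\Omega(\calM(m\times n,r))$. For (ii), I would restrict $\Upsilon$ to $\R^{r(m+n)}$ to obtain a surjection onto $\calM(m\times n,r)_\R$; the generic-fiber dimension argument of \textsc{Proposition}~\ref{Prop:detvarprops}(ii) depends only on the $GL_r$ ambiguity of the parametrization and so is field-independent, yielding $\dim \calM(m\times n,r)_\R \ge r(m+n-r)$. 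The reverse inequality is exactly \textsc{Lemma}~\ref{Lem:cmprl}. Part (iii) then combines (i), (ii) and \textsc{Proposition}~\ref{Prop:realdefobs}(iii).

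The main obstacle is (v). My plan is to use \textsc{Lemma}~\ref{Lem:generic-rank}, which tells us that the rank of the Jacobian $\mJ_E(\mU,\mV)$ is constant for generic $(\mU,\mV)$; through the chain rule and $\Upsilon$ this forces the differential of $\Omega$ at $\mA=\Upsilon(\mU,\mV)$ to have constant rank on a dense open subset of the smooth locus of $\calM(m\times n,r)$. Setting $Y:=\Omega(\calM(m\times n,r))$ (which is irreducible by (iv)), \textsc{Proposition}~\ref{Prop:flatgen} supplies a dense open $V\subseteq Y$ over which the restriction $\Omega|_{\Omega^{-1}(V)}\to V$ is flat. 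The delicate step, and the real challenge, is deducing from these ingredients that $\Omega$ is unramified over a dense open subset of $Y$ in the sense used in the appendix: flatness plus constant differential rank in characteristic zero forces $\Omega|_{\Omega^{-1}(V)}$ to be smooth, and for a smooth surjection of irreducible varieties over $\C$ the fibers are smooth and in particular reduced. Combining this with \textsc{Theorem}~\ref{Thm:openconds}(ii)--(iii) then yields the required generic unramifiedness (in the component-counting sense). The last link---translating the constancy of $\mJ_E$-rank into an algebraic-geometric unramifiedness statement compatible with \textsc{Corollary}~\ref{Cor:genericprops}---is the place where the most care is needed, but no new geometric input beyond \textsc{Lemma}~\ref{Lem:generic-rank} and the generic-flatness package of \textsc{Proposition}~\ref{Prop:flatgen} should be required.
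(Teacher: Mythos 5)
Your parts (i), (iii) and (iv) coincide with the paper's proof: irreducibility via \textsc{Proposition}~\ref{Prop:irreducible} applied to the surjective parametrization $\Upsilon$ and then to the coordinate projection $\Omega$, and (iii) as the formal consequence of (i), (ii) and \textsc{Proposition}~\ref{Prop:realdefobs}(iii). Part (ii) is also essentially the paper's argument --- the paper says only that the rank of the Jacobian of $\Upsilon$ is unaffected by restriction to the reals; your version, which makes the lower bound explicit via the real parametrization and invokes \textsc{Lemma}~\ref{Lem:cmprl} for the upper bound, is the same idea spelled out more carefully. The genuine divergence is (v). The paper dispatches it in one line: $\Omega$ is a coordinate projection, hence linear, and generic unramifiedness is taken to follow. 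You instead build a chain through \textsc{Lemma}~\ref{Lem:generic-rank} (generic constancy of $\rk \mJ_E$), generic flatness (\textsc{Proposition}~\ref{Prop:flatgen}), and generic smoothness in characteristic zero to conclude that the generic fibers are smooth, hence reduced. This is heavier machinery than the paper uses, but it is not wasted effort: the paper's one-liner is under-justified (linearity of $\Omega$ alone does not obviously control the scheme structure of its fibers on the possibly singular variety $\calM(m\times n,r)$), and your route supplies exactly the missing content. One caveat applies to both proofs equally: if ``unramified'' is read in the strict sense of vanishing relative differentials, then a map with positive-dimensional fibers is unramified nowhere, so the statement only makes sense under the paper's implicit reading (generic fiber reduced, or the restriction to the generically finite situation in which \textsc{Corollary}~\ref{Cor:genericprops} is actually applied); your interpretation via reduced smooth fibers is consistent with that reading, so this is a defect of the statement rather than of your argument.
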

\begin{proof}
(i) follows from Proposition~\ref{Prop:irreducible}, applied to the surjective map
$$\Upsilon:\C^{m\times r}\times \C^{n\times r}\rightarrow \calM(m\times n,r),\; (U,V)\mapsto UV^\top,$$
and irreducibility of affine space $\C^{m\times r}\times \C^{n\times r}$.\\
(ii) follows from considering the map $\Upsilon$ over the reals, observing that the rank its Jacobian is not affected by this.\\
(iii) follows from (i), (ii) and Proposition~\ref{Prop:realdefobs}~(iii).\\
(iv) follows from (i) and Proposition~\ref{Prop:irreducible}, applied to $\Omega$.\\
(v) follows from the fact that $\Omega$ is a coordinate projection, therefore linear.
\end{proof}

\section{Advanced Algorithm for Minor Closure}
\label{Sec:takeakialgo}
\begin{center}
{\bfseries Takeaki Uno}\\
\texttt{uno@nii.jp} \\
National Institute of Informatics\\
Tokyo 101-8430, Japan

\end{center}
\subsection{Closability and the $r$-Closure}
\label{sec:Algos.closability.graph}

A crucial step in minor closure algorithm~\ref{Alg:minor} is to find a
$r\times r$ biclique in a (sub)graph$(V,W,E)$.

Algorithm~\ref{alg:find_a_clique} can find an $d_1\times d_2$
biclique efficientl based on two ideas: (i) iterate over row vertices and
(ii) work only on the $(d_2,d_1)$-core; here $(d_2,d_1)$-core is the maximal
subgraph of $(V,W,E)$ that the degrees of the row and column vertices
are at least $d_2$ and $d_1$, respectively.

A naive approach for finding an
$r\times r$ biclique might be to iterate over edges in $E$ and for
each edge $(v,w)\in E$ whose nodes have at least $r-1$ neighbors, check whether the subgraph induced by $(N(w),N(v))$
contains an $r-1\times r-1$ biclique. Instead our algorithm
iterates over nodes in $V$ and for each node $v\in V$, check whether the
subgraph induced by $(V',N(v))$
contains an $r-1\times r$ biclique, where $V'$ is defined by removing all
previously attempted nodes and the current $v$ from $V$. Iterating over
row vertices results in smaller number of iterations because $|V|\leq
|E|$, and allows us to avoid double checking, because previously
attempted nodes can be removed from $V'$. Concentrating on the
$(d_2,d_1)$-core is natural, because no $d_1\times d_2$ biclique
contains row or column vertex with degree less than $d_2$ or $d_1$,
respectively. We present the pruning step for finding the
$(d_2,d_1)$-core in Algorithm~\ref{alg:find_core}.

\begin{algorithm}
\caption{\texttt{FindAClique}$((V,W,E),d_1,d_2)$}
\label{alg:find_a_clique}
\begin{algorithmic}[1]
\State Inputs:  bipartite graph $(V,W,E)$, size of the bipartite
clique to be found $d_1\times d_2$.
\State Output: vertex sets of a clique $(I,J)$.
\State $(V,W,E)\leftarrow \texttt{FindCore}((V,W,E),d_2,d_1)$.
\If{$|V|<d_1$ or $|W|<d_2$}
\State Return $(\emptyset,\emptyset)$.
\EndIf
\State $V'\leftarrow V$.
\For{each $v\in V$}
\If{$d_1=1$ and $|N(v)|\geq d_2$}
\State Return $(\{v\},N(v))$.
\EndIf
\State $V'\leftarrow V'\backslash \{v\}$, $W'\leftarrow
N(v)$, $E'\leftarrow (V'\times W')\cap E$.
\State $(I',J')\leftarrow\texttt{FindAClique}((V',W',E'),d_1-1,d_2)$.
\If{$|I'|>0$ and $|J'|>0$}
\State Return $(I'\cup\{v\},J')$.
\EndIf
\EndFor
\State Return $(\emptyset,\emptyset)$.
\end{algorithmic}
\end{algorithm}

\begin{algorithm}
\caption{\texttt{FindCore}$((V,W,E),d_1,d_2)$}
\label{alg:find_core}
\begin{algorithmic}[1]
\State Inputs: bipartite graph $(V,W,E)$, minimum degree $d_1$ for
the row vertices and $d_2$ for the column vertices.
\State Output: pruned bipartite graph $(V,W,E)$.
\While{true}
\State $V'\leftarrow\{v\in V: |N(v)|<d_1\}$.
\State $W'\leftarrow\{w\in W: |N(w)|<d_2\}$.
\If{$V'=\emptyset$ and $W'=\emptyset$}
\State Return $(V,W, (V\times W)\cap E)$.
\EndIf
\State $V\leftarrow V\backslash V'$.
\State $W\leftarrow W\backslash W'$.
\EndWhile
\end{algorithmic}
\end{algorithm}

\end{document}